\newcommand{\mcs}{\mathcal{S}}
\newcommand{\mca}{\mathcal{A}}
\newcommand{\nn}{\nonumber}
\newcommand{\mE}{\mathbb{E}}
\newtheorem{assumption}{\textbf{Assumption}}
\newtheorem{corollary}{\textbf{Corollary}}
\newtheorem{Lemma}{\textbf{Lemma}}
\newtheorem{theorem}{\textbf{Theorem}}
\newtheorem{remark}{\textbf{Remark}}
\title{Finite-Time Error Bounds for Greedy-GQ}
\author{Yue Wang \quad Yi Zhou \quad Shaofeng Zou
	\thanks{ Yue Wang is with the Department of Electrical and Computer Engineering, University of Central Florida (email: yue.wang@ucf.edu); Yi Zhou is with the Department of Electrical and Computer Engineering, University of Utah (yi.zhou@utah.edu); Shaofeng Zou is with the Department of Electrical Engineering, University at Buffalo ( szou3@buffalo.edu).
}
}
\begin{document}

\maketitle
\begin{abstract}
    Greedy-GQ  with linear function approximation, originally proposed in \cite{maei2010toward}, is a value-based off-policy algorithm for optimal control in reinforcement learning, and it has a non-linear two timescale structure with non-convex objective function. This paper develops its tightest finite-time error bounds. We show that the Greedy-GQ algorithm converges as fast as $\mathcal{O}({1}/{\sqrt{T}})$ under the i.i.d.\ setting and $\mathcal{O}({\log T}/{\sqrt{T}})$ under the Markovian setting. We further design variant of the vanilla Greedy-GQ algorithm using the nested-loop approach, and show that its sample complexity is $\mathcal{O}({\log(1/\epsilon)\epsilon^{-2}})$, which matches with the one of the vanilla Greedy-GQ. Our finite-time error bounds match with the one of the stochastic gradient descent algorithm for general smooth non-convex optimization problems, despite of its additonal challenge in the two time-scale updates. Our finite-sample analysis provides theoretical guidance on choosing step-sizes for faster convergence in practice, and suggests the trade-off between the convergence rate and the quality of the obtained policy. Our techniques provide a general approach for finite-sample analysis of non-convex two timescale value-based reinforcement learning algorithms.
\end{abstract}

\section{Introduction}
Recent success of reinforcement learning (RL) in benchmark tasks suggests a potential revolutionary advance in practical applications and has dramatically boosted the interest in RL. 
However, common algorithms are highly data-inefficient, making impressive results only on simulated systems, where an infinite amount of data can be simulated. 
Such data-inefficiency limits the application of RL algorithms in many practical applications without a large amount of data. Even though, theoretical understanding of various RL algorithms' sample complexity is still very limited,  resulting in an over-reliance on empirical experiments without sufficient principles to guide future development. This motivates the study of finite-time error bounds for RL algorithms in this paper.

In the problem of RL \cite{sutton2018reinforcement}, an agent interacts with a stochastic environment by taking a sequence of actions according to a policy, and aims to maximize its expected accumulated reward.  The environment is modeled as a Markov decision process (MDP), which consists of a state space, an action space, and an action dependent transition kernel. When the state and action spaces are small, the tabular approach is usually used, which stores the action-values using a table. In this case, the optimal policy can be found \cite{sutton2018reinforcement,watkins1992q}, e.g., using Q-learning. However, in many applications, the state and action spaces can be very large or even continuous, and thus the tabular approach is not applicable anymore. Then, the approach of function approximation is usually used, where a parameterized family of functions, e.g., neural network, is used to approximate the action-value function and/or the policy. The problem is to estimate the parameter of the function, e.g., weights of a neural network, which has a much lower dimension than the total number of state-action pairs.

Despite the broad application and success of RL algorithms with function approximation in practice, 
RL methods, e.g., temporal difference (TD), Q-learning and SARSA,  when combined with function approximation approaches may not converge under off-policy training \cite{baird1995residual,gordon1996chattering}.
To address the non-convergence issue in off-policy training, a class of gradient temporal difference (GTD) learning algorithms were developed in \cite{maei2010toward,maei2011gradient,sutton2009fast,Sutton2009b}, including GTD, GTD2, TD with correction term (TDC), and Greedy Gradient Q-learning (Greedy-GQ). The basic idea is to construct squared objective functions, e.g., mean squared projected Bellman error, and then to perform gradient descent. To address the double sampling problem in gradient estimation, a weight doubling trick was proposed in \cite{sutton2009fast}, which leads to a two timescale update rule and biased gradient estimate. 
One great advantage of this class of algorithms is that they can be implemented in an online and incremental fashion, which is memory and computationally efficient.

The asymptotic convergence of these two timescale algorithms has been well studied under both i.i.d.\ and non-i.i.d.\ settings, e.g., \cite{sutton2009fast,Sutton2009b,maei2010toward,yu2017convergence,borkar2009stochastic,borkar2018concentration,karmakar2018two}. The finite-time error bounds of these algorithms are of great practical interest for algorithmic parameter tuning and design of new sample-efficient algorithms, which, however, remain unsolved until very recently \cite{dalal2018finite,wang2017finite,liu2015finite,gupta2019finite,xu2019two}.
Existing finite-sample analyses are only for the GTD, GTD2 and TDC algorithms, which are designed for policy evaluation. The finite-sample analysis for the Greedy-GQ algorithm, which is to directly learn an optimal control policy, is still not understood and will be the focus of this paper.

In this paper, we develop the finite-time error bound for the Greedy-GQ algorithm, and prove that it converges as fast as $\mathcal{O}({1}/{\sqrt{T}})$ under the i.i.d.\ setting and $\mathcal{O}\left({\log T}/{\sqrt{T}}\right)$ under the Markovian setting. We further propose one variant of the vanilla Greedy-GQ algorithm using the nested-loop approach that is commonly used in optimization and practical RL applications and show that its overall sample complexity is $\mathcal{O}(\epsilon^{-2})$ which is the same as the vanilla Greedy-GQ (up to a $\mathcal{O}(\log(1/\epsilon))$ factor).

\subsection{Main Challenges and Contributions}
The major challenge in the finite-time analysis of two timescale algorithms lies in developing a tight bound on the the tracking error, which measures how the fast timescale tracks its ideal limit (see \eqref{eq:bias_decompose}). Specifically, we develop a novel and more refined technique that bound the tracking error in terms of the gradient norm of the objective function. More importantly, our analysis is developed for the vanilla Greedy-GQ algorithm. The vanilla Greedy-GQ updates the parameters in an online and incremental fashion for each new sample, and is easier to implement and is more practical. Compared to the analysis for variants of the Greedy-GQ algorithm using the mini-batch approach (\cite{xu2020sample}), variance reduced approach (\cite{ma2020variance}), and the nested loop approach (designed in this paper),  the analysis of the vanilla Greedy-GQ algorithm is more challenging since the tracking error and the variance cannot be simply controlled by choosing a large batch size or a large number of inner loop iterations, which is typically $\mathcal O(1/\epsilon)$. Also note that by setting the batch size equal to 1, the finite-time error bounds in \cite{xu2020sample,ma2020variance} reduce to a non-zero constant, which does not vanish with iterations.

Different from existing studies on two timescale stochastic approximation algorithms, e.g., \cite{konda2004convergence,dalal2020tale,kaledin2020finite,gupta2019finite}, where the objective function is convex, and the updates are linear in the parameters, the objective function of the Greedy-GQ algorithm is non-convex, and is not always differentiable, and the updates of Greedy-GQ are non-linear. Therefore, convergence to global optimum cannot be guaranteed in general.  In this paper, we focus on the convergence to stationary points, i.e., the convergence of gradient norm to zero. The two timescale update nature of the algorithm introduces bias in the gradient estimate, which makes the analysis  significantly different from standard non-convex optimization analysis \cite{ghadimi2013stochastic} (one timescale with unbiased stochastic gradient). Since the updates are non-linear in the parameters, the approach in \cite{konda2004convergence,kaledin2020finite} that decouples the fast and slow timescale updates via a linear mapping is not directly applicable here. Furthermore, the global convergence of non-linear two timescale stochastic approximation established in \cite{mokkadem2006convergence,doan2021nonlinear} relies on the assumption that the algorithm converges to the global optimum, which however does not necessarily hold for the Greedy-GQ algorithm in this paper.

In this paper, we develop finite-time error bounds for both the i.i.d.\ setting and the Markovian setting. We develop novel analysis of the bias and the tracking error addressing the challenges from the Markovian noise and the two timescale structure. Our analysis implies the vanilla Greedy-GQ algorithm converges as fast as the existing invariant algorithms, without using a sample batch of size $\mathcal{O}(\epsilon^{-1})$. Our approach further provides a general framework for analyzing RL algorithms with non-convex objective functions and two timescale updates.

We further propose a variant of the vanilla Greedy-GQ algorithm using the nested-loop approach, which is commonly used in practice to reduce the variance. We prove that the variance and the tracking error can be arbitrarily small by choosing a proper number of inner loop iterations. We derive the finite-time error bound for the nested-loop Greedy-GQ algorithm, and characterize its sample complexity $\mathcal{O}(\log(1/\epsilon)\epsilon^{-2})$, which is the same as the vanilla Greedy-GQ algorithm (up to a factor of $\mathcal O(\log(1/\epsilon))$. We note that another variant of the vanilla Greedy-GQ algorithm using the mini-batch approach was studied in \cite{xu2020sample}, and its sample complexity is shown to be $\mathcal{O}(\epsilon^{-2})$. The two variants share the same order of sample complexity as the vanilla Greedy-GQ algorithm, and are much easier to analyze than the vanilla Greedy-GQ algorithm since the variance and bias in the gradient estimate can be made arbitrarily small by choosing a large batch size or a large number of inner loops, however at the price of updating only after every big batch of data of size $\mathcal O(1/\epsilon)$ and losing the simplicity of implementation in practice. Our analysis further illustrates that the sample batch based approaches can reduce the variance and stabilize the algorithm, but cannot not improve the convergence rate or the sample complexity.
 
\subsection{Related Work}
In this section, we discuss recent advances on the finite-time error bounds for various value-based RL algorithms with function approximation.
There are also recent studies on actor-critic algorithms and policy gradient algorithms, which are not the focus of this paper, and thus are not discussed here.

\textbf{TD methods.} For policy evaluation problems, the finite-time error bound for TD learning was developed in \cite{srikant2019,lakshminarayanan2018linear,bhandari2018finite,Dalal2018a,sun2020finite}. For optimal control problems, the finite-time error bounds for  SARSA with linear function approximation were developed in \cite{zou2019finite}. The finite-time error bounds for TD and Q-learning with neural function approximation were studied in \cite{cai2019neural,xu2020finite}. These methods are only applicable to on-policy setting or require additional conditions to guarantee convergence. These algorithms are one timescale,  and thus their analyses are fundamentally different from ours.

\textbf{Linear two timescale methods.} The asymptotic convergence rate of general linear two timescale SA algorithms was first developed in \cite{konda2004convergence}. Recently, finite-time error bounds for various linear two timescale RL algorithms, e.g., GTD(0), GTD, and TDC, were studied, e.g., \cite{xu2019two,dalal2020tale,gupta2019finite,kaledin2020finite,ma2020variance}. In these studies, the parameters are updated as linear functions of their previous values. Moreover, the objective function is convex, and thus the convergence to the global optimum can be established. For the Greedy-GQ algorithm, the updates are non-linear, and the objective function is non-convex. Thus, new techniques are needed to develop the finite-time error bounds.

\textbf{Non-linear two timescale methods.}  The asymptotic convergence rate and finite-time error bounds for general nonlinear two-time scale SA were studied in \cite{mokkadem2006convergence,doan2021nonlinear,bhatnagar2009convergent,xu2020sample}. In \cite{mokkadem2006convergence,doan2021nonlinear}, it is assumed that the algorithm converges to the global optimum. However, for Greedy-GQ, the objective function is non-convex, and the convergence to the global optimum cannot be guaranteed. In \cite{bhatnagar2009convergent,xu2020sample}, the GTD method with non-linear function approximation was proposed and investigated, where the asymptotic convergence was established in \cite{bhatnagar2009convergent}, and the sample complexity of the mini-batch variant was established in \cite{xu2020sample}. 
{Among the previous analyses of non-linear two-timescale algorithms, the most relevant works are \cite{ma2021greedy,xu2020sample}, where two variants of the vanilla Greedy-GQ algorithm are introduced: mini-batch Greedy-GQ and variance-reduced Greedy-GQ. Our results differ from theirs in two key aspects: the updating methodology of the algorithm and the technique employed for sample complexity analysis. Specifically, both algorithms generate a batch of samples of size $\mathcal{O}(\epsilon^{-1})$ before updating parameters at each time step to control the tracking error and obtain their results, whereas we only generate a single sample, hugely reducing memory and computation costs. Moreover, we develop a novel technique for bounding the tracking error without using batch size but still obtain a matching sample complexity, except for a negligible $\log\epsilon^{-1}$ term. It is also worth mentioning that our results cannot be obtained by directly setting the sample batch size to $1$ in the outcomes of \cite{ma2021greedy,xu2020sample}, which would result in a constant bound. Comparisons with these two works further highlight our contribution: a novel technique that enables us to attain the tightest bound for an online-updating algorithm without introducing a large-size sample batch.} 


\color{black}
\section{Preliminaries}

\subsection{Markov Decision Process}
A discounted Markov decision process(MDP) consists of a tuple $(\mathcal{S},\mathcal{A},  \mathsf P, r, \gamma)$, where $\mathcal{S}$ and $\mathcal{A}$ are the state and action spaces, $\mathsf P$ is the transition kernel, $r$ is the reward function, and $\gamma\in [0,1]$ is the discount factor. Specifically, at each time $t$, an agent takes an action $a_t\in\mathcal{A}$ at state $s_t\in\mathcal{S}$. The environment then transits to the next state $s_{t+1}$ with probability $\mathsf P(s_{t+1}\vert s_t,a_t)$, and the agent receives reward given by $r(s_t,a_t,s_{t+1})$. 
A stationary policy $\pi$ maps a state $s \in \mathcal{S}$ to a probability distribution $\pi(\cdot\vert s)$ over $\mathcal{A}$, which does not depend on time $t$. For a given policy $\pi$,  we define its value function for any initial state $s\in \mathcal{S}$ as
$V^\pi\left(s\right)=\mathbb{E}\left[\sum_{t=0}^{\infty}\gamma^t   r(S_t,A_t,S_{t+1})\vert S_0=s\right],$
and the state-action value function (i.e., the $Q$-function) for any $(s,a)\in\mathcal{S}\times\mathcal{A}$ as  
$
	Q^\pi(s,a)=\mE_{S'\sim \mathsf P(\cdot\vert s,a)}\left[r(s,a,S')+\gamma V^\pi(S')\right].
$

\subsection{Linear Function Approximation}
In practice, the state and action spaces can be extremely large, which makes the tabular approach intractable due to high memory and computational cost. The approach of function approximation is widely used to address this issue, where the Q-function is approximated using a family of parameterized functions. In this paper, We focus on linear function approximation. Specifically,  
let $\left\{ \phi^{(i)}: \mcs\times\mca\rightarrow \mathbb R,\, i=1,\ldots,N \right\}$ be a set of $N$ fixed base functions, where $N \ll \vert  \mcs\vert  \times\vert  \mca\vert  $. In particular, we approximate the Q-function using a linear combination of $\phi^{(i)}$'s:  
\begin{flalign}
Q_\theta(s,a)=\sum_{i=1}^N \theta(i)\phi^{(i)}_{s,a}=\phi_{s,a}^\top \theta,
\end{flalign}
where $\theta \in \mathbb{R}^N$ is the weight vector. Denote by $\mathcal Q$  the family of linear combinations of $\phi^{(i)}$'s: $\mathcal Q= \left\{Q_{\theta}:\theta\in\mathbb{R}^N \right\}$.
The goal is to find a $Q_\theta\in\mathcal Q$ with a compact representation in $\theta$ to approximate the optimal action-value function $Q^*$.

\subsection{Greedy-GQ Algorithm}
In this subsection,  we review the design of the Greedy-GQ algorithm,  which was originally proposed in \cite{maei2010toward} to solve the  optimal control problem in RL under off-policy training.

Greedy-GQ algorithm is to minimize the mean
squared projected Bellman error (MSPBE):
\begin{flalign}\label{eq:objective}
J(\theta)\triangleq\Vert \mathbf \Pi\mathbf T^{\pi_{\theta}}Q_{\theta}-Q_{\theta}\Vert ^2_{\mu},
\end{flalign}
where $\mu$ is the stationary distribution induced by the behavior policy $\pi_b$, $\pi_\theta$ is a policy derived from $Q_\theta$, e.g., softmax w.r.t. $Q_\theta$, and $\Vert Q(\cdot,\cdot)\Vert _\mu\triangleq\int_{s\in\mcs,a\in\mca}d\mu_{s,a}Q(s,a)$, $\mathbf \Pi$ is a projection operator $\mathbf \Pi \hat Q=\arg\min_{Q\in \mathcal Q}\Vert Q-\hat Q\Vert _\mu$, and $\mathbf T^{\pi_{\theta}}$ is the Bellman operator w.r.t. policy $\pi_\theta$.

Two sampling settings, i.i.d.\ and Markovian, are considered, where under the i.i.d.\ setting, i.i.d.\ tuples $(s_t,a_t,s_t')$ generated from the stationary distribution $\mu$ induced by the behavior policy $\pi_b$ are obtained, and under the Markovian setting, a single sample trajectory  $(s_t,a_t,s_{t+1})$ is obtained by following the behavior policy. In this paper, the proof and  discussion will mainly focus on the Markovian setting. We will also present the results for the i.i.d.\ setting, whose proof will be omitted and can be easily obtained from proof for the Markovian setting.

Let $\delta_{s,a,s'}(\theta)=r({s,a,s'})+\gamma\Bar{V}_{s'}(\theta)-\theta^\top  \phi_{s,a}$ be the TD error, where $\Bar{V}_{s'}(\theta)=\sum_{a'} \pi_{\theta}(a'\vert  s')\theta^\top  \phi_{s',a'}$. Then $J\left (\theta\right ) =\mathbb{E}_\mu\left [\delta_{S,A,S'}\left (\theta\right )\phi_{S,A}\right ]^\top
 C^{-1}\mathbb{E}_\mu\left [\delta_{S,A,S'}\left (\theta\right )\phi_{S,A}\right ]$, where $C=\mathbb{E}_{\mu}[\phi_{S,A}\phi_{S,A}^\top]$. Let  $\hat{\phi}_{s'}(\theta)=\nabla \Bar{V}_{s'}(\theta)$, then the gradient of $\frac{J(\theta)}{2}$ can be computed as follows: 
\begin{align}\label{eq:5}
    -\mathbb{E}_{\mu}[\delta_{S,A,S'}(\theta)\phi_{S,A}]+\gamma\mathbb{E}_{\mu}[\hat{\phi}_{S'}(\theta)\phi_{S,A}^\top  ]\omega^*(\theta),
\end{align}
where $\omega^*(\theta)=C^{-1} \mathbb{E}_{\mu}[\delta_{S,A,S'}({\theta})\phi_{S,A}].$

To solve the double-sampling issue when estimating the term $\mathbb{E}_{\mu}[\hat{\phi}_{S'}(\theta)\phi_{S,A}^\top  ]\omega^*(\theta)$, which involves the product of two expectations, the weight doubling trick proposed in \cite{sutton2009fast} was used to construct the following two-time scale update of Greedy-GQ:
\begin{align}
    &\theta_{t+1}=\theta_t+\alpha(\delta_{t+1}(\theta_t)\phi_t-\gamma(\omega_t^\top  \phi_t)\hat{\phi}_{t+1}(\theta_t)),\label{eq:thetaupdate}\\
    &\omega_{t+1}=\omega_t+\beta(\delta_{t+1}(\theta_t)-\phi_t^\top  \omega_t)\phi_t,\label{eq:omegaupdate}
\end{align}
where we denote $\delta_{s_t,a_t,s_{t+1}}(\theta)$ by $\delta_{t+1}(\theta)$, $\hat{\phi}_{s_{t+1}}(\theta)$ by $\hat{\phi}_{t+1}(\theta)$ and $\phi_{s_t,a_t}$ by $\phi_t$ for simplicity, and $\alpha$ and $\beta$ are the step-sizes. We also refer the readers to \cite{maei2010toward} for more details of the algorithm construction. 


\begin{algorithm}[tb]
\caption{Greedy-GQ \cite{maei2010toward}}
\label{alg:1}
\mbox{\textbf{Initialization}: $T$, $\theta_0$, $\omega_0$, $s_0$, $\phi^{(i)}$, for $i=1,2,...,N$}
\begin{algorithmic}
\STATE {Choose $W\sim \text{Uniform}(0,1,...,T-1)$}
\FOR {$t=0,1,2,...,W-1$}
		\STATE {Choose $a_t$ according to $\pi_b(\cdot\vert  s_t)$}
		\STATE Observe $s_{t+1}$ and $r_{t}$
		\STATE $\Bar{V}_{s_{t+1}}(\theta_{t}) \leftarrow \sum_{a'\in \mathcal{A}} \pi_{\theta_{t}}(a' \vert  s_{t+1})\theta_{t}^\top  \phi_{s_{t+1},a'}$
		\STATE $\delta_{t+1}(\theta_{t})\leftarrow r_{t}+\gamma\Bar{V}_{s_{t+1}}(\theta_{t})-\theta_{t}^\top  \phi_{t} $
		\STATE $\hat{\phi}_{t+1}(\theta_{t})\leftarrow$  $\nabla \Bar{V}_{s_{t+1}}(\theta_{t})$
		\STATE $\theta_{t+1} \leftarrow  \theta_{t}+\alpha(\delta_{t+1}(\theta_{t})\phi_{t}-\gamma(\omega_{t}^\top  \phi_{t})\hat{\phi}_{t+1}(\theta_{t}))$
		\STATE $\omega_{t+1} \leftarrow \omega_{t}+\beta(\delta_{t+1}(\theta_{t})-\phi_{t}^\top  \omega_{t})\phi_{t}$
		\ENDFOR
\end{algorithmic}
\textbf{Output}: $\theta_W$
\end{algorithm}

\section{Finite-Time Error Bound for Greedy-GQ}
In this section, we present our results of tight finite-time error bounds for the vanilla Greedy-GQ algorithm under both the
i.i.d.\ and Markovian settings. 

\subsection{Technical Assumptions}
We adopt the following standard technical assumptions.
\begin{assumption}[Problem solvability]\label{assump:C}
$C$ is non-singular and its smallest eigenvalue is denoted by $\lambda$.
\end{assumption} 
Assumption \ref{assump:C} is a commonly used assumption in the literature, e.g., \cite{srikant2019,xu2019two,bhandari2018finite} to guarantee the problem is solvable.
\begin{assumption}[Bounded feature]\label{assump:boundedfeature}
  $\Vert \phi_{s,a}\Vert _2\leq 1, \forall (s,a)\in\mcs\times\mca$.
\end{assumption}
Assumption \ref{assump:boundedfeature} can be easily guaranteed by normalizing the features.

In this paper, we focus on policies that are smooth. Specifically,  $\pi_{\theta}(a\vert  s)$ and $\nabla \pi_{\theta} (a\vert  s)$ are Lipschitz functions of $\theta$, for any $(s,a)\in \mcs\times\mca$. 
\begin{assumption}[Smooth Policy]\label{assump:policy}
 The policy $\pi_\theta(a\vert  s)$ is $k_1$-Lipschitz and $k_2$-smooth, i.e., for any $(s,a) \in \mcs\times\mca$, $\Vert \nabla \pi_{\theta}(a\vert  s)\Vert \leq k_1, \forall \theta$, and $\Vert \nabla\pi_{\theta_1}(a\vert  s)-\nabla\pi_{\theta_2}(a\vert  s)\Vert  \leq k_2 \Vert  \theta_1-\theta_2\Vert , \forall \theta_1,\theta_2$.
\end{assumption}

To justify the feasibility of Assumption \ref{assump:policy} in practice, in the following, we provide an example of the softmax policy, and show that it is Lipschitz and smooth in $\theta$.

Consider the softmax operator, where for any $(a,s)\in \mca\times\mcs$ and $\theta\in \mathbb R^N$,
\begin{align}\label{eq:softmax}
	\pi_{\theta}(a\vert  s)=\frac{e^{\sigma {\theta}^\top \phi_{s,a}}}{\sum_{a' \in \mathcal{A}}e^{\sigma {\theta}^\top  \phi_{s,a'}}},
\end{align} 
for some $\sigma>0$. 
\begin{Lemma}\label{lemma:softmax_smooth}
	The softmax policy $\pi_{\theta}(a\vert  s)$ is $2\sigma$-Lipschitz and $8\sigma^2$-smooth, i.e., for any $(s,a)\in\mcs\times\mca$, and for any $\theta_1,\theta_2\in\mathbb R^N$, 
	\begin{align}
		\vert  \pi_{\theta_1}(a\vert  s)-\pi_{\theta_2}(a\vert  s)\vert   &\leq 2\sigma \Vert \theta_1-\theta_2 \Vert ,\\
		\Vert \nabla\pi_{\theta_1}(a\vert  s)-\nabla \pi_{\theta_2}(a\vert  s)  \Vert &\leq 8\sigma^2 \Vert \theta_1-\theta_2 \Vert .
	\end{align}
\end{Lemma}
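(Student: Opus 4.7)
The plan is a direct computation: produce the gradient and Hessian of $\pi_\theta(a|s)$ with respect to $\theta$ in closed form, bound their norms using Assumption \ref{assump:boundedfeature}, and then convert these pointwise bounds into the Lipschitz and smoothness estimates via the mean value theorem.

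First, I would compute the gradient. Differentiating the softmax definition \eqref{eq:softmax} and writing $\bar\phi_\theta(s) = \sum_{a'\in\mca}\pi_\theta(a'|s)\phi_{s,a'}$ yields
\begin{align}
\nabla_\theta \pi_\theta(a|s) = \sigma\,\pi_\theta(a|s)\bigl(\phi_{s,a}-\bar\phi_\theta(s)\bigr). \nonumber
\end{align}
Since $\pi_\theta(a|s)\in[0,1]$, $\|\phi_{s,a}\|\le 1$ by Assumption \ref{assump:boundedfeature}, and $\|\bar\phi_\theta(s)\|\le \sum_{a'}\pi_\theta(a'|s)\|\phi_{s,a'}\|\le 1$, the triangle inequality gives $\|\nabla_\theta \pi_\theta(a|s)\|\le 2\sigma$. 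The first claim then follows from the mean value theorem applied to the scalar function $\theta\mapsto \pi_\theta(a|s)$.

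Next, for smoothness, I would compute the Hessian. Differentiating the gradient formula again and using that $\sum_{a'}\pi_\theta(a'|s)\bigl(\phi_{s,a'}-\bar\phi_\theta(s)\bigr)^\top=0$ (so an extra cross term vanishes), one obtains
\begin{align}
\nabla_\theta^2\pi_\theta(a|s)=\sigma^2\pi_\theta(a|s)\Bigl[\bigl(\phi_{s,a}-\bar\phi_\theta(s)\bigr)\bigl(\phi_{s,a}-\bar\phi_\theta(s)\bigr)^\top-\sum_{a'\in\mca}\pi_\theta(a'|s)\bigl(\phi_{s,a'}-\bar\phi_\theta(s)\bigr)\bigl(\phi_{s,a'}-\bar\phi_\theta(s)\bigr)^\top\Bigr].\nonumber
\end{align}
Each rank-one term has operator norm bounded by $\|\phi_{s,a'}-\bar\phi_\theta(s)\|^2\le 4$, and the weighted sum is a convex combination of such terms, so the bracket has operator norm at most $4+4=8$. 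Combined with $\pi_\theta(a|s)\le 1$, this gives $\|\nabla_\theta^2\pi_\theta(a|s)\|\le 8\sigma^2$. Another application of the mean value theorem, now to the vector-valued map $\theta\mapsto \nabla_\theta\pi_\theta(a|s)$, yields the $8\sigma^2$-smoothness bound.

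The only slightly delicate step is the Hessian computation: one has to differentiate the product $\pi_\theta(a|s)\bigl(\phi_{s,a}-\bar\phi_\theta(s)\bigr)$, track the Jacobian of $\bar\phi_\theta(s)=\sum_{a'}\pi_\theta(a'|s)\phi_{s,a'}$, and verify that after using $\sum_{a'}\pi_\theta(a'|s)(\phi_{s,a'}-\bar\phi_\theta(s))=0$ the expression collapses to the symmetric centered form above, from which the constant $8$ (rather than a larger value) falls out cleanly. Everything else is bookkeeping with the triangle inequality and Assumption \ref{assump:boundedfeature}.
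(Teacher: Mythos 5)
Your proof is correct. The paper itself does not include a proof of Lemma \ref{lemma:softmax_smooth} (it is stated without proof and its constants are simply used downstream), so there is no in-paper argument to compare against; your direct computation of $\nabla_\theta\pi_\theta(a|s)=\sigma\pi_\theta(a|s)(\phi_{s,a}-\bar\phi_\theta(s))$ and of the centered-covariance form of the Hessian is the standard route, the cancellation via $\sum_{a'}\pi_\theta(a'|s)(\phi_{s,a'}-\bar\phi_\theta(s))=0$ is verified correctly, and the bounds $\|\nabla_\theta\pi_\theta\|\le 2\sigma$ and $\|\nabla_\theta^2\pi_\theta\|\le 8\sigma^2$ (via the integral form of the mean value theorem for the vector-valued case) give exactly the claimed constants. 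As a minor aside, since both matrices in your bracket are positive semidefinite with operator norm at most $4$, their difference actually has operator norm at most $4$, so the smoothness constant could be sharpened to $4\sigma^2$; the triangle-inequality bound of $8$ is what matches the lemma as stated.
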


 The following assumption is  needed for the analysis under the Markovian setting, and is a widely adopted assumption for analysis with Markovian samples, e.g., \cite{bhandari2018finite,zou2019finite,xu2019two,srikant2019,cai2019neural}.
\begin{assumption}[Geometric uniform ergodicity]\label{ass:1}
 There exist some constants $m>0$ and $\rho \in (0,1)$ such that 
\begin{align}
    \sup_{s\in\mathcal{S}} d_{TV}(\mathbb{P}(s_t \vert  s_0=s), \mu) \leq m\rho^t ,
\end{align}
for any $t>0$, where $d_{TV}$ is the total-variation distance between the probability measures.
\end{assumption}

\subsection{Finite-time Error Bound and Sample Complexity}
The objective function of Greedy-GQ in \eqref{eq:objective} is non-convex, hence it is not guaranteed to converge to the global optimum. Instead, we consider the convergence to stationary points, namely, we study the rate of the gradient norm converging to zero \cite{ghadimi2013stochastic}. Furthermore, motivated by the randomized stochastic gradient method in \cite{ghadimi2013stochastic}, which is designed to analyze non-convex optimization problems, in this paper, we also consider a randomized version of the Greedy-GQ algorithm in Algorithm \ref{alg:1}. Specifically, let $W$  be an independent random variable with a uniform distribution over $\left\{ 0,1,...,T-1\right\}$. We then run the Greedy-GQ algorithm for $W$ steps. The final output is $\theta_W$.

In the following theorem, we provide the finite-time error bound for $\mathbb{E}[\Vert \nabla  J(\theta_W)\Vert ^2]$.
\begin{theorem}\label{thm:main}
 Consider the following step-sizes: $\beta=\mathcal{O}\left(\frac{1}{T^b}\right)$, and $\alpha=\mathcal{O}\left(\frac{1}{T^a}\right)$, where $\frac{1}{2}\leq a\leq 1$ and $0<b\leq a$. Then, 
(a) under the i.i.d.\ setting, 
\begin{align}
    \mathbb{E}[\Vert \nabla  J(\theta_W)\Vert ^2]\leq      \mathcal{O}\Bigg(\frac{1}{T^{1-a}}+\frac{1}{T^{1-b}}+\frac{1}{T^b}\Bigg);
\end{align}
(b) and under the Markovian setting,
\begin{align}
    \mathbb{E}[\Vert \nabla  J(\theta_W)\Vert ^2]\leq \mathcal{O}\left(\frac{\log T}{T^{1-a}}+\frac{1}{T^{1-b}} +\frac{\log T}{T^b}\right).
\end{align}
\end{theorem}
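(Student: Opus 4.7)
The plan is to combine the standard non-convex descent lemma for $L$-smooth objectives with a refined analysis of the two-timescale tracking error. First I would verify, using Assumptions \ref{assump:boundedfeature} and \ref{assump:policy}, that $J(\theta)$ is $L$-smooth and that $\omega^*(\theta)=C^{-1}\mE_\mu[\delta_{S,A,S'}(\theta)\phi_{S,A}]$ is Lipschitz in $\theta$ (both facts are inherited from the Lipschitz/smoothness of $\bar V_{s'}(\theta)$ via Lemma \ref{lemma:softmax_smooth}-type arguments and the boundedness of features). Writing the $\theta$-update \eqref{eq:thetaupdate} as $\theta_{t+1}=\theta_t-\alpha G_t$ with $G_t:=-\delta_{t+1}(\theta_t)\phi_t+\gamma(\omega_t^\top\phi_t)\hat\phi_{t+1}(\theta_t)$, the smoothness inequality gives
\[
J(\theta_{t+1})\leq J(\theta_t)-\alpha\langle \nabla J(\theta_t),G_t\rangle+\frac{L\alpha^2}{2}\|G_t\|^2.
\]
I would then decompose $G_t=\tfrac12\nabla J(\theta_t)+B_t+N_t$, where $B_t$ is the \emph{tracking bias} proportional to $z_t:=\omega_t-\omega^*(\theta_t)$ (arising from replacing $\omega_t$ by $\omega^*(\theta_t)$), and $N_t$ is the \emph{sampling fluctuation} around the expectation under the stationary distribution $\mu$.

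Second, I would bound the two error pieces. Boundedness of rewards and features, together with a uniform bound on $\|\omega_t\|$ and $\|\theta_t\|$ (established by induction using Assumption \ref{assump:C}), yields $\|G_t\|=\mathcal O(1)$ and $\|B_t\|\leq c\|z_t\|$. Under the i.i.d.\ setting $\mE[N_t|\mathcal F_t]=0$ and $\mE[\|N_t\|^2]=\mathcal O(1)$. Under the Markovian setting the cross term $\langle \nabla J(\theta_t),N_t\rangle$ is no longer mean-zero, and I would handle it by the standard coupling/auxiliary-Markov-chain argument: shift back $\tau=\Theta(\log T)$ steps, use Assumption \ref{ass:1} to bound the TV distance by $m\rho^\tau\leq 1/T$, and absorb the $\theta_t-\theta_{t-\tau}$ slack via the Lipschitz properties, which contributes the extra $\log T$ factor.

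The main obstacle is controlling $\mE[\|z_t\|^2]$ in a way that does \emph{not} reintroduce a constant-order term into the descent inequality. Using the $\omega$-update \eqref{eq:omegaupdate} and the identity
\[
z_{t+1}=z_t+\beta H_t-\bigl(\omega^*(\theta_{t+1})-\omega^*(\theta_t)\bigr),
\]
the fast-timescale drift $\mE[H_t|\mathcal F_t]\approx -C\,z_t$ produces a contraction $-2\beta\lambda\|z_t\|^2$ by Assumption \ref{assump:C}, while the drift of $\omega^*$ is bounded by $\mathcal O(\alpha\|G_t\|)=\mathcal O(\alpha(\|\nabla J(\theta_t)\|+\|z_t\|+\|N_t\|))$. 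The refinement I would carry out here is to apply Young's inequality to the cross term $\langle z_t,\omega^*(\theta_{t+1})-\omega^*(\theta_t)\rangle$ so that the $\|\nabla J(\theta_t)\|^2$ piece comes out with a coefficient of order $\alpha^2/\beta$ (which will eventually be small), rather than the coarser bound $\|\nabla J(\theta_t)\|^2+\|z_t\|^2$. Telescoping, this gives
\[
\sum_{t=0}^{T-1}\mE[\|z_t\|^2]\leq \mathcal O\!\left(\frac{1}{\beta}+\frac{\alpha^2 T}{\beta^2}+\frac{\alpha^2}{\beta^2}\sum_{t=0}^{T-1}\mE[\|\nabla J(\theta_t)\|^2]\right),
\]
with an extra $\log T$ factor in the Markovian case from the Markovian-bias treatment above.

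Finally, I would substitute the tracking-error bound back into the telescoped descent inequality. Summing over $t=0,\dots,T-1$, dividing by $T$, and using $\mE[\|\nabla J(\theta_W)\|^2]=\frac{1}{T}\sum_{t=0}^{T-1}\mE[\|\nabla J(\theta_t)\|^2]$ yields, in the i.i.d.\ case,
\[
\mE[\|\nabla J(\theta_W)\|^2]\leq \mathcal O\!\left(\frac{1}{\alpha T}+\alpha+\frac{1}{\beta T}+\beta+\frac{\alpha^2}{\beta^2}\right),
\]
provided the coefficient $\alpha^2/\beta^2$ in front of $\sum\mE[\|\nabla J(\theta_t)\|^2]$ is small enough to be absorbed (which is ensured by $a\geq b$). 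Plugging $\alpha=\Theta(T^{-a})$ and $\beta=\Theta(T^{-b})$ with $1/2\leq a\leq 1$ and $0<b\leq a$ gives the i.i.d.\ rate $\mathcal O(T^{a-1}+T^{b-1}+T^{-b})$, and the Markovian argument produces the additional $\log T$ factors on the $T^{a-1}$ and $T^{-b}$ terms, matching the stated bounds.
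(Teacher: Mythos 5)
Your overall architecture matches the paper's (smoothness-based descent lemma, decomposition of the gradient-estimate error into a Markovian bias and a tracking error, a mixing-time/coupling argument with $\tau=\Theta(\log T)$ for the former, a contraction recursion for $\|z_t\|^2$ driven by $\lambda_{\min}(C)$, and a final self-bounding inequality in $\frac1T\sum_t\mE[\|\nabla J(\theta_t)\|^2]$). However, there is a genuine gap in your treatment of the drift term $\omega^*(\theta_{t+1})-\omega^*(\theta_t)$ inside the tracking-error recursion, which is precisely the step the paper singles out as the main difficulty. You bound $\|\omega^*(\theta_{t+1})-\omega^*(\theta_t)\|=\mathcal O(\alpha\|G_t\|)$ and then apply Young's inequality with weight $\Theta(\lambda\beta)$ to the cross term. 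Since $\|G_t\|$ contains an $\mathcal O(1)$ sampling fluctuation $N_t$, this leaves a residual of order $\alpha^2/\beta$ per step, which the geometric contraction turns into a standing $\mathcal O(\alpha^2/\beta^2)$ term in $\frac1T\sum_t\mE[\|z_t\|^2]$ --- exactly the $\alpha^2T/\beta^2$ term in your displayed tracking bound and the $\alpha^2/\beta^2$ term in your final rate. With $\alpha=\Theta(T^{-a})$, $\beta=\Theta(T^{-b})$ this is $T^{2b-2a}$, which does not vanish when $a=b$ and is not dominated by $T^{a-1}+T^{b-1}+T^{-b}$; your last sentence silently drops it. From your intermediate bound the best achievable rate is $\mathcal O(T^{-1/3})$ (optimizing $\max(T^{2b-1},T^{-b})$ at $a=1/2$, $b=1/3$), so Theorem \ref{thm:main} and in particular the $\mathcal O(1/\sqrt T)$ rate of Corollary \ref{col:1} do not follow.

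The missing idea is the paper's finer decomposition of the drift: write $\omega^*(\theta_{i+1})-\omega^*(\theta_i)=\alpha\nabla\omega^*(\theta_i)^\top G_{i+1}(\theta_i,\omega_i)+R_2$ with $\|R_2\|=\mathcal O(\alpha^2)$, and split the first-order term into the near-mean-zero quantity $\eta(\theta_i,z_i,O_i)=\langle z_i,-\nabla\omega^*(\theta_i)^\top(G_{i+1}(\theta_i,\omega^*(\theta_i))+\nabla J(\theta_i)/2)\rangle$ --- handled by the same conditioning/mixing argument you already use for the Markovian bias, contributing only $\mathcal O(\alpha\beta\tau_\beta)$ per step (Lemma \ref{lemma:eta}) --- plus pieces of size $\mathcal O(\alpha\|z_i\|)$, $\mathcal O(\alpha\|\nabla J(\theta_i)\|)$ and $\mathcal O(\alpha^2)$, to which Young's inequality can then be applied safely (yielding an absorbable $\|z_i\|^2$ coefficient, a $\tfrac{\alpha^2}{\beta}\|\nabla J(\theta_i)\|^2$ term, and $\|R_2\|^2/\beta=\mathcal O(\alpha^4/\beta)$). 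A secondary issue: you assert that $\|\theta_t\|$ and $\|\omega_t\|$ remain bounded ``by induction using Assumption \ref{assump:C}''; this does not hold for the unprojected nonlinear updates, and the paper instead imposes a projection of both iterates onto a ball of radius $R$, noting that the retrospective-copy argument of \cite{srikant2019} is unavailable here because of the nonlinearity.
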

The proof for the Markovian setting can be found in Appendix \ref{app:A}. The proof for the i.i.d.\ setting is can be obtained by letting $m=0$ in Assumption \ref{ass:1}.
Here we provide the order of the bounds in terms of $T$ for simplicity. The explicit bounds can be found in \eqref{eq:markovianbound} in the appendix. 
Compared to the i.i.d.\ setting, the Markovian setting introduces significant challenges due to
the highly dependent nature of the data. In this case, the bound is essentially scaled by a factor of the
mixing time, $\log T$, relative to the i.i.d.\ case, due to the geometric mixing time in Assumption \ref{ass:1}.

Theorem \ref{thm:main} characterizes the relationship between the convergence rate and the choice of the step-sizes $\alpha$ and $\beta$. We further optimize over the choice of the step-sizes and obtain the following corollary.
\begin{corollary}\label{col:1}
If we choose $a=b=\frac{1}{2}$, then under the i.i.d.\ setting, we have that
$
\mathbb{E}[\Vert \nabla  J(\theta_W)\Vert ^2]=\mathcal{O}\left(\frac{1}{\sqrt{T}}\right);
$
and  under the Markovian setting, we have that
$
\mathbb{E}[\Vert \nabla  J(\theta_W)\Vert ^2]=\mathcal{O}\left(\frac{\log T}{\sqrt{T}}\right).
$
\end{corollary}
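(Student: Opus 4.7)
The plan is to deduce Corollary \ref{col:1} as a direct algebraic consequence of Theorem \ref{thm:main} by selecting the step-size exponents that balance the three terms in the finite-time bound. First I would verify that the choice $a = b = \tfrac{1}{2}$ lies in the feasible region $\{(a,b): \tfrac{1}{2}\le a\le 1,\ 0<b\le a\}$ stipulated by the theorem, which is immediate.

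Next, I would substitute $a = b = \tfrac{1}{2}$ into the i.i.d.\ bound of Theorem \ref{thm:main}(a). Each of the three terms $\tfrac{1}{T^{1-a}}$, $\tfrac{1}{T^{1-b}}$, and $\tfrac{1}{T^b}$ then becomes $\tfrac{1}{\sqrt{T}}$, so the sum is $\mathcal{O}(1/\sqrt{T})$, giving the i.i.d.\ claim. For the Markovian bound in Theorem \ref{thm:main}(b), the same substitution yields $\tfrac{\log T}{\sqrt T} + \tfrac{1}{\sqrt T} + \tfrac{\log T}{\sqrt T} = \mathcal{O}(\log T/\sqrt T)$, establishing the second claim. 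The only care required is to track the $\log T$ mixing factor attached to the two terms involving $\alpha$ in the Markovian case.

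Finally, to motivate why this choice of step-sizes is natural (even though the corollary only claims it suffices), I would observe that minimizing the asymptotic order of the i.i.d.\ bound amounts to maximizing $\min\{1-a,\ 1-b,\ b\}$ subject to the constraints $\tfrac{1}{2}\le a\le 1$ and $0<b\le a$. The constraint $a\ge \tfrac{1}{2}$ forces $1-a\le \tfrac{1}{2}$, so the best attainable rate exponent is $\tfrac{1}{2}$, and the three exponents $1-a$, $1-b$, $b$ are simultaneously equal to $\tfrac{1}{2}$ precisely when $a = b = \tfrac{1}{2}$. There is no substantive obstacle in this proof: the real work is done in Theorem \ref{thm:main}, and Corollary \ref{col:1} amounts to optimizing the resulting expression over the feasible step-size exponents.
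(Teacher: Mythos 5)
Your proposal is correct and matches the paper's (implicit) treatment: Corollary \ref{col:1} follows by direct substitution of $a=b=\frac{1}{2}$ into the bounds of Theorem \ref{thm:main}, which is exactly what you do. The added observation that this choice uniquely maximizes $\min\{1-a,1-b,b\}$ over the feasible region is a nice justification but not needed for the claim.
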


Note that  our result matches with the result in \cite{ghadimi2013stochastic} for solving general smooth non-convex optimization problems using stochastic gradient descent. Compared to the analysis in \cite{ghadimi2013stochastic}, our analysis is novel and challenging since the update rule of the Greedy-GQ algorithm is a two timescale one, for which the analysis of the tracking error is challenging, whereas the algorithm in \cite{ghadimi2013stochastic} is one timescale; and the gradient estimate is biased due to the Markovian noise and the two timescale update, and the bias needs to be explicitly characterized, whereas the gradient estimate in \cite{ghadimi2013stochastic} is unbiased. 

\subsection{Discussion on Technical Challenges}\label{sec:discussion}
In the following, we discuss the major challenges and highlight our major technical contributions in our analysis.  For the complete proof, we refer the readers to the appendix. 

For convenience, we define some notations. Let $O_t=(s_t,a_t,r_t,s_{t+1})$ be the observation at time $t$, define $\omega^*(\theta)=C^{-1}\mathbb{E}[\delta_{S,A,S'}(\theta)\phi_{S,A}]$. Here $\omega^*(\theta)$ can be interpreted as the limit of the fast timescale in \eqref{eq:omegaupdate} if we do not update the slow timescale parameter $\theta_t$ at all, and use a fixed $\theta$ in \eqref{eq:omegaupdate}. The tracking error is then defined to be $z_t=\omega_t-\omega^*(\theta_t)$. We further denote $G_{t+1}(\theta,\omega)=\delta_{t+1}(\theta)\phi_t-\gamma(\omega^\top  \phi_t)\hat{\phi}_{t+1}(\theta)$. Then, the update in the slow timescale in \eqref{eq:thetaupdate} is $\theta_{t+1}=\theta_t+\alpha G_{t+1}(\theta_t,\omega_t)$.

The major challenge in the analysis lies in analyzing the stochastic bias of the gradient estimate $G_{t+1}(\theta_t, \omega_t)$, which is introduced by: (1) the Markovian noise; and (2) the tracking error due to the two timescale update. Specifically, the bias in the gradient estimate can be decomposed as follows:
\begin{flalign}\label{eq:bias_decompose}
&\mE\left[G_{t+1}(\theta_t, \omega_t)+\frac{\nabla J(\theta)}{2}\right]\nn\\
&=\underbrace{\mE\left[G_{t+1}(\theta_t, \omega^*(\theta_t))+\frac{\nabla J(\theta)}{2}\right]}_{\text{ Markovian bias}}+\underbrace{\mE\left[G_{t+1}(\theta_t, \omega_t)-G_{t+1}(\theta_t, \omega^*(\theta_t))\right]}_{\text{tracking error}}.
\end{flalign}

For the first bias term in \eqref{eq:bias_decompose}, under the Markovian setting, $\theta_t$ is a function of $O_1,...,O_{t-1}$, and thus is dependent on $O_t$, which makes the  estimator biased. Our analysis employs a novel information theoretic technique \cite{bhandari2018finite} to bound the bias caused by this coupling. 

The second term in \eqref{eq:bias_decompose} is due to the tracking error $z_t\triangleq \omega_t-\omega^*(\theta_t)$ in the two timescale update, which is the most challenging part in our analysis. As $\theta_t$ changes at every time step, the limit of the fast timescale $\omega^*(\theta_t)$ also varies. Therefore, in the analysis of the tracking error, the change in $\theta_t$ also need to be taken into consideration. {The previous analyses of the tracking error were conducted using two approaches. The first approach involved using the size of a sample batch, resulting in a batch of size $\mathcal{O}(\epsilon^{-1})$. However, this approach yields a constant bound and is not applicable to our single-sample online fashion, where only a single sample is used at each step. The second approach, as seen in \cite{wang2020finite}, treats the update of $z_t$ as a single time-scale, bounds the tracking error individually, and only considers its convergence in terms of $T$. However, the convergence of the tracking error should also depend on the convergence of the slow time scale, i.e., $\theta$. In our analysis, we develop a framework that enables us to characterize the relationship between the two time scales and bound the tracking error in terms of both $T$ and $\theta$ simultaneously. Namely, we show that  
    $\frac{\sum^{T-1}_{t=0}\mathbb{E}[\Vert z_t\Vert ^2]}{T}\leq \mathcal{O}\Bigg(\frac{1}{T^{1-b}}+\frac{\log T}{T^b}+ \frac{\sum^{T-1}_{t=0}\mathbb{E}[\Vert \nabla J(\theta_t) \Vert ^2]}{T} \Bigg)$, where the convergence of $\nabla J(\theta)$ further introduces a tighter error bound on $z_t$, and results in an improved sample complexity.} 

\subsection{Discussion on Theoretical Results}
Our main theoretical results Theorem \ref{thm:main} and Corollary \ref{col:1} show that the Greedy-GQ algorithm converges to a stationary solution, i.e., $\theta_t \to \{ \theta: \nabla J(\theta)=0\}$. 

\begin{wrapfigure}{r}{0.5\textwidth}
  \begin{center}
    \includegraphics[width=0.48\textwidth]{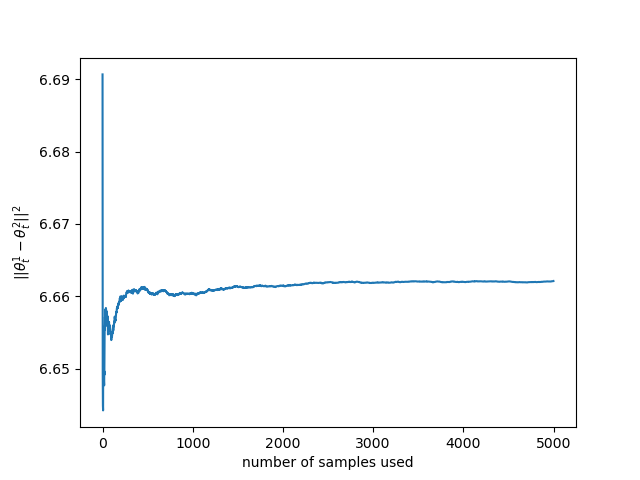}
  \end{center}
  \label{fig.difference}
\end{wrapfigure}

The stationary convergence result is not because of our analysis, instead, this is the nature of the Greedy-GQ algorithm. The objective function is a non-convex function. The Greedy-GQ algorithm can be viewed as a sub-gradient descent method with a two time-scale implementation, therefore we do not expect a global convergence result. 
To verify this, in Figure \ref{fig.difference}, we implement the Greedy-GQ algorithm with two different initializations $\theta^1_0,\theta^2_0$ under the Garnet problem $\mathcal{G}(10,5,10,5)$, and plot the difference between $\|\theta^1_t-\theta^2_t\|^2$ during the training. As the result shows, with different initializations,  $\theta^i_t$ converges to different stationary points. This implies that convergence to the global optimum cannot be achieved by the Greedy-GQ algorithm.

\color{black}

\section{Nested-loop Greedy-GQ}\label{sec:nest}
In this section, we design a novel nested-loop Greedy-GQ algorithm, and provide its finite-time error bound and sample complexity.

\subsection{Algorithm}
Instead of updating $\theta$ and $\omega$ simultaneously, the nested-loop Greedy-GQ algorithm consists of an inner loop and an outer loop. The slow timescale parameter $\theta$ is updated in the outer loop; and within the inner loop, the slow timescale parameter $\theta$ is kept fixed, and the fast timescale parameter $\omega$ is updated. Let $H_{t+1}(\theta,\omega)=(\delta_{t+1}(\theta)-\phi_t^\top \omega)\phi_t$. The algorithm is provided in Algorithm \ref{alg:nested}. Although batches of samples are used in the algorithm, the algorithm can still be implemented in an online and incremental fashion without having to store a batch of samples in the memory. 


\begin{algorithm}
\caption{Nested-loop Greedy-GQ}
\label{alg:nested}
\textbf{Input}:   $T$, $T_c$, $B$, $M$, $\alpha$, $\beta$, $\phi^{(i)}$ for $i=1,...,N$, $\pi_b$\\
\textbf{Initialization}: $\theta_0$,$w_0$
\begin{algorithmic}[1] 
\STATE {Choose $W\sim \text{Uniform}(0,1,...,T-1)$}
\FOR{$t=0,1,...,W-1$}
\STATE $w_{t,0}\leftarrow w_{t}$
\FOR {$t_c=0,1,...T_c-1$}
\STATE Generate $B$ samples
$O_j$, $j=(BT_c+M)t+Bt_c,...,(BT_c+M)t+Bt_c+B-1$
\STATE $w_{t,t_c+1} \leftarrow w_{t,t_c}+\frac{\beta}{B} \sum^B_{i=1} H_{(BT_c+M)t+Bt_c+i}(\theta_t,w_{t,t_c})$
\ENDFOR
\STATE $w_{t+1}\leftarrow w_{t,T_c-1}$
\STATE Generate $M$ samples $O_j$, $j=BT_c(t+1)+Mt,...,BT_c(t+1)+Mt+M-1$
\STATE $\theta_{t+1} \leftarrow  \theta_{t}+\frac{\alpha}{M}\sum^M_{i=1} G_{BT_c(t+1)+Mt+i}(\theta_t,w_t)$ 
\ENDFOR
\end{algorithmic}
\textbf{Output}: $\theta_W$
\end{algorithm}

\subsection{Finite-time Error Bound and Sample Complexity}
In this section, we present the finite-time error bound for the nested-loop Greedy-GQ algorithm. 

\begin{theorem}\label{nestcov}
Consider the nested-loop Greedy-GQ algorithm. Under both the Markovian and i.i.d.\ settings, if $\alpha<\frac{1}{K}$, $\beta<\frac{\lambda}{4}$, where $\lambda$ denotes the minimal eigenvalue of $C$, then
$
    \mathbb{E}[\Vert  \nabla J(\theta_W)\Vert ^2]\leq \mathcal{O}\left(e^{-T_c}+\frac{1}{T}+\frac{1}{B}+\frac{1}{M}\right). 
$
\end{theorem}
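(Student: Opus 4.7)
\textbf{Proof proposal for Theorem \ref{nestcov}.}

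The plan is to exploit the structural separation between the two timescales that the nested-loop design creates: within each outer iteration the slow parameter $\theta_t$ is frozen, so the inner loop is just a linearly convergent stochastic iteration toward the fixed target $\omega^*(\theta_t)$. This decouples the tracking-error analysis from the slow-scale update and gives us three independent knobs ($T_c,B,M$) to drive down three independent error sources.

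First I would establish smoothness of $J$, namely $\|\nabla J(\theta_1)-\nabla J(\theta_2)\| \le K\|\theta_1-\theta_2\|$ for some $K$, which follows from Assumptions \ref{assump:boundedfeature}--\ref{assump:policy} by the same computation already used in the vanilla Greedy-GQ analysis; this lets me invoke the standard descent inequality
\begin{align}
\mE[J(\theta_{t+1})-J(\theta_t)] \le \alpha\,\mE\langle \nabla J(\theta_t), \bar G_t\rangle + \tfrac{K\alpha^2}{2}\mE\|\bar G_t\|^2,
\end{align}
where $\bar G_t=\tfrac{1}{M}\sum_{i=1}^M G_{\cdot}(\theta_t,\omega_t)$. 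The inner product splits, via the bias decomposition in \eqref{eq:bias_decompose}, into the nominal descent term $-\tfrac{1}{2}\|\nabla J(\theta_t)\|^2$, a Markovian-bias term, and a tracking-error term controlled (by Lipschitzness of $G$ in $\omega$) by $\mE\|\omega_t-\omega^*(\theta_t)\|$. I would use the assumption $\alpha<1/K$ to absorb the quadratic term and retain an effective $-\tfrac{\alpha}{4}\|\nabla J(\theta_t)\|^2$ on the right-hand side.

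Next I would analyze the inner loop with $\theta_t$ frozen. The update $\omega_{t,t_c+1}=\omega_{t,t_c}+\tfrac{\beta}{B}\sum_i H_i(\theta_t,\omega_{t,t_c})$ has expected drift $-\beta C(\omega_{t,t_c}-\omega^*(\theta_t))$, so under $\beta<\lambda/4$ a standard one-step contraction argument yields
\begin{align}
\mE\|\omega_{t,t_c+1}-\omega^*(\theta_t)\|^2 \le (1-c\beta)\,\mE\|\omega_{t,t_c}-\omega^*(\theta_t)\|^2 + \tfrac{C_1}{B} + \epsilon_{\text{mix}},
\end{align}
where $c$ depends on $\lambda,\beta$, the $1/B$ term is the mini-batch variance, and $\epsilon_{\text{mix}}$ is a Markovian-bias term that I would bound by the standard geometric-mixing argument of \cite{bhandari2018finite} using Assumption \ref{ass:1}; the gap of $BT_c+M$ samples between successive outer iterations is exactly what I would use to decouple $\omega_{t,0}$ from the fresh samples. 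Iterating this recursion $T_c$ times yields
\begin{align}
\mE\|\omega_{t+1}-\omega^*(\theta_t)\|^2 \le (1-c\beta)^{T_c}\mE\|\omega_{t,0}-\omega^*(\theta_t)\|^2 + \mathcal{O}\bigl(\tfrac{1}{B}\bigr),
\end{align}
which, combined with the drift of $\omega^*(\theta_t)$ from one outer step to the next (Lipschitz in $\theta$, itself controlled by $\alpha\|\bar G_t\|$), gives a geometric recursion for $\mE\|\omega_t-\omega^*(\theta_t)\|^2$ whose fixed point is $\mathcal{O}(e^{-cT_c}+1/B)$.

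Plugging these back into the descent inequality, the per-step recursion becomes
\begin{align}
\mE J(\theta_{t+1}) \le \mE J(\theta_t) - \tfrac{\alpha}{4}\mE\|\nabla J(\theta_t)\|^2 + \alpha\,\mathcal O\bigl(e^{-cT_c}+\tfrac{1}{B}+\tfrac{1}{M}\bigr),
\end{align}
where the $1/M$ comes from the outer mini-batch variance (and the residual Markovian bias at the outer level, again via the mixing argument on the $M$ samples that lie between the inner and the slow update). Telescoping from $t=0$ to $T-1$, dividing by $\alpha T$, and using that $W$ is uniform on $\{0,\dots,T-1\}$ gives
\begin{align}
\mE\|\nabla J(\theta_W)\|^2 \le \mathcal{O}\Bigl(\tfrac{J(\theta_0)-\min J}{\alpha T}+e^{-cT_c}+\tfrac{1}{B}+\tfrac{1}{M}\Bigr),
\end{align}
which is the claimed rate after absorbing $c$ into the $\mathcal O(\cdot)$.

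The main obstacle will be the tracking-error recursion across outer iterations: because $\omega^*(\theta_{t+1})\neq \omega^*(\theta_t)$, the inner loop starts from a point that is displaced from the new target by $\|\omega^*(\theta_{t+1})-\omega^*(\theta_t)\| = \mathcal{O}(\alpha\|\bar G_t\|)$, and $\|\bar G_t\|$ in turn depends on $\|\nabla J(\theta_t)\|$ and on the tracking error itself. I would close this loop by bounding $\|\bar G_t\|^2$ by $\mathcal O(\|\nabla J(\theta_t)\|^2+\|\omega_t-\omega^*(\theta_t)\|^2+\text{variance})$ and choosing $\alpha$ small enough that the feedback term $\alpha^2$ is dominated by the geometric contraction $(1-c\beta)^{T_c}$, exactly as the hypothesis $\alpha<1/K$ (together with $\beta<\lambda/4$) permits. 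The Markovian case requires the additional care of choosing the sample-index gaps $BT_c+M$ correctly so that mixing-time corrections yield only a lower-order perturbation, but otherwise follows the same template as the i.i.d.\ case.
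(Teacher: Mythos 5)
Your proposal follows essentially the same route as the paper's proof: a smoothness-based descent inequality with the $\alpha<1/K$ absorption, an inner-loop contraction under $\beta<\lambda/4$ giving a tracking error of $\mathcal{O}(e^{-cT_c}+1/B)$, batch-variance and mixing bounds of $\mathcal{O}(1/M)$ for the outer gradient estimate, and telescoping over the uniformly drawn $W$. The only notable difference is that the ``main obstacle'' you identify --- closing a cross-iteration recursion caused by the drift of $\omega^*(\theta_t)$ --- does not arise in the paper, because the projection step with radius $R$ bounds the inner loop's initial error by $\|z_{t,0}\|^2\leq 4R^2$ uniformly in $t$, so the contraction is simply restarted from a constant at each outer iteration and no recursion across outer iterations is needed.
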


For the detailed proof under the Markovian setting, we refer the readers to the Section \ref{section:nested} in the appendix. The proof for the i.i.d. setting is similar, and is thus omitted. Here, we provide the order of the bound for simplicity. The explicit bound under the Markovian setting can be found in \eqref{eq:nestedmainresult} in the appendix. 
Theorem \ref{nestcov} shows that the nested-loop Greedy-GQ algorithm asymptotically converges to a neighborhood of a stationary point. In particular, the size of the neighborhood is $\mathcal{O}({1}/{M}+{1}/{B})$, which can be driven arbitrarily close to zero by choosing large batch sizes $M$ and $B$. 

We then derive the sample complexity of converging to an $\epsilon$-stationary point in the following corollary.
\begin{corollary}\label{nestsample}
Set $T, M, B =\mathcal{O}({1}/{\epsilon})$ and $T_c=\mathcal{O}(\log (\epsilon^{-1}))$, then the sample complexity of an $\epsilon$-stationary solution: $\mathbb{E}[\Vert  \nabla J(\theta_W)\Vert ^2]\leq \epsilon$ is  $\mathcal{O}\left({\log(\epsilon^{-1})}{\epsilon^{-2}}\right)$.
\end{corollary}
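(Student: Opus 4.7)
The plan is to simply substitute the prescribed parameter values into the finite-time error bound of Theorem \ref{nestcov} and then tally the samples consumed by Algorithm \ref{alg:nested}. First I would verify that each of the four error terms in $\mathcal{O}(e^{-T_c}+1/T+1/B+1/M)$ becomes $\mathcal{O}(\epsilon)$ under the stated choices: with $T_c=\Theta(\log(\epsilon^{-1}))$ the contraction term $e^{-T_c}$ is $\Theta(\epsilon)$, and with $T,B,M=\Theta(\epsilon^{-1})$ each of the remaining terms is $\Theta(\epsilon)$. Summing the four contributions yields $\mathbb{E}[\|\nabla J(\theta_W)\|^2]=\mathcal{O}(\epsilon)$, which is the definition of an $\epsilon$-stationary point. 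The step-size conditions $\alpha<1/K$ and $\beta<\lambda/4$ required by Theorem \ref{nestcov} are independent of $\epsilon$ and can be fixed once and for all.

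Next I would count the samples. Reading off Algorithm \ref{alg:nested}, the $t$-th outer iteration draws $B$ fresh samples in each of its $T_c$ inner-loop passes (for the $\omega$-update) together with a further $M$ samples for the outer $\theta$-update, giving $B T_c + M$ samples per outer iteration. Running the outer loop for $T$ steps therefore uses
\begin{equation*}
T(B T_c+M)=\Theta(\epsilon^{-1})\bigl(\Theta(\epsilon^{-1}\log(\epsilon^{-1}))+\Theta(\epsilon^{-1})\bigr)=\mathcal{O}\bigl(\log(\epsilon^{-1})\epsilon^{-2}\bigr)
\end{equation*}
samples in total, which is the claimed sample complexity. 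The dominant contribution is the inner-loop cost $T B T_c$; the outer-loop cost $T M=\Theta(\epsilon^{-2})$ is of lower order by a factor of $\log(\epsilon^{-1})$.

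There is essentially no obstacle beyond bookkeeping, since the corollary is a direct specialization of Theorem \ref{nestcov}. The only point requiring a little care is that the constant hidden in $T_c=\mathcal{O}(\log(\epsilon^{-1}))$ must absorb the geometric contraction rate of the inner loop (which depends on $\beta$ and $\lambda$ through the proof of Theorem \ref{nestcov}); once $\alpha,\beta$ are fixed, this rate is a constant and poses no difficulty. No additional assumption on the step-sizes or on the Markov chain beyond those already invoked in Theorem \ref{nestcov} is needed.
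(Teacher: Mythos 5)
Your proposal is correct and follows essentially the same route as the paper: substitute the prescribed choices of $T$, $M$, $B$, $T_c$ into the bound of Theorem \ref{nestcov} so that each of the four terms is $\mathcal{O}(\epsilon)$, then count the total number of samples as $(M+T_cB)T=\mathcal{O}(\log(\epsilon^{-1})\epsilon^{-2})$. Your added remark that the constant in $T_c=\mathcal{O}(\log(\epsilon^{-1}))$ must absorb the inner-loop contraction rate $\beta\lambda-4\beta^2$ is a correct (and slightly more careful) reading of the $e^{-T_c}$ term than the paper makes explicit.
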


The sample complexity obtained in Corollary \ref{nestsample}  matches with the result in Theorem \ref{thm:main}.

\begin{remark}
In nested-loop Greedy-GQ, the fast timescale $w$ is updated in the inner loop using different samples from the samples for the update of $\theta$. Moreover, within the inner loop of $\omega$'s update, $\theta$ is kept fixed. This approach can simplify the analysis of the tracking error, compared to the vanilla Greedy-GQ algorithm which updates both $\theta$ and $\omega$ simultaneously. The sample complexity here also matches with that of the stochastic gradient descent algorithm for general non-convex problems in \cite{ghadimi2013stochastic}.
\end{remark}

\section{Experiments}
In this section, we conduct experiments on two RL problems: Garnet problem and the frozen lake problem, and compare the vanilla Greedy-GQ algorithm and its variants: the nested-loop one in this paper and the mini-batch one in \cite{xu2020sample}.

\subsection{Garnet Problem}
The first experiment is on the Garnet problem \cite{archibald1995generation}, which can be characterized by $\mathcal{G}(\vert  \mcs\vert  ,\vert  \mca\vert  ,b,N)$. Here $b$ is a branching factor specifying how many next states are possible for each state-action pair, and  these $b$ states are chosen uniformly at random. The transition probabilities are generated by sampling uniformly and randomly between 0 and 1. The parameter $N$ is the number of
features for linear function approximation. In our experiments, we generate a reward matrix uniformly and randomly between 0 and 1, and a feature matrix of dimension $N \times (\vert  \mcs\vert  \vert  \mca\vert  )$ randomly. In the nested-loop Greedy-GQ algorithm, we set $M=30$, $T_c=10$ and $B=5$. In the mini-batch Greedy-GQ algorithm, we set $B=30$. The step sizes are set as $\alpha=0.1$ and $\beta=0.5$, and the discount factor $\gamma=0.95$ in all the three algorithms.

We consider two sets of parameters: $\mathcal{G}(10,5,10,5)$ and $\mathcal{G}(8,10,5,4)$. In Figures \ref{Fig.gar1} and \ref{Fig.gar2}, we plot the minimum gradient norm v.s. the number of samples for all three algorithms using 40 Garnet MDP trajectories, i.e., at each time $t$, we plot $\min_{i\leq t} \Vert \nabla J(\theta_i)\Vert ^2$. The upper and lower envelopes of the curves correspond to the 95 and 5 percentiles of the 40 curves, respectively. We also plot the estimated variance of the stochastic update for three algorithms
along the iterations. Specifically, we query 100 Monte Carlo samples per iteration to estimate the squared error of $G_{t+1}(\theta_t,\omega_t)$, i.e.,  $\Vert G_{t+1}(\theta_t,\omega_t)-\nabla J(\theta_t)\Vert ^2$. It can be seen that both the nested-loop and mini-batch approaches can reduced the variance. There is no significant difference among the convergence rate of  the three algorithms. Different hyper-parameters, i.e., the batch size are also compared. We plot the norm of gradient v.s. the number of samples. The upper and lower envelop denotes the 95 and 5 percentiles of the 40 trajectories. The results show that the convergence rate is similar,  and mini-batch Greedy-GQ has a smaller variance with a larger batch size.

\begin{figure} [h]
\begin{tabular}{c c c}
  \includegraphics[width=0.3\textwidth]{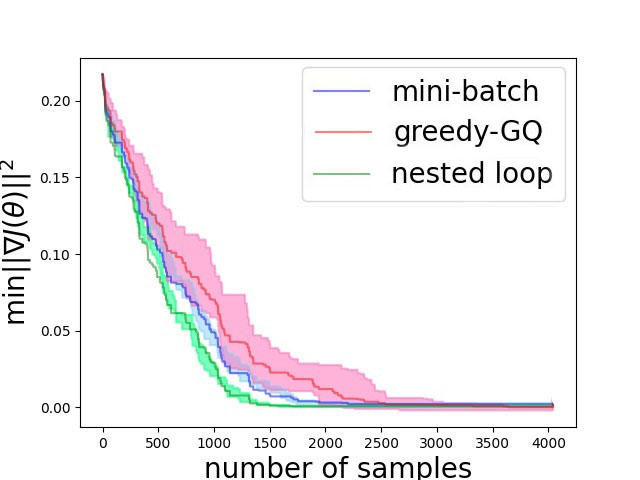} &
  \includegraphics[width=0.3\textwidth]{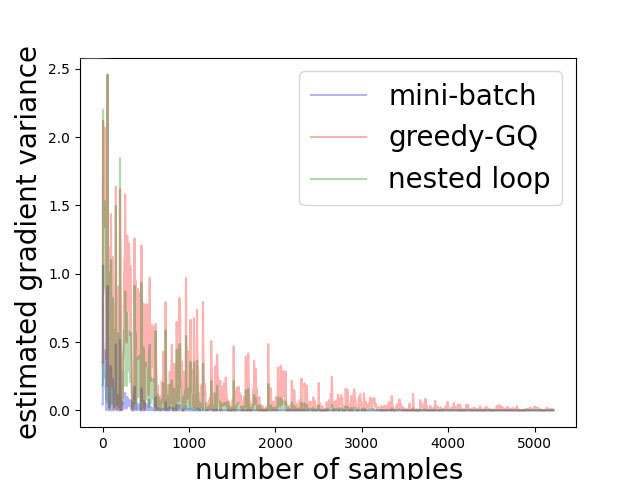}  &
  \includegraphics[width=0.3\textwidth]{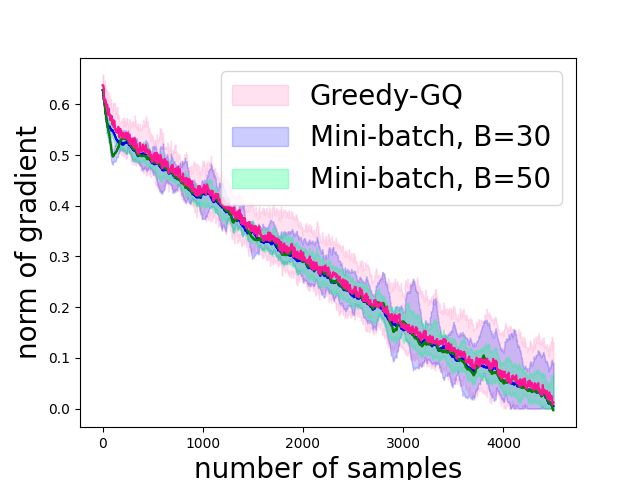}
\end{tabular}
\caption{Garnet Problem 1.}
\label{Fig.gar1}       
\end{figure}

\begin{figure} [h]
\begin{tabular}{c c c}
  \includegraphics[width=0.3\textwidth]{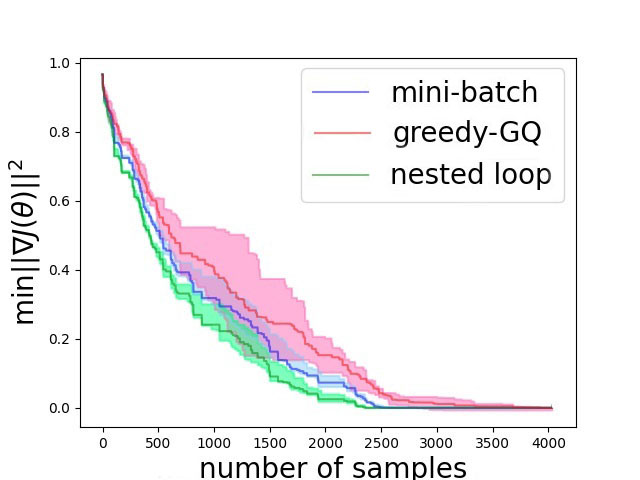} &
  \includegraphics[width=0.3\textwidth]{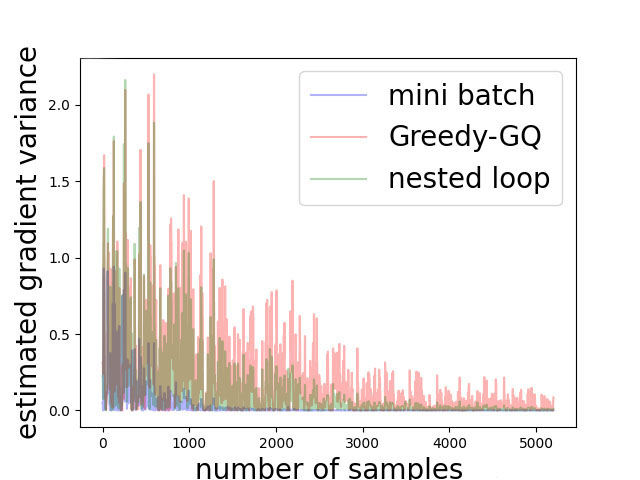}  &
  \includegraphics[width=0.3\textwidth]{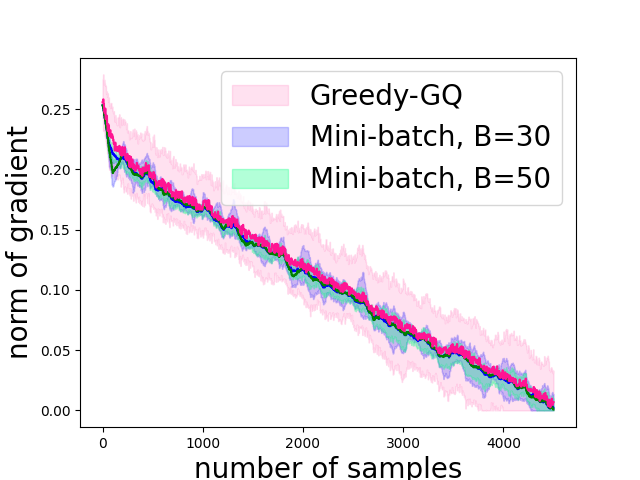}
\end{tabular}
\caption{Garnet Problem 2.}
\label{Fig.gar2}       
\end{figure}


\subsection{Frozen Lake Problem}
Our second experiment is on the frozen lake game  \cite{brockman2016openai}. We consider two sets of features with different number of features. Two random feature matrices $\Phi$ of dimension $4\times (\vert  \mcs\vert  \vert  \mca\vert  )$ and $5\times (\vert  \mcs\vert  \vert  \mca\vert  )$ are generated to linearly approximate the value function. In both problems, the agent follows the uniform behavior policy, i.e., it goes up, down, right or left with probability $\frac{1}{4}$. In the nested-loop Greedy-GQ algorithm, we set $M=30$, $T_c=10$ and $B=5$, while in the mini-batch Greedy-GQ algorithm, we set $B=30$. In all the three algorithms, the step sizes are set as $\alpha=0.1$ and $\beta=0.5$, and the discount factor $\gamma=0.95$.
We plot the minimum norm of the gradient and the estimated gradient variance as a function of the number of samples in Figures \ref{Fig.lake1} and \ref{Fig.lake2}. It can be seen that the nested-loop and mini-batch Greedy-GQ algorithms have  smaller gradient variance, and the convergence rate of the three algorithms are similar.   We also compare different batch sizes. The results also show that mini-batch can reduce the variance during the training.

\begin{figure} [h]
\begin{tabular}{c c c}
  \includegraphics[width=0.3\textwidth]{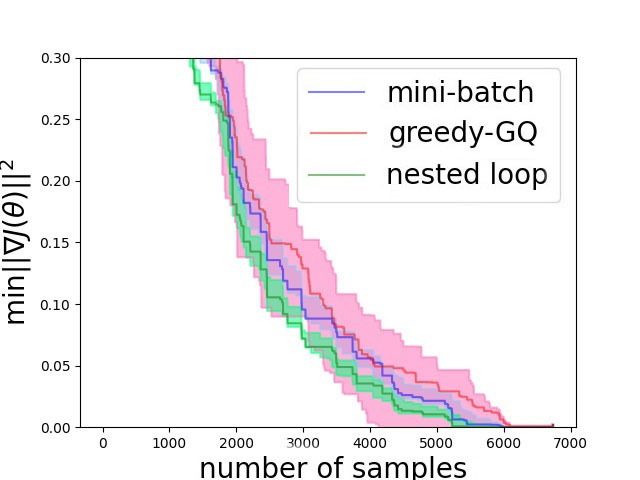} &
  \includegraphics[width=0.3\textwidth]{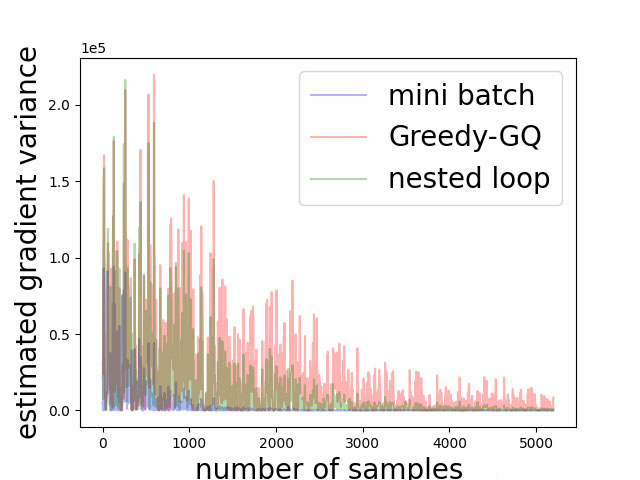}  &
  \includegraphics[width=0.3\textwidth]{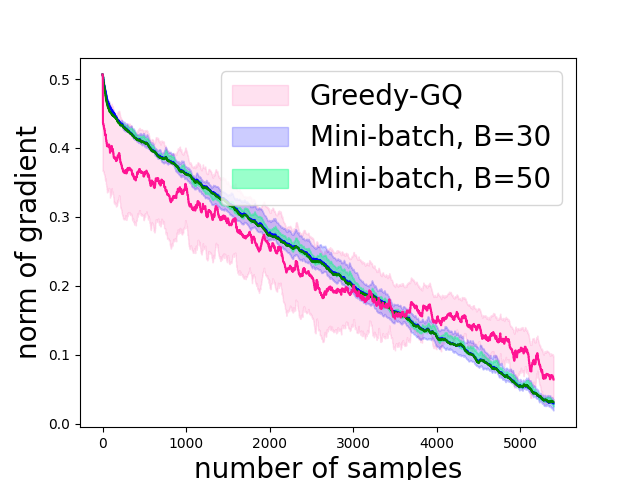}
\end{tabular}
\caption{Frozen Lake Problem 1.}
\label{Fig.lake1}       
\end{figure}

\begin{figure} [h]
\begin{tabular}{c c c}
  \includegraphics[width=0.3\textwidth]{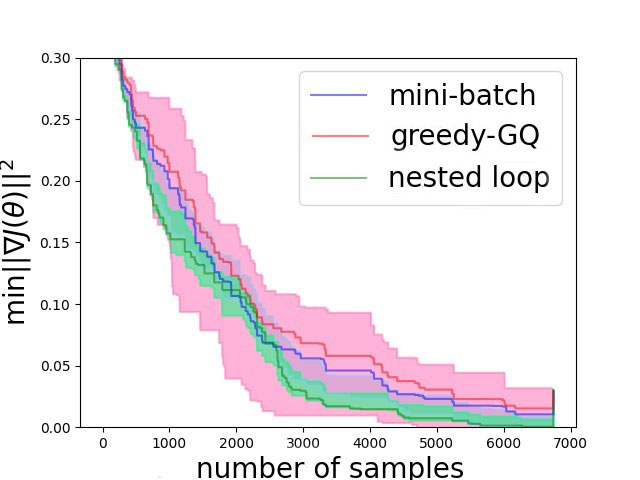} &
  \includegraphics[width=0.3\textwidth]{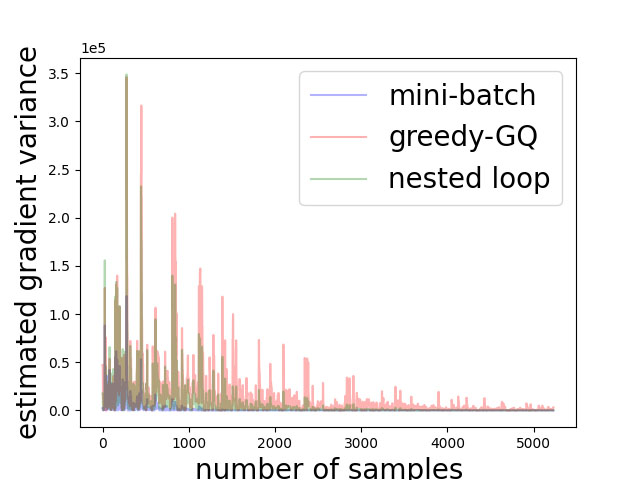}  &
  \includegraphics[width=0.3\textwidth]{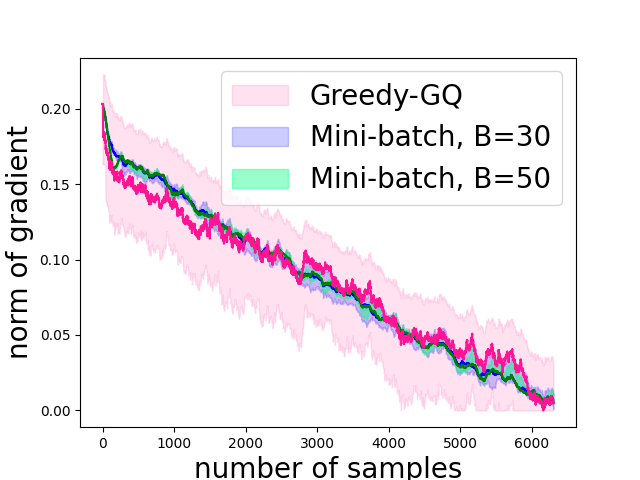}
\end{tabular}
\caption{Frozen Lake Problem 2.}
\label{Fig.lake2}       
\end{figure}

\section{Discussion}
In this section, we discuss the vanilla Greedy-GQ algorithm and its variants: nested-loop Greedy-GQ, mini-batch Greedy-GQ \cite{xu2020sample}, and variance reduction Greedy-GQ \cite{ma2020variance}, based on our theoretical and numerical results. First of all, from our numerical experiments, there is no significant difference in the convergence rate of the three algorithms. The variance of the gradient estimates of the mini-batch Greedy-GQ and nested-loop Greedy-GQ is smaller than the vanilla Greedy-GQ. Second, as can be seen from the theoretical bounds, there is a $\mathcal O(\log 1/\epsilon)$ factor improvement of the mini-batch Greedy-GQ and variance reduction Greedy-GQ than the vanilla Greedy-GQ and the nested-loop Greedy-GQ in the sample complexity, which however does not really appear to be the case from the numerical results. Therefore, such a gap of $\mathcal O(\log 1/\epsilon)$ is likely due to the analysis. Third, compared to the vanilla Greedy-GQ, where the update in $\theta$ is performed every time when a new sample comes in, the nested-loop and mini-batch methods update $\theta$ after a large batch of data (in Corollary \ref{nestsample} and Theorem 3 in \cite{xu2020sample}, the batch sizes needed are $\mathcal O(1/\epsilon)$ and $\mathcal O(\log(1/\epsilon)/\epsilon)$, respectively).  The vanilla Greedy-GQ has the advantage that it can be implemented in an online and incremental fashion, and can be stopped anytime to output the parameter $\theta$; however, to update the parameter $\theta$ once, the nested-loop and mini-batch Greedy-GQ methods need a big batch of data. Moreover, the vanilla Greedy-GQ has less number of hyper-parameter to tune in practice than the other two variants. Therefore, the vanilla Greedy-GQ is more convenient for practical online implementation.

\section{Conclusion}
In this paper, we developed the finite-time error bounds for the two timescale Greedy-GQ algorithm with linear function approximation under both i.i.d.\ and Markovian settings. We also proposed the nested-loop Greedy-GQ algorithm, and characterized its finite-time error bound and sample complexity. Including the mini-batch Greedy-GQ algorithm studied in \cite{xu2020sample}, all the three algorithms were shown to achieve the same sample complexity as the one of the stochastic gradient descent for a general non-convex optimization problem (up to a factor of $\log(1/\epsilon)$). The major technical contribution in this paper is a tight tracking error analysis which bound the tracking error in terms of the slow timescale parameter, and a novel analysis for non-convex optimization with two timescale updates and biased gradient. The tools and analysis developed in this paper can be used to improve the tracking error analysis and further the finite-time error bounds for a variety of two timescale RL algorithms. The theoretical understanding developed for the vanilla Greedy-GQ algorithm and its variants will provide useful insights for their application in practice.

\bibliography{sn-bibliography}
\bibliographystyle{ieeetr}


\newpage
\begin{appendices}

\section{Analysis for Vanilla Greedy-GQ}\label{app:A}
In the following proof, $\Vert a\Vert $ denotes the $\ell_2$ norm if $a$ is a vector; and $\Vert A\Vert $ denotes the operator norm if $A$ is a matrix.
For technical convenience, we impose a projection step on both the updates of $\theta$ and $\omega$ with radius $R$: for any $t$, $\Vert \theta_t\Vert \leq R$ and $\Vert \omega_t\Vert \leq R$. The projection step is necessary to guarantee the stability of the algorithm. The approach developed in \cite{srikant2019} which bounds the parameter using its retrospective copy several time steps back, is not applicable here due to the nonlinear structure of Greedy-GQ.



We first show that the objective function $J(\theta)$ is $K$-smooth for $\theta \in \{\theta: \Vert \theta\Vert \leq R\}$.
\begin{Lemma}\label{lemma:Lsmooth}
	$J(\theta)$ is $K$-smooth: 
	\begin{flalign*}
		\Vert \nabla J(\theta_1)-\nabla J(\theta_2)\Vert  \leq K\vert  \vert  \theta_1-\theta_2\vert  \vert  , \forall \Vert \theta_1\Vert ,\Vert \theta_2 \Vert  \leq R,
	\end{flalign*} 
	where 
$
 K=2\gamma{\lambda^{-1}}\big((k_1\vert  \mca\vert  R+1)(1+\gamma+\gamma Rk_1\vert  \mca\vert  )+\vert  \mca\vert  (r_{\max}+R+\gamma R)( 2k_1+ k_2R) \big).
$
\end{Lemma}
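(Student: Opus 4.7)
The plan is to compute $\nabla J(\theta)$ explicitly using the expression from \eqref{eq:5}, and then decompose $\nabla J(\theta_1)-\nabla J(\theta_2)$ into pieces each of which is handled by a standard Lipschitz-times-bound argument. Recall
\begin{align*}
\tfrac{1}{2}\nabla J(\theta)=-\mE_\mu[\delta_{S,A,S'}(\theta)\phi_{S,A}]+\gamma\mE_\mu[\hat\phi_{S'}(\theta)\phi_{S,A}^\top]\omega^*(\theta),
\end{align*}
so I will treat the two additive pieces $A(\theta):=\mE[\delta(\theta)\phi]$ and $B(\theta):=\mE[\hat\phi(\theta)\phi^\top]\omega^*(\theta)$ separately.

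First I would collect uniform bounds valid on the projection ball $\|\theta\|\le R$: using $\|\phi_{s,a}\|\le 1$ (Assumption \ref{assump:boundedfeature}), $\|\theta\|\le R$, and $|r|\le r_{\max}$, I get $|\delta(\theta)|\le r_{\max}+R+\gamma R$; using the Lipschitz bound on $\pi_\theta$ (Assumption \ref{assump:policy}) one checks that $\|\hat\phi_{s'}(\theta)\|=\|\nabla\bar V_{s'}(\theta)\|\le k_1|\mca|R+1$, since $\nabla\bar V_{s'}(\theta)=\sum_{a'}[\nabla\pi_\theta(a'|s')]\theta^\top\phi_{s',a'}+\sum_{a'}\pi_\theta(a'|s')\phi_{s',a'}$. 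The constant $\lambda^{-1}$ in $K$ will enter through $\|\omega^*(\theta)\|\le\lambda^{-1}\|\mE[\delta(\theta)\phi]\|$ using Assumption \ref{assump:C}.

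Next I would establish three Lipschitz-in-$\theta$ estimates on the ball. (i) $\delta(\theta)$ is Lipschitz: writing $\delta(\theta_1)-\delta(\theta_2)=\gamma(\bar V_{s'}(\theta_1)-\bar V_{s'}(\theta_2))-(\theta_1-\theta_2)^\top\phi$ and expanding $\bar V_{s'}(\theta)=\sum_{a'}\pi_\theta(a'|s')\theta^\top\phi_{s',a'}$ via add-and-subtract, one piece uses $|\pi_{\theta_1}-\pi_{\theta_2}|\le k_1\|\theta_1-\theta_2\|$ and the other is linear in $\theta$. (ii) $\omega^*(\theta)$ is Lipschitz: $\omega^*(\theta_1)-\omega^*(\theta_2)=C^{-1}\mE[(\delta(\theta_1)-\delta(\theta_2))\phi]$, so its Lipschitz constant is $\lambda^{-1}$ times the one from (i). (iii) $\hat\phi_{s'}(\theta)$ is Lipschitz, and here Assumption \ref{assump:policy}'s smoothness (the $k_2$ bound on $\nabla\pi_\theta$) is essential: differentiating the expression for $\hat\phi_{s'}(\theta)$ above and applying add-and-subtract between $\theta_1$ and $\theta_2$ yields a Lipschitz constant involving $2k_1$ (from the linear piece of $\hat\phi$) and $k_2 R$ (from the Lipschitz $\nabla\pi$ piece multiplied by $\|\theta\|\le R$).

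Finally I would assemble. The difference $A(\theta_1)-A(\theta_2)$ is bounded directly by (i). For $B$, I use the product decomposition
\begin{align*}
B(\theta_1)-B(\theta_2)=\mE[\hat\phi(\theta_1)\phi^\top](\omega^*(\theta_1)-\omega^*(\theta_2))+\mE[(\hat\phi(\theta_1)-\hat\phi(\theta_2))\phi^\top]\omega^*(\theta_2),
\end{align*}
then plug in the uniform bound on $\hat\phi$, (ii), the uniform bound on $\omega^*$, and (iii). Multiplying by the $2\gamma$ prefactor and the factor of $2$ from differentiating $J/2$, and grouping terms, matches the stated expression for $K$: the $(k_1|\mca|R+1)(1+\gamma+\gamma Rk_1|\mca|)$ piece comes from combining $\|\hat\phi\|$ with the Lipschitz constant of $\delta$ (hence of $\omega^*$ after the $\lambda^{-1}$), and the $|\mca|(r_{\max}+R+\gamma R)(2k_1+k_2 R)$ piece comes from $\|\omega^*\|$ times the Lipschitz constant of $\hat\phi$.

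The main obstacle I expect is bookkeeping: the smoothness constant is a sum of several multiplicative terms and it is easy to drop an $|\mca|$, a $\gamma$, or a $\lambda^{-1}$ when collecting the Lipschitz and boundedness contributions. The genuinely nontrivial ingredient (as opposed to mechanical) is step (iii)—bounding $\|\hat\phi(\theta_1)-\hat\phi(\theta_2)\|$—because it is the only place that consumes the $k_2$-smoothness of the policy and requires an add-and-subtract both on the $\nabla\pi_\theta$ factor and on the $\theta$ factor that multiplies it. Everything else is triangle inequality plus the two Lipschitz estimates (i)–(ii).
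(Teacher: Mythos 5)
Your proposal is correct and follows essentially the same route as the paper: the paper writes $\nabla J(\theta)=2\nabla\left(\mE_\mu[\delta(\theta)\phi]\right)C^{-1}\mE_\mu[\delta(\theta)\phi]$ and add-and-subtracts across the product, which is algebraically identical to your split into $-\mE[\delta\phi]+\gamma\mE[\hat\phi\,\phi^\top]\omega^*(\theta)$ since $\nabla\mE[\delta(\theta)\phi]=\gamma\mE[\hat\phi(\theta)\phi^\top]-C$, and it uses exactly your ingredients (uniform bounds on $\mE[\delta\phi]$ and $\hat\phi$, Lipschitzness of $\delta$ hence of $\omega^*$ via $\lambda^{-1}$, and Lipschitzness of $\hat\phi$ consuming the $k_2$-smoothness of the policy). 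The only caveat is the bookkeeping you already flagged: the constant you assemble agrees with the paper's $K$ only up to how the $-C\omega^*$ term is grouped, a discrepancy already present in the paper's own stated constant and immaterial to the order of the bound.
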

\begin{proof}
	
	It  follows that  
	\begin{align*}
		&\nabla J\left (\theta_1\right )-\nabla J\left (\theta_2\right )\nn\\
		&=2\nabla \left (\mathbb{E}_{\mu}\left [\delta_{S,A,S'}\left (\theta_1\right )\phi_{S,A}\right ]\right ) C^{-1}\mathbb{E}_{\mu}\left [\delta_{S,A,S'}\left (\theta_1\right )\phi_{S,A}\right ]\nn\\
		&\quad-2\nabla \left (\mathbb{E}_{\mu}\left [\delta_{S,A,S'}\left (\theta_2\right )\phi_{S,A}\right ]\right ) C^{-1}\mathbb{E}_{\mu}\left [\delta_{S,A,S'}\left (\theta_2\right )\phi_{S,A}\right ]\nn\\
		&=2\nabla \left (\mathbb{E}_{\mu}\left [\delta_{S,A,S'}\left (\theta_1\right )\phi_{S,A}\right ]\right ) C^{-1}\mathbb{E}_{\mu}\left [\delta_{S,A,S'}\left (\theta_1\right )\phi_{S,A}\right ]\nn\\
		&\quad-2\nabla \left (\mathbb{E}_{\mu}\left [\delta_{S,A,S'}\left (\theta_1\right )\phi_{S,A}\right ]\right ) C^{-1}\mathbb{E}_{\mu}\left [\delta_{S,A,S'}\left (\theta_2\right )\phi_{S,A}\right ]\nn\\
		&\quad+2\nabla \left (\mathbb{E}_{\mu}\left [\delta_{S,A,S'}\left (\theta_1\right )\phi_{S,A}\right ]\right ) C^{-1}\mathbb{E}_{\mu}\left [\delta_{S,A,S'}\left (\theta_2\right )\phi_{S,A}\right ]\nn\\
		&\quad-2\nabla \left (\mathbb{E}_{\mu}\left [\delta_{S,A,S'}\left (\theta_2\right )\phi_{S,A}\right ]\right ) C^{-1}\mathbb{E}_{\mu}\left [\delta_{S,A,S'}\left (\theta_2\right )\phi_{S,A}\right ].
	\end{align*}
Since $C^{-1}$ is positive definite, thus  it suffices to show both $\nabla \left (\mathbb{E}_{\mu}\left [\delta_{S,A,S'}\left (\theta\right )\phi_{S,A}\right ]\right )$ and $\mathbb{E}_{\mu}\left [\delta_{S,A,S'}\left (\theta\right )\phi_{S,A}\right ]$ are Lipschitz in $\theta$ and are bounded.
	
It is straightforward to see that 
	\begin{align}\label{eq:30}
		\Vert \mathbb{E}_{\mu}\left [\delta_{S,A,S'}\left (\theta\right )\phi_{S,A}\right ]\Vert \leq r_{\max}+(1+\gamma) R,
	\end{align}
	and
$
		\Vert \nabla \mathbb{E}_{\mu}\left [\delta_{S,A,S'}\left (\theta\right )\phi_{S,A}\right ] \Vert \leq 1+\gamma(k_1\vert  \mca\vert  R+1).
$
We further have that\begin{small}
	\begin{align}\label{eq:33}
		&\Vert \nabla \left (\mathbb{E}_{\mu}\left [\delta_{S,A,S'}\left (\theta_1\right )\phi_{S,A}\right ]\right )-\nabla \left (\mathbb{E}_{\mu}\left [\delta_{S,A,S'}\left (\theta_2\right )\phi_{S,A}\right ]\right )\Vert \nn\\
		&= \gamma\bigg\Vert \mathbb{E}_{\mu}\bigg [\sum_{a\in\mathcal{A}}\bigg(  \nabla\left (\pi_{\theta_1}\left (a\vert  S'\right )\right)\theta_1^\top \phi_{S',a}-\nabla\left (\pi_{\theta_2}\left (a\vert  S'\right )\right)\nn\\
		&\quad\cdot\theta_2^\top \phi_{S',a}+\pi_{\theta_1}\left (a\vert  S'\right )\phi_{S',a}-\pi_{\theta_2}\left (a\vert  S'\right )\phi_{S',a}\bigg)\phi_{S,A}^\top\bigg ]\bigg\Vert \nn\\
		&= \gamma\bigg\Vert \mathbb{E}_{\mu}\bigg [\sum_{a\in\mathcal{A}}\bigg(  \nabla\left (\pi_{\theta_1}\left (a\vert  S'\right )-\pi_{\theta_2}\left (a\vert  S'\right )\right)\theta_1^\top \phi_{S',a}\nn\\
		&\quad+\nabla\left (\pi_{\theta_2}\left (a\vert  S'\right )\right)(\theta_1-\theta_2)^\top \phi_{S',a}\bigg)\phi_{S,A}^\top\bigg ]\bigg\Vert \\
		&\quad+\gamma\big\Vert \mathbb{E}_{\mu}\big [\big (\sum_{a\in \mca} \big ( \pi_{\theta_1}\left (a\vert  S'\right )\phi_{S',a}-\pi_{\theta_2}\left (a\vert  S'\right )\phi_{S',a}\big)\big )\phi_{S,A}^\top\big ]\big\Vert \nn\\
		&\leq \gamma\vert  \mca\vert  \left(2k_1+ k_2 R \right)\Vert \theta_1-\theta_2\Vert ,
	\end{align}\end{small}
	which is from Assumption \ref{assump:policy}.
	Following similar steps, we can also show  that $\mathbb{E}_{\mu}\left [\delta_{S,A,S'}\left (\theta\right )\phi_{S,A}\right ]$ is Lipschitz in $\theta$:
	\begin{align}\label{eq:34}
		\Vert \mathbb{E}_{\mu}\left [\delta_{S,A,S'}\left (\theta_1\right )\phi_{S,A}\right ]-\mathbb{E}_{\mu}\left [\delta_{S,A,S'}\left (\theta_2\right )\phi_{S,A}\right ]\Vert \nn\\
		\leq \left (\gamma(\vert  \mca\vert  k_1R+1)+1\right ) \Vert \theta_1-\theta_2\Vert .
	\end{align}
	
	Combining \eqref{eq:33} and \eqref{eq:34} concludes the proof.
\end{proof}

Recall the definition of $G_{t+1}(\theta, \omega)$ in Section \ref{sec:discussion}.
The following Lemma shows that $G_{t+1}(\theta, \omega)$ is Lipschitz in $\omega$, and $G_{t+1}(\theta, \omega^*(\theta))$ is Lipschitz in $\theta$.
\begin{Lemma}\label{Lemma:Lipschitz}
	For any $w_1,w_2$, 
$
		\Vert G_{t+1}(\theta,\omega_1)-G_{t+1}(\theta,\omega_2)\Vert 
		\leq \gamma(\vert  \mca\vert  Rk_1+1)\Vert \omega_1-\omega_2\Vert ,
$
	and for any $\theta_1,\theta_2\in \{\theta:\Vert \theta\Vert \leq R\}$,
	\begin{align}\label{eq:G^*lip}
		\hspace{-0.2cm}\Vert G_{t+1}(\theta_1,\omega^*(\theta_1))-G_{t+1}(\theta_2,\omega^*(\theta_2))\Vert \leq k_3\Vert \theta_1-\theta_2 \Vert ,
	\end{align}
	where $k_3=(1+\gamma+\gamma R\vert  \mca\vert  k_1+\gamma\frac{1}{\lambda}\vert  \mca\vert  (2k_1+k_2R)(r_{\max}+\gamma R+R)+\gamma \frac{1}{\lambda}(1+\vert  \mca\vert  Rk_1)(1+\gamma+\gamma R\vert  \mca\vert  k_1)).$
\end{Lemma}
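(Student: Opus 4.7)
\textbf{Proof plan for Lemma \ref{Lemma:Lipschitz}.}

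The plan is to exploit the explicit form $G_{t+1}(\theta,\omega)=\delta_{t+1}(\theta)\phi_t-\gamma(\omega^\top \phi_t)\hat\phi_{t+1}(\theta)$ together with (i) the projection bound $\|\theta\|\le R$, (ii) the feature bound $\|\phi_{s,a}\|\le 1$ from Assumption \ref{assump:boundedfeature}, and (iii) the smooth policy bounds from Assumption \ref{assump:policy}. The first inequality is almost immediate: for fixed $\theta$, only the second term of $G_{t+1}$ depends on $\omega$, and it is \emph{linear} in $\omega$, so $G_{t+1}(\theta,\omega_1)-G_{t+1}(\theta,\omega_2)=-\gamma((\omega_1-\omega_2)^\top\phi_t)\hat\phi_{t+1}(\theta)$. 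I will control $\|\hat\phi_{t+1}(\theta)\|=\|\sum_{a'}(\nabla\pi_\theta(a'|s_{t+1})\,\theta^\top\phi_{s_{t+1},a'}+\pi_\theta(a'|s_{t+1})\phi_{s_{t+1},a'})\|\le |\mca|k_1R+1$ via Assumption \ref{assump:policy}, $\|\theta\|\le R$, and Cauchy--Schwarz, and then $\|\phi_t\|\le 1$ gives the stated constant.

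For the second inequality I will add and subtract intermediate terms to separate the variation in $\theta$ inside each of the two summands of $G_{t+1}$. Concretely, I write
\begin{align*}
G_{t+1}(\theta_1,\omega^*(\theta_1))-G_{t+1}(\theta_2,\omega^*(\theta_2))
&=\bigl(\delta_{t+1}(\theta_1)-\delta_{t+1}(\theta_2)\bigr)\phi_t\\
&\quad -\gamma\bigl((\omega^*(\theta_1)-\omega^*(\theta_2))^\top\phi_t\bigr)\hat\phi_{t+1}(\theta_1)\\
&\quad -\gamma\bigl(\omega^*(\theta_2)^\top\phi_t\bigr)\bigl(\hat\phi_{t+1}(\theta_1)-\hat\phi_{t+1}(\theta_2)\bigr).
\end{align*}
Each of the three resulting pieces is bounded by a Lipschitz-type estimate. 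For the first piece, $|\delta_{t+1}(\theta_1)-\delta_{t+1}(\theta_2)|\le(1+\gamma(|\mca|k_1R+1))\|\theta_1-\theta_2\|$, since $\bar V_{s_{t+1}}$ has gradient $\hat\phi_{t+1}$, whose norm is already bounded above. For the second piece, I use the auxiliary fact that $\omega^*(\theta)=C^{-1}\mathbb E_\mu[\delta_{S,A,S'}(\theta)\phi_{S,A}]$ inherits the Lipschitz constant of $\mathbb E_\mu[\delta_{S,A,S'}(\theta)\phi_{S,A}]$ (i.e.\ $1+\gamma(|\mca|k_1R+1)$ from \eqref{eq:34}) amplified by $\|C^{-1}\|\le 1/\lambda$ from Assumption \ref{assump:C}, combined with the bound $\|\hat\phi_{t+1}(\theta_1)\|\le |\mca|k_1R+1$. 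For the third piece, I use $\|\omega^*(\theta_2)\|\le(r_{\max}+(1+\gamma)R)/\lambda$ (from \eqref{eq:30} and $\|C^{-1}\|\le 1/\lambda$) together with a Lipschitz estimate on $\hat\phi_{t+1}(\theta)$.

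The one step that requires real care is the Lipschitz estimate on $\hat\phi_{t+1}(\theta)$. Applying add--subtract inside $\sum_{a'}[\nabla\pi_\theta(a'|s_{t+1})\,\theta^\top\phi_{s_{t+1},a'}+\pi_\theta(a'|s_{t+1})\phi_{s_{t+1},a'}]$, I split differences of products via $\nabla\pi_{\theta_1}\theta_1^\top\phi-\nabla\pi_{\theta_2}\theta_2^\top\phi=(\nabla\pi_{\theta_1}-\nabla\pi_{\theta_2})\theta_1^\top\phi+\nabla\pi_{\theta_2}(\theta_1-\theta_2)^\top\phi$, then bound the first summand with the $k_2$-smoothness of the policy (Assumption \ref{assump:policy}) and $\|\theta_1\|\le R$, the second with the $k_1$-Lipschitzness, and the remaining $\pi$-difference term with $k_1$-Lipschitzness applied to $\pi_\theta$ itself (via the gradient bound). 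Summing over $a'\in\mca$ produces the factor $|\mca|(2k_1+k_2R)$. This is the main obstacle since it is where the smoothness hypothesis really bites; once this estimate is in hand, collecting the three pieces and grouping terms exactly reproduces the expression for $k_3$ in the statement.
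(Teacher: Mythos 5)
Your proposal is correct and follows essentially the same route as the paper: both establish the bound $\|\hat{\phi}_{t+1}(\theta)\|\leq |\mca|Rk_1+1$ for the first claim, and for the second use the same add-and-subtract decomposition into a $\delta$-difference piece, an $\omega^*$-difference piece (controlled via the Lipschitz constant $(1+\gamma+\gamma R|\mca|k_1)/\lambda$ of $\omega^*$), and a $\hat{\phi}$-difference piece (controlled via the Lipschitz constant $|\mca|(2k_1+k_2R)$ of $\hat{\phi}_{t+1}$ together with $\|\omega^*(\theta)\|\leq (r_{\max}+\gamma R+R)/\lambda$). The three resulting constants sum exactly to the stated $k_3$, so no gap remains.
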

\begin{proof}
	Under Assumption \ref{assump:policy}, it can be easily shown that
	\begin{flalign}\label{eq:39}
		\Vert \hat{\phi}_{t+1}(\theta) \Vert \leq \vert  \mca\vert  Rk_1+1.
	\end{flalign}
	It then follows that for any $\omega_1$ and $\omega_2$,
	\begin{align*}
		&\Vert G_{t+1}(\theta,\omega_1)-G_{t+1}(\theta,\omega_2)\Vert 
		\leq \gamma (\vert  \mca\vert  Rk_1+1)\Vert \omega_1-\omega_2\Vert .
	\end{align*}
	
To show that $G_{t+1}(\theta,\omega^*(\theta))$ is Lipschitz in $\theta$, we first show that $\hat{\phi}_{t+1}(\theta)$ is Lipschitz in $\theta$ following similar steps as those in \eqref{eq:33}:
	\begin{align}
		&\Vert \hat{\phi}_{t+1}(\theta_1)-\hat{\phi}_{t+1}(\theta_2)\Vert 
		\leq \vert   \mathcal{A}\vert   (2k_1  +k_2R)\Vert \theta_1-\theta_2 \Vert .
	\end{align}
	 We have that\begin{small}
	\begin{align}
		&\Vert G_{t+1}(\theta_1,\omega^*(\theta_1))-G_{t+1}(\theta_2,\omega^*(\theta_2)) \Vert \nn\\
		&\leq \vert  \delta_{t+1}(\theta_1)-\delta_{t+1}(\theta_2) \vert  
		+\gamma\Vert (\omega^*(\theta_2))^\top\phi_t\hat{\phi}_{t+1}(\theta_2)\nn\\
		&\quad-(\omega^*(\theta_1))^\top\phi_t\hat{\phi}_{t+1}(\theta_1) \Vert \nn\\
		&\overset{(a)}{\leq}\gamma\Vert (\omega^*(\theta_2))^\top \phi_t\hat{\phi}_{t+1}(\theta_2)-(\omega^*(\theta_1))^\top\phi_t\hat{\phi}_{t+1}(\theta_1)\nn\\
		&\quad-(\omega^*(\theta_1))^\top\phi_t\hat{\phi}_{t+1}(\theta_2)+(\omega^*(\theta_1))^\top\phi_t\hat{\phi}_{t+1}(\theta_2) \Vert \nn\\
		&\quad+(1+\gamma+\gamma R\vert  \mathcal{A}\vert  k_1)\Vert \theta_1-\theta_2 \Vert \nn\\
		&\leq \gamma(1+\vert  \mca\vert  Rk_1)\Vert \omega^*(\theta_2)-\omega^*(\theta_1) \Vert \nn\\
		&\quad+\gamma\Vert \omega^*(\theta_1) \Vert \Vert \hat{\phi}_{t+1}(\theta_1)-\hat{\phi}_{t+1}(\theta_2)\ \Vert \nn\\
		&\quad+\gamma(1+R\vert  \mathcal{A}\vert  k_1)\Vert \theta_1-\theta_2 \Vert +\Vert \theta_1-\theta_2 \Vert \nn\\
		&\overset{(b)}{\leq } \bigg(\left(1+\frac{\gamma }{\lambda}(1+\vert  \mca\vert  Rk_1)\right)(1+\gamma+\gamma R\vert  \mca\vert  k_1)\nn\\
		&\quad+\frac{\gamma}{\lambda}\vert  \mca\vert  (2k_1+k_2R)(r_{\max}+\gamma R+R)\bigg) \Vert \theta_1-\theta_2 \Vert ,
	\end{align}\end{small}
	where $(a)$ can be shown following steps similar to those  in \eqref{eq:34}, while  $(b)$ can be shown using
	\begin{align}\label{eq:w*lip}
		\Vert \omega^*(\theta_2)-\omega^*(\theta_1)\Vert \leq  \frac{(1+\gamma+\gamma R\vert  \mca\vert  k_1)} {\lambda}\Vert \theta_1-\theta_2\Vert ,
	\end{align}
	and $
		\Vert \omega^*(\theta)\Vert \leq \frac{1}{\lambda}(r_{\max}+\gamma R+R).
	$
\end{proof}
%


Since $J(\theta)$ is $K$-smooth, by Taylor expansion we have that
\begin{align}\label{eq:Jksmooth}
     &J(\theta_{t+1})    \leq J(\theta_t) +\left\langle \nabla J(\theta_t), \theta_{t+1}-\theta_t\right\rangle  + \frac{K}{2} \Vert  \theta_{t+1}-\theta_t\Vert ^2\nn\\
    &=J(\theta_t)-\alpha\big\langle \nabla J(\theta_t),-G_{t+1}(\theta_t, \omega_t)+G_{t+1}(\theta_t, \omega^*(\theta_t)) \big\rangle\nn\\
    &\quad+\frac{\alpha}{2} \langle \nabla J(\theta_t), {\nabla J(\theta_t)} +2G_{t+1}(\theta_t, \omega^*(\theta_t)) \rangle\nn\\
    &\quad -\frac{\alpha}{2}\vert  \vert  \nabla J(\theta_t)\vert  \vert  ^2+\frac{K}{2} \alpha^2\vert  \vert  G_{t+1}(\theta_t,\omega_t)\vert  \vert  ^2\nn\\
    &\leq J(\theta_t) +\alpha \gamma\Vert \nabla J(\theta_t) \Vert (1+\vert  \mca\vert  Rk_1)\Vert \omega^*(\theta_t)-\omega_t \Vert \nn\\
    &\quad+\frac{\alpha}{2} \langle \nabla J(\theta_t), {\nabla J(\theta_t)}+2G_{t+1}(\theta_t, \omega^*(\theta_t)) \rangle \nn\\
    &\quad-\frac{\alpha}{2}\vert  \vert  \nabla J(\theta_t)\vert  \vert  ^2+\frac{K}{2} \alpha^2\vert  \vert  G_{t+1}(\theta_t,\omega_t)\vert  \vert  ^2,
\end{align}
where the last inequality follows from Lemma \ref{Lemma:Lipschitz}.

Re-arranging the terms in \eqref{eq:Jksmooth}, summing up w.r.t. $t$ from 0 to $T-1$, taking the expectation and applying Cauchy's inequality implies that
\begin{small}
\begin{align}\label{eq:main}
    &\sum^{T-1}_{t=0} \frac{\alpha}{2} \mathbb{E}[\Vert \nabla J(\theta_t) \Vert ^2]\nn\\
    &\leq J(\theta_0)-J(\theta_{T})+ \gamma\alpha(1+\vert  \mca\vert  Rk_1)\sqrt{\sum^{T-1}_{t=0} \mathbb{E}[\Vert \nabla J(\theta_t)\Vert ^2]}\nn\\
    &\quad\cdot\sqrt{\sum^{T-1}_{t=0}\mathbb{E}[\Vert \omega^*(\theta_t)-\omega_t\Vert ^2]} +\frac{K}{2}\sum^{T-1}_{t=0}\alpha^2 \mathbb{E}[\Vert  G_{t+1}(\theta_t,\omega_t)\Vert ^2]\nn\\
    &\quad+\sum^{T-1}_{t=0}\frac{\alpha}{2} \mathbb{E}\left[\left\langle \nabla J(\theta_t),{\nabla J(\theta_t)}+2G_{t+1}(\theta_t, \omega^*(\theta_t)) \right\rangle\right].
\end{align}\end{small}

We then provide the bounds on $\mathbb{E}[\Vert \omega^*(\theta_t)-\omega_t\Vert ^2]$ and $\mathbb{E}\left[\left\langle \nabla J(\theta_t),{\nabla J(\theta_t)}/{2}+G_{t+1}(\theta_t, \omega^*(\theta_t)) \right\rangle\right]$, which we refer to as ``tracking error" and ``stochastic bias". We 
 define $\zeta(\theta, O_t)\triangleq\langle \nabla J(\theta), \frac{\nabla J(\theta)}{2}+G_{t+1}(\theta, \omega^*(\theta)) \rangle$, then $\mathbb{E}_{\mu}[\zeta(\theta, O_t)]=0$ for any fixed $\theta$ when $O_t\sim\mu$ (which doesn't hold under the Markovian setting). 
In the following lemma, we provide an upper bound on $\mE[\zeta(\theta, O_t)]$.
\begin{Lemma}\label{thm:zeta}
\textbf{Stochastic Bias.}	Let $\tau_{\alpha}\triangleq \min \left\{k : m\rho^k \leq \alpha \right\}$. If $t \leq \tau_{\alpha}$, then 
$
		\mathbb{E}[\zeta(\theta_t,O_t)] \leq k_{\zeta},
$
	and if $t > \tau_{\alpha}$, then
\begin{align}
		\mathbb{E}[\zeta(\theta_t, O_t)]\leq  k_{\zeta}\alpha+c_{\zeta}(c_{f_1}+c_{g_1})\tau_{\alpha}\alpha,
\end{align}
	where $c_{\zeta}=2\gamma(1+k_1\vert  \mca\vert  R)\frac{1}{\lambda}(r_{\max}+R+\gamma R)(\frac{K}{2}+k_3)+K(r_{\max}+R+\gamma R)( \frac{2\gamma}{\lambda}(1+k_1\vert  \mca\vert  R)+1)$ and $k_{\zeta}=4\gamma(1+k_1R\vert  \mca\vert  )\frac{1}{\lambda}(r_{\max}+R+\gamma R)^2(2\gamma(1+k_1\vert  \mca\vert  R)\frac{1}{\lambda}+1)$.
\end{Lemma}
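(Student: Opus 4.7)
The plan is to split the two regimes exactly as stated and exploit two ingredients: $\zeta(\cdot,\cdot)$ is uniformly bounded, and it is Lipschitz in its first argument. For the regime $t\leq\tau_\alpha$, I would simply show the uniform bound $|\zeta(\theta,O_t)|\leq k_\zeta$. This bound follows from the projection $\|\theta_t\|\leq R$, Assumption \ref{assump:boundedfeature}, and the explicit bounds $\|\omega^*(\theta)\|\leq\lambda^{-1}(r_{\max}+(1+\gamma)R)$ and $\|\nabla J(\theta)/2+G_{t+1}(\theta,\omega^*(\theta))\|\leq c$ for a bounded constant $c$; the product of these bounds via Cauchy--Schwarz matches the stated $k_\zeta$.

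For the regime $t>\tau_\alpha$, I would use the standard mixing-time decomposition by inserting the $\tau_\alpha$-lagged parameter $\theta_{t-\tau_\alpha}$, which is $\mathcal F_{t-\tau_\alpha}$-measurable, and write
\begin{align*}
\mE[\zeta(\theta_t,O_t)] = \underbrace{\mE\bigl[\zeta(\theta_t,O_t)-\zeta(\theta_{t-\tau_\alpha},O_t)\bigr]}_{(\mathrm{I})} + \underbrace{\mE\bigl[\zeta(\theta_{t-\tau_\alpha},O_t)\bigr]}_{(\mathrm{II})}.
\end{align*}
For term $(\mathrm{I})$, I would first establish that $\zeta(\cdot,O_t)$ is Lipschitz in $\theta$ with some constant $L_\zeta$ proportional to $c_\zeta$: write
$\zeta(\theta_1,O_t)-\zeta(\theta_2,O_t)=\langle\nabla J(\theta_1)-\nabla J(\theta_2),\tfrac12\nabla J(\theta_1)+G_{t+1}(\theta_1,\omega^*(\theta_1))\rangle+\langle\nabla J(\theta_2),\tfrac12(\nabla J(\theta_1)-\nabla J(\theta_2))+G_{t+1}(\theta_1,\omega^*(\theta_1))-G_{t+1}(\theta_2,\omega^*(\theta_2))\rangle$,
and combine $K$-smoothness of $J$ (Lemma \ref{lemma:Lsmooth}), $k_3$-Lipschitzness of $G_{t+1}(\theta,\omega^*(\theta))$ (Lemma \ref{Lemma:Lipschitz}), and the uniform bounds collected above. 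Then, because each update of $\theta_s$ satisfies $\|\theta_{s+1}-\theta_s\|\leq\alpha\|G_{s+1}(\theta_s,\omega_s)\|\leq\alpha(c_{f_1}+c_{g_1})$ by the decomposition used in the tracking-error analysis, a telescoping argument gives $\|\theta_t-\theta_{t-\tau_\alpha}\|\leq(c_{f_1}+c_{g_1})\tau_\alpha\alpha$, and hence $(\mathrm{I})\leq c_\zeta(c_{f_1}+c_{g_1})\tau_\alpha\alpha$.

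For term $(\mathrm{II})$, I would condition on $\mathcal F_{t-\tau_\alpha}$ and exploit that $\theta_{t-\tau_\alpha}$ is then frozen. By Assumption \ref{ass:1},
$d_{TV}\bigl(\mP(O_t\in\cdot\mid\mathcal F_{t-\tau_\alpha}),\mu\bigr)\leq m\rho^{\tau_\alpha}\leq\alpha$
by the very definition of $\tau_\alpha$. Since $\zeta(\theta_{t-\tau_\alpha},\cdot)$ is uniformly bounded by $k_\zeta$ on the event $\{\|\theta_{t-\tau_\alpha}\|\leq R\}$, the TV-coupling gives
$\bigl|\mE[\zeta(\theta_{t-\tau_\alpha},O_t)\mid\mathcal F_{t-\tau_\alpha}]-\mE_{O\sim\mu}[\zeta(\theta_{t-\tau_\alpha},O)]\bigr|\leq 2k_\zeta\alpha$,
and the second expectation is $0$ because $\mE_\mu[G_{s+1}(\theta,\omega^*(\theta))]=-\nabla J(\theta)/2$ for any fixed $\theta$. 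Taking total expectation yields $(\mathrm{II})\leq k_\zeta\alpha$. Summing $(\mathrm{I})$ and $(\mathrm{II})$ delivers the stated bound.

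The main obstacle I anticipate is the Lipschitz-in-$\theta$ bound for $\zeta$: $\zeta$ is a quadratic form in $\nabla J(\theta)$ plus a coupling with $G_{t+1}(\theta,\omega^*(\theta))$, so one has to combine boundedness and Lipschitzness of two different quantities (each of which itself is derived from smoothness of $J$ and the two-timescale structure through $\omega^*(\theta)$), and to make sure the resulting constant matches the stated $c_\zeta$. Bookkeeping on the constants, rather than any delicate probabilistic argument, is the main technical burden; conceptually the proof is the classical information-theoretic mixing argument adapted to the nonlinear, non-convex setting.
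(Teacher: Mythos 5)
Your proposal is correct and follows essentially the same route as the paper: the uniform bound $|\zeta|\leq k_\zeta$ for $t\leq\tau_\alpha$, and for $t>\tau_\alpha$ the decomposition into a Lipschitz drift term (controlled by telescoping $\|\theta_t-\theta_{t-\tau_\alpha}\|\leq(c_{f_1}+c_{g_1})\tau_\alpha\alpha$) plus a mixing term handled by comparing $O_t$ to an independent stationary sample via Assumption \ref{ass:1}. The only cosmetic difference is that the paper absorbs the total-variation coupling constant directly into the definition of $k_\zeta$ rather than carrying an explicit factor of $2$ as you do.
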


\begin{proof}
	For any $\theta_1$ and $\theta_2$, it follows that 
	\begin{align}\label{eq:52} 
		&\vert  \zeta(\theta_1,O_t)-\zeta(\theta_2,O_t)\vert  \nn\\
		&=\frac{1}{2}\vert  \left\langle \nabla J(\theta_1),  {\nabla J(\theta_1)} +2G_{t+1}(\theta_1, \omega^*(\theta_1)) \right\rangle\nn\\
		&\quad-\left\langle \nabla J(\theta_1), {\nabla J(\theta_2)}+2G_{t+1}(\theta_2, \omega^*(\theta_2)) \right\rangle\\
		&\quad+\left\langle \nabla J(\theta_1)-\nabla J(\theta_2), {\nabla J(\theta_2)} +2G_{t+1}(\theta_2, \omega^*(\theta_2)) \right\rangle
		\vert  .\nn
	\end{align}
	By Lemma \ref{lemma:Lsmooth},  $\zeta(\theta,O_t)$ is also Lipschitz in $\theta$: $
		\vert  \zeta(\theta_1,O_t)-\zeta(\theta_2,O_t)\vert   \leq c_{\zeta} \Vert \theta_1-\theta_2 \Vert ,
	$
	where 
$
		c_{\zeta}=2\gamma(1+k_1\vert  \mca\vert  R)\frac{1}{\lambda}(r_{\max}+R+\gamma R)(\frac{K}{2}+k_3)
		+K(r_{\max}+R+\gamma R)(\gamma \frac{1}{\lambda}(1+k_1\vert  \mca\vert  R)+1
		+\gamma \frac{1}{\lambda}(1+Rk_1\vert  \mca\vert  )).
$
	Thus from \eqref{eq:52}, it follows that for any $\tau \geq 0$,
	\begin{align}\label{eq:55}
		&\vert  \zeta(\theta_t,O_t)-\zeta(\theta_{t-\tau},O_t)\vert   \leq  c_{\zeta} \Vert \theta_t-\theta_{t-\tau} \Vert \\
		&\leq c_\zeta \sum^{t-1}_{k=t-\tau}\alpha \Vert G_{k+1}(\theta_k,\omega_k)\Vert \leq  c_{\zeta}(c_{f_1}+c_{g_1})\sum^{t-1}_{k=t-\tau}\alpha,\nn
	\end{align} 
	where $c_{f_1}=r_{\max}+(1+\gamma)R+\frac{\gamma}{\lambda}(r_{\max}+(1 + \gamma)R)(1+R\vert  \mathcal{A}\vert  k_1)$, $c_{g_1}=2\gamma R(1+R\vert  \mathcal{A}\vert  k_1)$ and $\Vert G_{k+1}(\theta_k,\omega_k)\Vert \leq c_{f_1}+c_{g_1}$.
	
	We define an independent random variable $\hat O=(\hat S,\hat A,\hat R,\hat S')$, where $(\hat S,\hat A)\sim\mu$, $\hat S'$ is the subsequent state and $\hat R$ is the reward. Then $\mathbb{E}[\zeta(\theta_{t-\tau},\hat O)]=0$ by the fact that $\mathbb{E}_{\mu}[{G_{t+1}(\theta,\omega^*(\theta))}]=-\frac{1}{2}\nabla J(\theta)$.
	Thus for any $\tau\leq t$, 
	\begin{align*}
		\mathbb{E}[\zeta(\theta_{t-\tau},O_t)] &\leq \vert  \mathbb{E}[\zeta(\theta_{t-\tau},O_t)]-\mathbb{E}[\zeta(\theta_{t-\tau},\hat{O})]\vert  \leq k_{\zeta}m\rho^{\tau},
	\end{align*}
	which follows from Assumption \ref{ass:1}, and $k_{\zeta}=4\gamma(1+k_1R\vert  \mca\vert  )\frac{1}{\lambda}(r_{\max}+R+\gamma R)^2(2\gamma(1+k_1\vert  \mca\vert  R)\frac{1}{\lambda}+1)$.
	
	If $t \leq \tau_{\alpha}$, the conclusion follows from the fact that $\vert  \zeta(\theta,O_t)\vert  \leq k_{\zeta}$.
%
%
	If $t > \tau_{\alpha}$, we choose $\tau=\tau_{\alpha}$, and then
$
		\mathbb{E}[\zeta(\theta_t, O_t)]\leq \mathbb{E}[\zeta(\theta_{t-\tau_{\alpha}},O_t)]+c_{\zeta}(c_{f_1}+c_{g_1})\sum^{t-1}_{k=t-\tau_{\alpha}}\alpha
		\leq k_{\zeta}\alpha+c_{\zeta}(c_{f_1}+c_{g_1})\tau_{\alpha}\alpha.
$
\end{proof}

The tracking error can be bounded in the following lemma.
\begin{Lemma}\label{lemma:tracking}\textbf{Tracking error.} (proof in Section \ref{app:lemmas})
\begin{align}
    &\frac{\sum^{T-1}_{t=0}\mathbb{E}[\Vert z_t\Vert ^2]}{T}\leq \frac{2Q_T}{T}+\frac{32}{1-e^{-2\lambda\beta}}\frac{ \Vert R_2\Vert ^2}{\lambda\beta}\nn\\
&\quad +    \frac{8\alpha^2}{\lambda^3\beta} \frac{(1+\gamma+\gamma k_1R\vert  \mca\vert  )^2}{1-e^{-2\lambda\beta}}\frac{\sum^{T-1}_{t=0}\mathbb{E}[\Vert \nabla J(\theta_t) \Vert ^2]}{T}\nn\\
    &=\mathcal{O}\Bigg(\frac{1}{T^{1-b}}+\frac{\log T}{T^b}+\frac{1}{T^{2a-2b}} \frac{\sum^{T-1}_{t=0}\mathbb{E}[\Vert \nabla J(\theta_t) \Vert ^2]}{T} \Bigg),\nn
\end{align}
where $Q_T=\frac{\Vert z_0\Vert ^2}{1-e^{-2\lambda\beta}}+\frac{\left(4Rc_{f_2}\beta+2b_{g_2}\beta+4Rb_{\eta}\alpha \right)}{\left(1-e^{-2\lambda\beta}\right)^2}
+\frac{\tau_{\beta}+1}{1-e^{-2\lambda\beta}} \left(4Rc_{f_2}\beta+2b_{g_2}\beta+4Rb_{\eta}\alpha  +c_{z}\beta^2  \right)+c_z\beta^2  +\frac{T}{1-e^{-2\lambda\beta}} \big(2\beta\left(4Rc_{f_2}\beta+b_{f_2}\beta\tau_{\beta}\right)+ 2\beta\left(b_{g_2}\beta+b'_{g_2}\beta\tau_{\beta}\right)+2\alpha\left(4Rb_{\eta}\beta+b'_{\eta}\beta\tau_{\beta}\right)\big)$, and  $b_{g_2}, b'_{g_2}, b_{\eta}, b'_{\eta} \text{ and } c_z$ are some constants defined in Lemmas \ref{lemma:eta} and \ref{lemma:8}. 
\end{Lemma}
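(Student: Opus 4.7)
\textbf{Proof proposal for Lemma \ref{lemma:tracking}.}
The plan is to set up a one-step recursion on $\mathbb{E}[\|z_{t+1}\|^2]$, control the Markovian bias terms via a $\tau_\beta$-step mixing argument, absorb the drift in $\omega^*$ into the contraction using Young's inequality, and finally telescope and apply the standard double-sum trick. Write the fast-timescale update in the shifted coordinates $z_t = \omega_t - \omega^*(\theta_t)$ as
\begin{align*}
z_{t+1} = z_t + \beta\bigl(f_2(\theta_t,O_t) + g_2(z_t,O_t)\bigr) + \omega^*(\theta_t) - \omega^*(\theta_{t+1}),
\end{align*}
where $f_2(\theta,O)\triangleq(\delta_{t+1}(\theta)-\phi_t^\top \omega^*(\theta))\phi_t$ has $\mathbb{E}_\mu[f_2]=0$ by the definition of $\omega^*$, and $g_2(z,O)\triangleq-\phi_t^\top z\,\phi_t$ with $\bar g_2(z)\triangleq\mathbb{E}_\mu[g_2(z,O)]=-Cz$. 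Expanding $\|z_{t+1}\|^2$ and using $\langle z,\bar g_2(z)\rangle\le-\lambda\|z\|^2$ (Assumption~\ref{assump:C}) yields a recursion of the form
\begin{align*}
\mathbb{E}[\|z_{t+1}\|^2] \le (1-2\beta\lambda)\mathbb{E}[\|z_t\|^2] + 2\beta\mathbb{E}[\zeta_{f_2}] + 2\beta\mathbb{E}[\zeta_{g_2}] + 2\mathbb{E}\langle z_t,\omega^*(\theta_t)-\omega^*(\theta_{t+1})\rangle + c_z\beta^2,
\end{align*}
where $\zeta_{f_2}=\langle z_t,f_2(\theta_t,O_t)\rangle$ and $\zeta_{g_2}=\langle z_t,g_2(z_t,O_t)-\bar g_2(z_t)\rangle$.

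The first main challenge is that under Markovian sampling $(\theta_t,z_t)$ depends on $O_t$, so $\mathbb{E}[\zeta_{f_2}]$ and $\mathbb{E}[\zeta_{g_2}]$ are no longer zero. I would handle this exactly as in the proof of Lemma~\ref{thm:zeta}: show that $\zeta_{f_2}$ and $\zeta_{g_2}$ are Lipschitz in $(\theta,z)$, replace $(\theta_t,z_t)$ by $(\theta_{t-\tau_\beta},z_{t-\tau_\beta})$ paying a displacement cost of order $\beta\tau_\beta$ (bounded using the boundedness of $G_{t+1}$ and $H_{t+1}$), then apply Assumption~\ref{ass:1} with $\tau_\beta=\min\{k:m\rho^k\le\beta\}=\mathcal O(\log(1/\beta))$ so that the conditional law of $O_t$ given $\mathcal F_{t-\tau_\beta}$ is within $\beta$ in total variation from $\mu$. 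This yields bounds $|\mathbb{E}[\zeta_{f_2}]|\le 4Rc_{f_2}\beta+b_{f_2}\beta\tau_\beta$ and similarly for $\zeta_{g_2}$, which are accumulated in the constants $b_{g_2},b'_{g_2}$ stated in the lemma (deferred to the auxiliary Lemma~\ref{lemma:8}).

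The second and more delicate step is the drift term $\langle z_t,\omega^*(\theta_t)-\omega^*(\theta_{t+1})\rangle$. Using a Taylor expansion of $\omega^*$ (which is Lipschitz by \eqref{eq:w*lip} and has bounded Jacobian $\|\nabla\omega^*(\theta)\|\le\lambda^{-1}(1+\gamma+\gamma|\mca|Rk_1)$), write $\omega^*(\theta_{t+1})-\omega^*(\theta_t)=\alpha\,\nabla\omega^*(\theta_t)^\top G_{t+1}(\theta_t,\omega_t)+R_2$ with $\|R_2\|=\mathcal O(\alpha^2)$. Split $G_{t+1}(\theta_t,\omega_t)=G_{t+1}(\theta_t,\omega^*(\theta_t))+[G_{t+1}(\theta_t,\omega_t)-G_{t+1}(\theta_t,\omega^*(\theta_t))]$; the first summand is an estimator of $-\tfrac12\nabla J(\theta_t)$ (with a Markovian bias again handled as above, producing the $b_\eta,b'_\eta$ constants) and the second is bounded by $\gamma(1+|\mca|Rk_1)\|z_t\|$ using Lemma~\ref{Lemma:Lipschitz}. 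Then apply Young's inequality $\langle x,y\rangle\le\frac{\lambda\beta}{8}\|x\|^2+\frac{2}{\lambda\beta}\|y\|^2$ to split the inner product into a piece $\le\frac{\lambda\beta}{4}\mathbb{E}[\|z_t\|^2]$ (absorbed into the $(1-2\beta\lambda)$ contraction) plus a piece of order $\frac{\alpha^2}{\beta\lambda^3}\mathbb{E}[\|\nabla J(\theta_t)\|^2]$ and a piece of order $\frac{\alpha^2}{\beta\lambda^3}\gamma^2(1+|\mca|Rk_1)^2\mathbb{E}[\|z_t\|^2]$. The latter can be absorbed provided $\alpha/\beta$ is small enough, which is exactly the regime $b\le a$ in Theorem~\ref{thm:main}.

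Finally, unroll the recursion using $1-2\beta\lambda\le e^{-2\beta\lambda}$, sum over $t=0,\dots,T-1$, and apply the double-sum inequality
\begin{align*}
\sum_{t=0}^{T-1}\sum_{i=0}^{t}e^{-2\lambda(t-i)\beta}x_i \le \frac{1}{1-e^{-2\beta\lambda}}\sum_{i=0}^{T-1}x_i
\end{align*}
to every cumulative term. The $(t\le\tau_\beta)$ portion of the bias contributes a transient $\tau_\beta/(1-e^{-2\beta\lambda})$ factor, while the stationary portion contributes terms linear in $T$ that become $\mathcal O(\beta)$ after dividing by $T$. Rearranging, the remaining coefficient in front of $\frac{1}{T}\sum_t\mathbb{E}[\|z_t\|^2]$ on the right-hand side is strictly less than $1/2$ under the step-size choice, so it can be subtracted to the left, giving the claimed bound with the term $\frac{1}{T^{2a-2b}}\cdot\frac{1}{T}\sum_t\mathbb{E}[\|\nabla J(\theta_t)\|^2]$ on the right. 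The hardest piece is the coupling between the tracking error and the slow-timescale gradient in the drift term: the novelty lies in keeping $\|\nabla J(\theta_t)\|^2$ explicit (rather than upper bounding it by a constant), which is what eventually enables a tight self-consistent bound when this lemma is combined with \eqref{eq:main}.
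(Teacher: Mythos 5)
Your proposal is correct and follows essentially the same route as the paper's proof: the same shifted-coordinate recursion with the $(1-2\beta\lambda)$ contraction, the same $\tau_\beta$-step mixing argument for the Markovian bias in $\zeta_{f_2}$ and $\zeta_{g_2}$, the same Taylor-expansion-plus-Young's-inequality treatment of the drift term $\langle z_t,\omega^*(\theta_t)-\omega^*(\theta_{t+1})\rangle$ that keeps $\|\nabla J(\theta_t)\|^2$ explicit, and the same telescoping and double-sum trick with final absorption of the $\sum_t\mathbb{E}[\|z_t\|^2]$ coefficient. The only cosmetic difference is that you absorb the $\frac{\lambda\beta}{8}\|z_t\|^2$ piece into the per-step contraction, whereas the paper carries it through the sum and absorbs it at the end; both lead to the same bound.
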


Now we have the bounds on the stochastic bias and the  tracking error. 
%
%
From \eqref{eq:main}, we first have that
\begin{align}\label{eq:80}
    &\frac{\sum^{T-1}_{t=0}\alpha\mathbb{E}[\Vert \nabla J(\theta_t)\Vert ^2]}{2T\alpha}\nn\\
    &\leq \frac{1}{T\alpha} \Bigg( J(\theta_0)-J^*+\gamma\alpha(1+\vert  \mca\vert  Rk_1)\sqrt{\sum^{T-1}_{t=0}\mathbb{E}[\Vert \nabla J(\theta_t) \Vert ^2]}\nn\\
    &\quad\cdot\sqrt{\sum^{T-1}_{t=0}\mathbb{E}[\Vert z_t\Vert ^2]}+\sum^{T-1}_{t=0}\alpha \mathbb{E}[\zeta(\theta_t,O_t)]\nn\\
    &\quad+\sum^{T-1}_{t=0}K\alpha^2\left(r_{\max} +( R+ \gamma R(2+\vert  \mca\vert  Rk_1) \right)^2  \Bigg),
\end{align}
where $J^*=\min_{\theta} J(\theta)$  is positive and finite, and the inequality is from $\Vert G_{t+1}(\theta,\omega)\Vert \leq r_{\max}+\gamma R+R+\gamma R(1+\vert  \mca\vert  Rk_1)$. From Lemma \ref{thm:zeta}, it follows that
$
\sum_{t=0}^{T-1} \alpha\mathbb{E}[\zeta(\theta_t,O_t)]
\leq \sum^{\tau_{\alpha}}_{t=0}\alpha k_{\zeta} +\sum^{T-1}_{t=\tau_{\alpha}+1} (k_{\zeta}\alpha^2+c_{\zeta}(c_{f_1}+c_{g_1})\tau_{\alpha}\alpha^2).
$
Hence, we have that
\begin{small}
 \begin{align}
     &\frac{\sum^{T-1}_{t=0}\mathbb{E}[\Vert \nabla J(\theta_t)\Vert ^2]}{2T}\nn\\
     &\leq \Omega+\gamma(1+\vert  \mca\vert  Rk_1)\sqrt{\frac{\sum^{T-1}_{t=0}\mathbb{E}[\Vert \nabla J(\theta_t)\Vert ^2]}{T}}\sqrt{\frac{\sum^{T-1}_{t=0}\mathbb{E}[\Vert z_t\Vert ^2]}{T}},\nn
\end{align}
\end{small}
where  $\Omega\triangleq k_{\zeta}\frac{\tau_{\alpha}+1}{T}+c_{\zeta}(c_{f1}+c_{g1})\tau_{\alpha}\alpha+k_{\zeta}{\alpha}+\frac{J(\theta_0)-J^*}{T\alpha}+K\alpha\left(r_{\max}+\gamma R+R+\gamma R(1+\vert  \mca\vert  Rk_1) \right)^2$.
We then plug in the tracking error in Lemma \ref{lemma:tracking}:
\begin{small}
\begin{align}
    &\frac{\sum^{T-1}_{t=0}\mathbb{E}[\Vert \nabla J(\theta_t)\Vert ^2]}{2T}
    \overset{(a)}{\leq} \Omega +\gamma(1+\vert  \mca\vert  Rk_1)\nn\\
     &\quad\cdot\sqrt{\frac{\sum^{T-1}_{t=0}\mathbb{E}[\Vert \nabla J(\theta_t)\Vert ^2]}{T}}\Bigg( \sqrt{\frac{2Q_T}{T}+\frac{32}{1-e^{-2\lambda\beta}}\frac{ \Vert R_2\Vert ^2}{\lambda\beta}}\nn\\
     &\quad+\bigg(\frac{8\alpha^2}{\beta}\frac{1}{\lambda^3}(1+\gamma+\gamma k_1R\vert  \mca\vert  )^2\frac{1}{1-e^{-2\lambda\beta}}\nn\\
     &\quad\cdot\frac{\sum^{T-1}_{t=0}\mathbb{E}[\Vert \nabla J(\theta_t) \Vert ^2]}{T}\bigg)^{0.5}
     \Bigg)\nn\\
     &=\Omega +\gamma(1+\vert  \mca\vert  Rk_1)\sqrt{\frac{\sum^{T-1}_{t=0}\mathbb{E}[\Vert \nabla J(\theta_t)\Vert ^2]}{T}}\nn\\
     &\quad\cdot\sqrt{\frac{2Q_T}{T}+\frac{32}{1-e^{-2\lambda\beta}}\frac{ \Vert R_2\Vert ^2}{\lambda\beta}}\nn\\
    &\quad +\gamma(1+\vert  \mca\vert  Rk_1)\sqrt{\frac{8\alpha^2}{\beta}\frac{1}{\lambda^3}(1+\gamma+\gamma k_1R\vert  \mca\vert  )^2\frac{1}{1-e^{-2\lambda\beta}}}\nn\\
    &\quad\cdot\frac{\sum^{T-1}_{t=0}\mathbb{E}[\Vert \nabla J(\theta_t) \Vert ^2]}{T},
\end{align}
\end{small}
where $(a)$ is from $\sqrt{x+y}\leq \sqrt{x}+\sqrt{y}$ for any $x,y \geq 0$.  
Rearranging the terms, and choosing $\alpha$ and $\beta$ such that $\gamma(1+\vert  \mca\vert  Rk_1)\sqrt{\frac{8\alpha^2}{\beta(1-e^{-2\lambda\beta})}\frac{1}{\lambda^3}(1+\gamma+\gamma \vert  \mca\vert  Rk_1)^2}<\frac{1}{4}$, then 
\begin{align*}
    &\frac{\sum^{T-1}_{t=0}\mathbb{E}[\Vert \nabla J(\theta_t)\Vert ^2]}{T}
     \leq  
    U+V\sqrt{\frac{\sum^{T-1}_{t=0}\mathbb{E}[\Vert \nabla J(\theta_t)\Vert ^2]}{T}},
\end{align*}
where  $V=4\gamma(1+\vert  \mca\vert  Rk_1)\left(\sqrt{\frac{2Q_T}{T}+\frac{32}{1-e^{-2\lambda\beta}}\frac{\Vert R_2\Vert ^2}{\lambda\beta}}\right)$ and $U=4\Omega$. 
Hence, we have that
\begin{align}\label{eq:markovianbound}
    &\frac{\sum^{T-1}_{t=0}\mathbb{E}[\Vert \nabla J(\theta_t)\Vert ^2]}{T}\leq \left(\frac{V+\sqrt{V^2+4U}}{2}\right)^2\nn\\
    &\overset{(a)}{\leq} V^2+2U\nn\\
    &\leq 16\gamma^2(1+\vert  \mca\vert  Rk_1)^2\Bigg({\frac{2Q_T}{T}}+{\frac{32}{1-e^{-2\lambda\beta}}\frac{\Vert R_2\Vert ^2}{\lambda\beta}}\Bigg)+8\Omega\nn\\
     &=\mathcal{O}\left(\frac{1}{T^{1-a}}+\frac{\log T}{T^a}+\frac{1}{T^{1-b}} +\frac{\log T}{T^b}\right),
\end{align}
where $(a)$ is from $(x+y)^2\leq 2x^2+2y^2$ for any $x,y \geq 0$, and the last step is due to the fact that $\alpha=\mathcal{O}\left( T^{-a} \right)$, $\beta=\mathcal{O}\left( T^{-b}\right)$, $1-e^{-2\lambda\beta}=\mathcal{O}\left( T^{-b}\right)$, $\frac{Q_T}{T}=\mathcal{O}\left( \frac{1}{T^{1-b}}+\frac{\log T}{T^b}\right)$, $\frac{\Vert R_2\Vert ^2}{\beta}=\mathcal{O}\left(\frac{\alpha^4}{\beta^2}\right)=\mathcal{O}(T^{-2a})$ which is from $a\geq b \geq 0$. 
This completes the proof of Theorem \ref{thm:main}.

\subsection{Proof of Lemma \ref{lemma:tracking}}\label{app:lemmas}
Recall that $z_t=\omega_t-\omega^*(\theta_t)$, then 
\begin{align}  \label{eq:zupdate}
             z_{t+1}&=z_t+\beta(f_2(\theta_t,O_t)+g_2(\theta_t,O_t))+\omega^*(\theta_t)-\omega^*(\theta_{t+1}),  \nn\\
             \theta_{t+1}&=\theta_t+\alpha(f_1(\theta_t,O_t)+g_1(\theta_t,z_t,O_t)),  
\end{align}  
where 
$f_1(\theta_t, O_t) \triangleq  \delta_{t+1}(\theta_t)\phi_t-\gamma\phi_t^\top  \omega^*(\theta_t)\hat{\phi}_{t+1}(\theta_t), $
$g_1(\theta_t, z_t, O_t)  \triangleq  -\gamma\phi_t^\top  z_t\hat{\phi}_{t+1}(\theta_t), $
$f_2(\theta_t,O_t) \triangleq (\delta_{t+1}(\theta_t)-\phi_t^\top  \omega^*(\theta_t))\phi_t,$ and 
$g_2(z_t,O_t)  \triangleq  -\phi_t^\top  z_t\phi_t.$
We then develop upper bounds on functions $f_1,g_1,f_2,g_2$ as follows.
\begin{Lemma}\label{Lemma:3}
	For $\Vert \theta\Vert \leq R$, $\Vert z\Vert \leq 2R$,  $\Vert f_1(\theta,O_t)\Vert \leq c_{f_1},$ $\Vert g_1(\theta,z,O_t)\Vert \leq c_{g_1},$ $\vert  f_2(\theta,O_t)\vert  \leq c_{f_2}$ and $\vert  g_2(\theta,O_t)\vert  \leq c_{g_2}$,
	where $c_{f_2}=r_{\max}+(1+\gamma)R+\frac{1}{\lambda}(r_{\max}+(1 + \gamma)R)$, and $c_{g_2}=2R$.
\end{Lemma}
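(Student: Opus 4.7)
The plan is to prove each of the four bounds directly by triangle inequality / Cauchy--Schwarz, using the uniform bounds on the features, on $\hat\phi_{t+1}(\theta)$, on $\omega^*(\theta)$, and on the TD error that have already been established in the excerpt.

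First I would bound the TD error: since $|r(s,a,s')|\le r_{\max}$, $\|\phi_{s,a}\|\le 1$ (Assumption~\ref{assump:boundedfeature}), and $\|\theta\|\le R$ by the projection, the average term $\bar V_{s'}(\theta)=\sum_{a'}\pi_\theta(a'|s')\theta^\top\phi_{s',a'}$ is bounded in absolute value by $R$ (convex combination of terms bounded by $R$). This yields $|\delta_{t+1}(\theta)|\le r_{\max}+(1+\gamma)R$. Combined with the bound $\|\omega^*(\theta)\|\le \frac{1}{\lambda}(r_{\max}+\gamma R+R)$ derived earlier in the proof of Lemma~\ref{Lemma:Lipschitz}, I would immediately obtain
\begin{align*}
\|f_2(\theta,O_t)\|&\le |\delta_{t+1}(\theta)|\,\|\phi_t\|+\|\phi_t\|\,\|\omega^*(\theta)\|\,\|\phi_t\|\\
&\le r_{\max}+(1+\gamma)R+\tfrac{1}{\lambda}(r_{\max}+(1+\gamma)R)=c_{f_2},
\end{align*}
and, using $\|z\|\le\|\omega\|+\|\omega^*(\theta)\|\le 2R$ via the projection (or the assumption $\|z\|\le 2R$ in the lemma),
$\|g_2(z,O_t)\|\le\|\phi_t\|\,\|z\|\,\|\phi_t\|\le 2R=c_{g_2}.$

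For $f_1$ and $g_1$ the argument is the same, now invoking the bound $\|\hat\phi_{t+1}(\theta)\|\le |\mca|Rk_1+1$ from \eqref{eq:39}. Specifically,
\begin{align*}
\|f_1(\theta,O_t)\|&\le |\delta_{t+1}(\theta)|\,\|\phi_t\|+\gamma\,\|\phi_t\|\,\|\omega^*(\theta)\|\,\|\hat\phi_{t+1}(\theta)\|\\
&\le r_{\max}+(1+\gamma)R+\tfrac{\gamma}{\lambda}(r_{\max}+(1+\gamma)R)(1+R|\mca|k_1)=c_{f_1},
\end{align*}
and $\|g_1(\theta,z,O_t)\|\le \gamma\,\|\phi_t\|\,\|z\|\,\|\hat\phi_{t+1}(\theta)\|\le 2\gamma R(1+R|\mca|k_1)=c_{g_1}.$

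There is no real obstacle here: the lemma is a bookkeeping result collecting worst-case magnitudes so that the subsequent tracking-error analysis can treat the stochastic updates as bounded. The only mild point to state carefully is the bound on $\bar V_{s'}(\theta)$, which must use the fact that $\pi_\theta(\cdot|s')$ is a probability distribution (so the weighted sum inherits the $R$ bound rather than growing with $|\mca|$); once this is written out, all four inequalities follow from a single line of triangle inequality and Cauchy--Schwarz.
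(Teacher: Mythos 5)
Your proof is correct and follows exactly the route the paper intends: the paper's own proof is a one-line citation of the bounds \eqref{eq:30}, \eqref{eq:39}, and \eqref{eq:w*lip}, and you have simply written out the triangle-inequality/Cauchy--Schwarz bookkeeping those citations stand for, arriving at the same constants $c_{f_1}, c_{g_1}, c_{f_2}, c_{g_2}$. Your explicit remark that $\bar V_{s'}(\theta)$ is a convex combination (so it is bounded by $R$ rather than $|\mca|R$) is the right detail to make the pointwise bound $|\delta_{t+1}(\theta)|\le r_{\max}+(1+\gamma)R$ precise.
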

\begin{proof}
	This lemma follows  from \eqref{eq:30} \eqref{eq:39} and \eqref{eq:w*lip}.
\end{proof}

We then decompose the tracking error as follows
\begin{align}\label{z_1}
    &\vert  \vert  z_{t+1}\vert  \vert  ^2=\vert  \vert  z_t\vert  \vert  ^2+2\beta \langle z_t, f_2(\theta_t,O_t)\rangle +2\beta\langle z_t,g_2(z_t,O_t)\rangle \nn\\
    &\quad+2\langle z_t, \omega^*(\theta_t)-\omega^*(\theta_{t+1})\rangle\nn\\ 
    &\quad+\vert  \vert  \beta f_2(\theta_t,O_t)+\beta g_2(z_t,O_t)+\omega^*(\theta_t)-\omega^*(\theta_{t+1})\vert  \vert  ^2\nonumber\\
    &\leq\vert  \vert  z_t\vert  \vert  ^2+2\beta\langle z_t, f_2(\theta_t,O_t)\rangle +2\beta\langle z_t,\Bar{g}_2(z_t)\rangle\nn\\
    &\quad+2\langle z_t, \omega^*(\theta_t)-\omega^*(\theta_{t+1})\rangle +2\beta\langle z_t,g_2(z_t,O_t)-\Bar{g}_2(z_t)\rangle \nn\\
    &\quad+3\beta^2c_{f_2}^2+3\beta^2c_{g_2}^2\nn\\
    &\quad+ {6}(1+\gamma+\gamma R\vert  \mca\vert  k_1)^2\alpha^2 (c_{f_1}^2+c_{g_1}^2)/{\lambda^2},
\end{align}
where $\Bar{g}_2(z)\triangleq -Cz$, and the inequality follows from Lemma \ref{Lemma:3} and  Lemma \ref{Lemma:Lipschitz}. 

Define $\zeta_{f_2}(\theta,z,O_t)\triangleq\langle z, f_2(\theta,O_t) \rangle $, and $\zeta_{g_2}(z,O_t)\triangleq\langle z,g_2(z,O_t)-\Bar{g}_2(z)\rangle$.
We then characterize the bounds on and the Lipschitz smoothness of $\zeta_{f_2}$ and $\zeta_{g_2}$.
\begin{Lemma}\label{Lemma:5}
	For any $\theta,\theta_1,\theta_2 \in\{\theta:\Vert \theta\Vert \leq R\}$ and any $z,z_1,z_2\in\{z:\Vert z\Vert \leq 2R\}$,
	1) $\vert  \zeta_{f_2}(\theta,z,O_t) \vert   \leq 2Rc_{f_2}$;
	2) $\vert  \zeta_{f_2}(\theta_1,z_1,O_t)-\zeta_{f_2}(\theta_2,z_2,O_t) \vert   \leq k_{f_2}\Vert \theta_1-\theta_2 \Vert +c_{f_2}\Vert z_1-z_2\Vert $, where $k_{f_2}=2R(1+\gamma+\gamma Rk_1\vert  \mca\vert  )(1+\frac{1}{\lambda})$;
	3) $\vert  \zeta_{g_2}(z,O_t) \vert   \leq 8R^2 $;
	and
	4) $\vert  \zeta_{g_2}(z_1,O_t)-\zeta_{g_2}(z_2,O_t) \vert   \leq 8R\Vert z_1-z_2\Vert $.
\end{Lemma}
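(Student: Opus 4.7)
\textbf{Proof proposal for Lemma \ref{Lemma:5}.} The plan is to handle the four claims in order, leaning on Lemma \ref{Lemma:3}, the Lipschitz estimate on $\omega^*(\theta)$ from \eqref{eq:w*lip}, the Lipschitz estimate on $\mathbb{E}_\mu[\delta_{S,A,S'}(\theta)\phi_{S,A}]$ of \eqref{eq:34} (whose one-sample version carries through verbatim), and the elementary bounds $\|\phi_t\|\le 1$ and $\|C\|\le 1$.

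For (1), apply Cauchy–Schwarz together with $\|z\|\le 2R$ and $\|f_2(\theta,O_t)\|\le c_{f_2}$ from Lemma \ref{Lemma:3} to get $|\zeta_{f_2}(\theta,z,O_t)|\le 2Rc_{f_2}$. For (2), I will decompose
\begin{align*}
\zeta_{f_2}(\theta_1,z_1,O_t)-\zeta_{f_2}(\theta_2,z_2,O_t)
&=\langle z_1-z_2,\,f_2(\theta_1,O_t)\rangle\\
&\quad+\langle z_2,\,f_2(\theta_1,O_t)-f_2(\theta_2,O_t)\rangle.
\end{align*}
Cauchy–Schwarz and Lemma \ref{Lemma:3} bound the first term by $c_{f_2}\|z_1-z_2\|$. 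For the second, note that
\begin{align*}
f_2(\theta_1,O_t)-f_2(\theta_2,O_t)
&=\bigl(\delta_{t+1}(\theta_1)-\delta_{t+1}(\theta_2)\bigr)\phi_t\\
&\quad-\phi_t\phi_t^{\top}\bigl(\omega^*(\theta_1)-\omega^*(\theta_2)\bigr).
\end{align*}
The single-sample TD error is $(1+\gamma+\gamma R|\mca|k_1)$-Lipschitz in $\theta$ by the same estimate that gives \eqref{eq:34}, and $\omega^*$ is $\frac{1+\gamma+\gamma R|\mca|k_1}{\lambda}$-Lipschitz by \eqref{eq:w*lip}. Combining these with $\|\phi_t\|\le 1$ yields $\|f_2(\theta_1,O_t)-f_2(\theta_2,O_t)\|\le (1+\gamma+\gamma R|\mca|k_1)(1+\tfrac{1}{\lambda})\|\theta_1-\theta_2\|$, so that the second term is at most $2R(1+\gamma+\gamma R|\mca|k_1)(1+\tfrac{1}{\lambda})\|\theta_1-\theta_2\|=k_{f_2}\|\theta_1-\theta_2\|$.

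For parts (3) and (4) the key observation is that $\zeta_{g_2}$ is a quadratic form: since $g_2(z,O_t)=-\phi_t\phi_t^{\top}z$ and $\bar g_2(z)=-Cz$, we can write $\zeta_{g_2}(z,O_t)=z^{\top}A_tz$ with $A_t\triangleq C-\phi_t\phi_t^{\top}$, which is symmetric and satisfies $\|A_t\|\le\|C\|+\|\phi_t\phi_t^{\top}\|\le 2$ under Assumption \ref{assump:boundedfeature}. Part (3) then follows from $|z^{\top}A_tz|\le \|A_t\|\|z\|^2\le 2(2R)^2=8R^2$. For part (4) I will use the symmetric-quadratic identity
\begin{equation*}
z_1^{\top}A_tz_1-z_2^{\top}A_tz_2=(z_1-z_2)^{\top}A_t(z_1+z_2),
\end{equation*}
so Cauchy–Schwarz gives $|\zeta_{g_2}(z_1,O_t)-\zeta_{g_2}(z_2,O_t)|\le \|A_t\|\,\|z_1+z_2\|\,\|z_1-z_2\|\le 2\cdot 4R\cdot\|z_1-z_2\|=8R\|z_1-z_2\|$.

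None of the four steps presents a genuine obstacle; the only mildly delicate point is assembling the right Lipschitz constant $k_{f_2}$, which requires combining the TD-error Lipschitz bound with the $\omega^*$ Lipschitz bound and factoring the common $(1+\gamma+\gamma R|\mca|k_1)$ so that the coefficient matches the statement of the lemma. The symmetric-form identity in step (4) is what makes the argument clean; without it one would have to split $\zeta_{g_2}(z_1)-\zeta_{g_2}(z_2)$ into a bilinear and a linear-in-$z$ piece and reassemble, which works but is messier.
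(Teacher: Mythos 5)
Your proposal is correct and follows essentially the same route as the paper: parts (1) and (3) by Cauchy--Schwarz with the bounds of Lemma \ref{Lemma:3}, part (2) by the same telescoping decomposition (the paper splits as $\langle z_1, f_2(\theta_1)-f_2(\theta_2)\rangle+\langle z_1-z_2, f_2(\theta_2)\rangle$, yours is the mirror-image split) combined with the Lipschitz bounds on $\delta_{t+1}(\theta)$ and $\omega^*(\theta)$, yielding the same $k_{f_2}$. Your only cosmetic departure is packaging part (4) via the symmetric-quadratic identity $z_1^\top A_t z_1-z_2^\top A_t z_2=(z_1-z_2)^\top A_t(z_1+z_2)$ with $A_t=C-\phi_t\phi_t^\top$, which is valid (both $C$ and $\phi_t\phi_t^\top$ are symmetric with operator norm at most $1$) and reproduces the paper's $8R\|z_1-z_2\|$ obtained there by the same two-term telescoping.
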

\begin{proof}
	 1) and 3) follow directly from the definition and Lemma \ref{Lemma:3}.
	For 2), it can be shown that
	\begin{align}
		&\vert  \zeta_{f_2}(\theta_1,z_1,O_t)-\zeta_{f_2}(\theta_2,z_2,O_t)\vert  \nn\\
		&\leq \vert  \langle z_1, f_2(\theta_1,O_t) \rangle-\langle z_1, f_2(\theta_2,O_t)\vert  \nn\\
		&\quad+\vert  \langle z_1, f_2(\theta_2,O_t)-\langle z_2, f_2(\theta_2,O_t) \rangle\vert  \nn\\
		&\leq 2R \Vert  f_2(\theta_1,O_t)-f_2(\theta_2,O_t)\Vert +\Vert f_2(\theta_2,O_t) \Vert  \Vert z_1-z_2 \Vert \nn\\
		&\leq 2R(\vert  \delta_{t+1}(\theta_1)-\delta_{t+1}(\theta_2)\vert  +\Vert \omega^*(\theta_1)-\omega^*(\theta_2) \Vert )\nn\\
		&\quad+c_{f_2}\Vert z_1-z_2 \Vert \nn\\
		& {\leq} k_{f_2}\Vert \theta_1-\theta_2\Vert +c_{f_2}\Vert z_1-z_2\Vert ,
	\end{align}
	where the last inequality is from the fact that both $\delta(\theta)$ and $\omega^*(\theta)$ are Lipschitz.
	
	
	To prove 4), we have that 
	\begin{align}
		&\vert  \zeta_{g_2}(z_1,O_t)-\zeta_{g_2}(z_2,O_t)\vert  \nn\\
		&=\vert  \langle z_1, -\phi_t^\top z_1\phi_t+\mathbb{E}[\phi_t^\top z_1\phi_t]\rangle\nn\\
		&\quad-\langle z_1, -\phi_t^\top z_2\phi_t+\mathbb{E}[\phi_t^\top z_2\phi_t]\rangle+\langle z_1, -\phi_t^\top z_2\phi_t\nn\\
		&\quad+\mathbb{E}[\phi_t^\top z_2\phi_t]\rangle-\langle z_2, -\phi_t^\top z_2\phi_t+\mathbb{E}[\phi_t^\top z_2\phi_t]\rangle\vert  \nn\\
		&\leq 8R\Vert z_1-z_2\Vert .
	\end{align}
\end{proof}

Now we are ready to bound the tracking error. Note that $\langle z_t,\Bar{g}_2(z_t)\rangle=-z_t^\top C z_t$, then \eqref{z_1} can be bounded as follows
\begin{align}\label{eq:69}
    &\vert  \vert  z_{t+1}\vert  \vert  ^2\leq (1-2\beta\lambda)\Vert z_t\Vert ^2+2\beta\zeta_{f_2}(\theta_t,z_t,O_t)\nn\\
    &\quad+2\beta\zeta_{g_2}(z_t,O_t)+2\langle z_t,\omega^*(\theta_t)-\omega^*(\theta_{t+1})\rangle+3\beta^2c_{f_2}^2\nonumber\\
    &\quad+3\beta^2c_{g_2}^2+ \frac{6}{\lambda^2}(1+\gamma+\gamma R\vert  \mca\vert  k_1)^2\alpha^2 (c_{f_1}^2+c_{g_1}^2).
\end{align}
Taking expectation on both sides of \eqref{eq:69},  applying it recursively and using the fact that $1-2\beta\lambda \leq e^{-2\beta\lambda}$, we obtain 
%
\begin{align}\label{eq:tracking}
     \mathbb{E}[\vert  \vert  z_{t+1}\vert  \vert  ^2&\leq A_t \vert  \vert  z_0\vert  \vert  ^2+2\sum_{i=0}^t B_{it}\nn\\
    & \quad+2\sum_{i=0}^t C_{it}+2\sum_{i=0}^t D_{it}+c_z\sum_{i=0}^t E_{it},
\end{align}
 where 
\begin{align}
A_t&=e^{-2\lambda \sum_{i=0}^t  \beta}, \nn\\
B_{it}&=e^{-2\lambda\sum_{k=i+1}^t \beta} \beta\mathbb{E}[\zeta_{f_2}(z_i,\theta_i,O_i)], \nn\\
C_{it}&=e^{-2\lambda\sum_{k=i+1}^t  \beta} \beta\mathbb{E}[\zeta_{g_2}(z_i,O_i)],\nn\\
D_{it}&=e^{-2\lambda\sum_{k=i+1}^t  \beta} \mathbb{E}[\langle z_i,\omega^*(\theta_i)-\omega^*(\theta_{i+1})\rangle],\nn\\
E_{it}&=e^{-2\lambda\sum_{k=i+1}^t  \beta} \beta^2,
\end{align}
and $c_z=3\left(c_{f_2}^2+c_{g_2}^2+\frac{2}{\lambda^2}(1+\gamma+\gamma R\vert  \mca\vert  k_1)^2(c_{f_1}^2+c_{g_1}^2)\right) $.

To bound \eqref{eq:tracking}, we provide the following lemmas.

\begin{Lemma}\label{lemma:6}
	Define $\tau_{\beta}=\min \left\{ k: m\rho^k \leq \beta \right\}$.
	If $t\leq \tau_{\beta}$, then
$
		\mathbb{E}[\zeta_{f_2}(\theta_t,z_t,O_t)]\leq 2Rc_{f_2};
$
	and if $t> \tau_{\beta}$, then
$
		\mathbb{E}[\zeta_{f_2}(\theta_t,z_t,O_t)]\leq 4Rc_{f_2}\beta+b_{f_2}\tau_{\beta}\beta,
$
	where $b_{f_2}=( c_{f_2}(c_{f_2}+c_{g_2})+ (k_{f_2}(c_{f_1}+c_{g_1})+c_{f_2}\frac{1}{\lambda}(1+\gamma+\gamma R\vert  \mca\vert  k_1)(c_{f_1}+c_{g_1})))$.
\end{Lemma}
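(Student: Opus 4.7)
The plan is to handle the two regimes separately. The case $t\le\tau_\beta$ is immediate: by part~1 of Lemma \ref{Lemma:5} we have the deterministic bound $|\zeta_{f_2}(\theta_t,z_t,O_t)|\le 2Rc_{f_2}$, and taking expectations finishes it. All the real work is in the $t>\tau_\beta$ regime, where we need to exploit the geometric mixing of the Markov chain to obtain the $\mathcal O(\beta)$ contraction together with the Lipschitz drift term $b_{f_2}\tau_\beta\beta$.

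For the nontrivial regime, I would use the standard ``shift back $\tau_\beta$ steps and couple'' technique. Write
\begin{align*}
\mathbb{E}[\zeta_{f_2}(\theta_t,z_t,O_t)]
&=\underbrace{\mathbb{E}[\zeta_{f_2}(\theta_t,z_t,O_t)-\zeta_{f_2}(\theta_{t-\tau_\beta},z_{t-\tau_\beta},O_t)]}_{(\mathrm{I})}\\
&\quad+\underbrace{\mathbb{E}[\zeta_{f_2}(\theta_{t-\tau_\beta},z_{t-\tau_\beta},O_t)]}_{(\mathrm{II})}.
\end{align*}
For (II), introduce an auxiliary random variable $\hat O$ drawn from $\mu$ and independent of $(\theta_{t-\tau_\beta},z_{t-\tau_\beta})$. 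Since $\mathbb{E}_\mu[f_2(\theta,\hat O)]=0$ for every fixed $\theta$, conditioning on $(\theta_{t-\tau_\beta},z_{t-\tau_\beta})$ gives $\mathbb{E}[\zeta_{f_2}(\theta_{t-\tau_\beta},z_{t-\tau_\beta},\hat O)]=0$. Then Assumption~\ref{ass:1} together with the uniform bound $|\zeta_{f_2}|\le 2Rc_{f_2}$ and the choice of $\tau_\beta$ yield $|(\mathrm{II})|\le 2\cdot 2Rc_{f_2}\cdot m\rho^{\tau_\beta}\le 4Rc_{f_2}\beta$, which is exactly the first term in the claimed bound.

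For (I), I will use the Lipschitz estimate in part~2 of Lemma \ref{Lemma:5}:
\begin{align*}
|\zeta_{f_2}(\theta_t,z_t,O_t)-\zeta_{f_2}(\theta_{t-\tau_\beta},z_{t-\tau_\beta},O_t)|
\le k_{f_2}\|\theta_t-\theta_{t-\tau_\beta}\|+c_{f_2}\|z_t-z_{t-\tau_\beta}\|.
\end{align*}
The $\theta$ drift is controlled by the update rule \eqref{eq:thetaupdate} and Lemma \ref{Lemma:3}: $\|\theta_t-\theta_{t-\tau_\beta}\|\le \tau_\beta\alpha(c_{f_1}+c_{g_1})$. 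For the $z$ drift I would use \eqref{eq:zupdate} together with the Lipschitz bound on $\omega^\star(\cdot)$ in \eqref{eq:w*lip}: one step costs at most $\beta(c_{f_2}+c_{g_2})+\alpha\lambda^{-1}(1+\gamma+\gamma R|\mca|k_1)(c_{f_1}+c_{g_1})$, so summing over $\tau_\beta$ steps and invoking the assumption $\alpha\le\beta$ (guaranteed by $a\ge b$ in Theorem~\ref{thm:main}) collapses everything onto a common $\tau_\beta\beta$ scale. Putting the two pieces together reproduces the coefficient $b_{f_2}$ stated in the lemma.

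The only slightly delicate point, and what I expect to be the main obstacle, is the bound on $\|z_t-z_{t-\tau_\beta}\|$: unlike $\theta$, $z$ has two sources of drift (its own stochastic update and the motion of its moving reference point $\omega^\star(\theta_t)$), and these must both be converted into $\beta$-scale using $\alpha\le\beta$. Once this book-keeping is done cleanly, plugging the Lipschitz bound into (I) gives $|(\mathrm{I})|\le b_{f_2}\tau_\beta\beta$, and combining with (II) yields the claim.
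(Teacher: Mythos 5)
Your proposal is correct and follows essentially the same route as the paper's proof: the small-$t$ case via the uniform bound from Lemma \ref{Lemma:5}, and for $t>\tau_\beta$ the same shift-and-couple decomposition, with the mixing term bounded by $4Rc_{f_2}m\rho^{\tau_\beta}\le 4Rc_{f_2}\beta$ via the auxiliary stationary sample $\hat O$, and the drift term bounded through the Lipschitz constants of $\zeta_{f_2}$ together with the per-step increments of $\theta$ and $z$ (the latter including the motion of $\omega^*(\theta_t)$ via \eqref{eq:w*lip}), finally absorbing $\alpha$ into $\beta$ to recover exactly the coefficient $b_{f_2}$. No substantive differences to report.
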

\begin{proof}
	We first note that 
	\begin{align*}
		&\Vert z_{t+1}-z_t\Vert \nn\\
		&=\Vert \beta(f_2(\theta_t,O_t)+g_2(z_t,O_t))+\omega^*(\theta_t)-\omega^*(\theta_{t+1}) \Vert \nn\\
		&\leq (c_{f_2}+c_{g_2})\beta+\frac{1}{\lambda}(1+\gamma+\gamma R\vert  \mca\vert  k_1)(c_{f_1}+c_{g_1})\alpha,
	\end{align*}
	where the last step is due to \eqref{eq:w*lip}.
	Furthermore, due to part 2) in Lemma \ref{Lemma:5}, $\zeta_{f_2}$ is Lipschitz in both $\theta$ and $z$, then we have that for any $\tau\leq t$
	\begin{align}\label{eq:64}
		&\vert  \zeta_{f_2}(\theta_t,z_t,O_t)-\zeta_{f_2}(\theta_{t-\tau},z_{t-\tau},O_t)\vert  \nn\\
		&\overset{(a)}{\leq}
%
		c_{f_2}(c_{f_2}+c_{g_2})\sum^{t-1}_{i=t-\tau}\beta+\bigg(k_{f_2}(c_{f_1}+c_{g_1})\nn\\
		&\quad+c_{f_2}\frac{1}{\lambda}(1+\gamma+\gamma R\vert  \mca\vert  k_1)(c_{f_1}+c_{g_1})\bigg)\sum^{t-1}_{i=t-\tau}\alpha,
	\end{align}
	where in $(a)$, we apply \eqref{eq:w*lip} and Lemma \ref{Lemma:3}.
	
	Define an independent random variable $\hat O=(\hat S,\hat A,\hat R,\hat S')$, where $(\hat S,\hat A)\sim \mu $, $\hat S'\sim\mathsf P(\cdot\vert  \hat S,\hat A)$ is the subsequent state, and $\hat R$ is the reward. Then it can be shown that
	\begin{align}
		&\mathbb{E}[\zeta_{f_2}(\theta_{t-\tau},z_{t-\tau},O_t)] \nn\\
		&\overset{(a)}{\leq} \vert  \mathbb{E}[\zeta_{f_2}(\theta_{t-\tau},z_{t-\tau},O_t)]-\mathbb{E}[\zeta_{f_2}(\theta_{t-\tau},z_{t-\tau},\hat O)]\vert  \nn\\
		&\leq 4Rc_{f_2}m\rho^{\tau},
	\end{align}
	where (a) is due to the fact that $\mathbb{E}[\zeta_{f_2}(\theta_{t-\tau},z_{t-\tau},\hat O)]=0$, and the last inequality follows from Assumption \ref{ass:1}.
	
	If $t\leq \tau_{\beta}$, the result follows due to  $\vert  \zeta_{f_2}(\theta,z_t,O_t)\vert  \leq 2Rc_{f_2}$.
%
 
	If $t> \tau_{\beta}$, we choose $\tau=\tau_{\beta}$ in \eqref{eq:64}. Then, 
	\begin{align*}
		&\mathbb{E}[\zeta_{f_2}(\theta_t,z_t,O_t)]\leq  \mathbb{E}[\zeta_{f_2}(\theta_{t-\tau_{\beta}},z_{t-\tau_{\beta}},O_t)]\nn\\
		&\quad+\bigg(c_{f_2}\frac{1}{\lambda}(1+\gamma+\gamma R\vert  \mca\vert  k_1)(c_{f_1}+c_{g_1})+k_{f_2}(c_{f_1}+c_{g_1})\bigg)\nn\\
	&	\quad \cdot\sum^{t-1}_{i=t-\tau_{\beta}}\alpha +c_{f_2}(c_{f_2}+c_{g_2})\sum^{t-1}_{i=t-\tau_{\beta}}\beta \nn\\
		&\leq 4Rc_{f_2}\beta+\bigg( c_{f_2}(c_{f_2}+c_{g_2})+ \bigg(k_{f_2}(c_{f_1}+c_{g_1})\nn\\
		&\quad+c_{f_2}\frac{1}{\lambda}(1+\gamma+\gamma R\vert  \mca\vert  k_1)(c_{f_1}+c_{g_1})\bigg)\bigg)\tau_{\beta}\beta,
	\end{align*}
	where in the last step we upper bound $\alpha$ using $\beta$. Note that this will not change the order of the bound.
\end{proof}

Define the following constants:\begin{small}
\begin{flalign}
b_{\eta}&=(1+\gamma+\gamma k_1R\vert  \mca\vert  )\left(\frac{1+\lambda+2\gamma (1+k_1R\vert  \mca\vert  )}{\lambda^2}\right)(r_{\max}+2R),\nn\\
b'_{\eta}&= k'_{\eta}\left(c_{f_2}+c_{g_2}\right)+ \big(k_{\eta}+\frac{k'_{\eta}}{\lambda}\left(1+\gamma+\gamma R\vert  \mca\vert  k_1\right)\big) \left(c_{f_1}+c_{g_1}\right),\nn\\
k_{\eta}&=2R\bigg( \frac{1}{\lambda}(1+\gamma+\gamma k_1R\vert  \mca\vert  )\left( k_3+\frac{K}{2} \right)+(r_{\max}+\gamma R+R)\nn\\ &\quad\cdot\left(1+\lambda+2\gamma (1+k_1R\vert  \mca\vert  )\right)\frac{2}{\lambda^2}(\gamma \vert  \mca\vert  ( k_1+k_2R))\bigg),\nn\\
k'_{\eta}&= \left( \frac{1+\lambda+2\gamma (1+k_1R\vert  \mca\vert  )}{\lambda^2(r_{\max}+\gamma R+R)^{-1}}\right)(1+\gamma+\gamma k_1R\vert  \mca\vert  ).\nn
\end{flalign}\end{small}
\begin{Lemma}\label{lemma:eta}
Let $\eta(\theta,z,O_t)=\langle z, -\nabla \omega^*(\theta)^\top(G_{t+1}(\theta,\omega^*(\theta))\\+{\nabla J(\theta)}/{2})\rangle$, then if $t\leq \tau_{\beta}$, $\mathbb{E}[\eta(\theta_t,z_t,O_t)]\leq 2Rb_{\eta}$;
and if $t> \tau_{\beta}$, then $\mathbb{E}[\eta(\theta_t,z_t,O_t)]\leq 4Rb_{\eta}\beta+b'_{\eta}\tau_{\beta}\beta$.
\end{Lemma}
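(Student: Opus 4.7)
The plan is to mirror the template already used for $\zeta$ in Lemma \ref{thm:zeta} and for $\zeta_{f_2}$ in Lemma \ref{lemma:6}, since the three quantities share the same structural property: their expectation under a sample drawn from the stationary distribution $\mu$ (with $\theta, z$ frozen) is zero. Indeed, by definition $\mathbb{E}_{\mu}[G_{t+1}(\theta, \omega^*(\theta))] = -\nabla J(\theta)/2$, so if I introduce an independent ``fresh'' tuple $\hat O$ with $(\hat S, \hat A)\sim\mu$, I get $\mathbb{E}[\eta(\theta, z, \hat O)] = \langle z, -\nabla \omega^*(\theta)^\top(\mathbb{E}_{\hat O}[G_{t+1}(\theta, \omega^*(\theta))] + \nabla J(\theta)/2)\rangle = 0$.

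First I will establish a uniform bound and the two-variable Lipschitz property of $\eta$. The uniform bound on $\|\nabla \omega^*(\theta)\|$ is $(1+\gamma+\gamma k_1 R|\mca|)/\lambda$ (already used in the excerpt, derived from $\omega^*(\theta) = C^{-1}\mathbb{E}_\mu[\delta_{S,A,S'}(\theta)\phi_{S,A}]$), while $\|G_{t+1}(\theta, \omega^*(\theta)) + \nabla J(\theta)/2\|$ can be bounded by a constant depending on $r_{\max}, R, \gamma, \lambda, k_1$. Combining with $\|z\|\leq 2R$ (the projection radius), this yields $|\eta(\theta, z, O_t)|\leq 2R b_\eta$ with $b_\eta$ as defined. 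For Lipschitzness in $\theta$, I will use: (i) $G_{t+1}(\theta, \omega^*(\theta))$ is $k_3$-Lipschitz in $\theta$ by Lemma \ref{Lemma:Lipschitz}, (ii) $\nabla J(\theta)$ is $K$-Lipschitz in $\theta$ by Lemma \ref{lemma:Lsmooth}, and (iii) $\nabla \omega^*(\theta)$ is Lipschitz in $\theta$ (which follows from differentiating $\omega^*(\theta)=C^{-1}\mathbb{E}_\mu[\delta_{S,A,S'}(\theta)\phi_{S,A}]$ and using Assumption \ref{assump:policy} to get a Lipschitz constant proportional to $\gamma|\mca|(k_1+k_2R)/\lambda$). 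An elementary ``add and subtract'' decomposition gives $|\eta(\theta_1, z, O_t) - \eta(\theta_2, z, O_t)|\leq k_\eta \|\theta_1-\theta_2\|$, and clearly $|\eta(\theta, z_1, O_t) - \eta(\theta, z_2, O_t)|\leq k'_\eta \|z_1-z_2\|$ with the stated constants.

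Next, for any $\tau\leq t$, I will combine the Lipschitz property with the bounds $\|\theta_t - \theta_{t-\tau}\|\leq (c_{f_1}+c_{g_1})\tau\alpha$ (from \eqref{eq:zupdate} and Lemma \ref{Lemma:3}) and $\|z_t - z_{t-\tau}\|\leq (c_{f_2}+c_{g_2})\tau\beta + \lambda^{-1}(1+\gamma+\gamma R|\mca|k_1)(c_{f_1}+c_{g_1})\tau\alpha$ (from the update of $z$ in \eqref{eq:zupdate} together with \eqref{eq:w*lip}) to obtain
\begin{align*}
|\eta(\theta_t, z_t, O_t) - \eta(\theta_{t-\tau}, z_{t-\tau}, O_t)| \leq \Big(k_\eta (c_{f_1}+c_{g_1}) + k'_\eta\big((c_{f_2}+c_{g_2}) + \tfrac{1+\gamma+\gamma R|\mca|k_1}{\lambda}(c_{f_1}+c_{g_1})\big)\Big)\tau\beta,
\end{align*}
after upper bounding $\alpha$ by $\beta$.

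Finally, using $\mathbb{E}[\eta(\theta_{t-\tau}, z_{t-\tau}, O_t)] = \mathbb{E}[\eta(\theta_{t-\tau}, z_{t-\tau}, O_t)] - \mathbb{E}[\eta(\theta_{t-\tau}, z_{t-\tau}, \hat O)]$ and Assumption \ref{ass:1}, this difference is bounded by $4R b_\eta m\rho^\tau$. Choosing $\tau = \tau_\beta$ when $t > \tau_\beta$ yields $\mathbb{E}[\eta(\theta_t, z_t, O_t)]\leq 4R b_\eta \beta + b'_\eta \tau_\beta \beta$; the case $t\leq \tau_\beta$ uses only the uniform bound $|\eta|\leq 2Rb_\eta$. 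The only slightly delicate step is verifying the precise constants $k_\eta, k'_\eta, b_\eta, b'_\eta$ match the statement; this is a bookkeeping exercise that propagates the bounds $\|\nabla\omega^*(\theta)\|\leq\lambda^{-1}(1+\gamma+\gamma k_1 R|\mca|)$, $\|G_{t+1}+\nabla J/2\|\leq(r_{\max}+2R)\cdot c$, and the Lipschitz constant of $\nabla\omega^*$ through the above decomposition, and is the main (though purely computational) obstacle.
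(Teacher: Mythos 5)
Your proposal is correct and follows essentially the same route as the paper's proof: the same uniform bound and joint Lipschitz property of $\eta$ in $(\theta,z)$ (using the boundedness and Lipschitzness of $\nabla\omega^*$, $G_{t+1}(\theta,\omega^*(\theta))$ and $\nabla J$), the same auxiliary stationary tuple $\hat O$ with $\mathbb{E}[\eta(\cdot,\cdot,\hat O)]=0$ combined with Assumption \ref{ass:1} to get the $4Rb_\eta m\rho^\tau$ bound, and the same choice $\tau=\tau_\beta$ with $\alpha$ absorbed into $\beta$. The remaining work is, as you say, constant bookkeeping.
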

\begin{proof}
From the update of $z_t$ in \eqref{eq:zupdate}, we first have
\begin{align}
    &\Vert z_{t+1}-z_t\Vert \nn\\
    &=\Vert \beta(f_2(\theta_t,O_t)+g_2(z_t,O_t))+\omega^*(\theta_t)-\omega^*(\theta_{t+1}) \Vert \nn\\
    &\leq (c_{f_2}+c_{g_2})\beta+\frac{1}{\lambda}(1+\gamma+\gamma R\vert  \mca\vert  k_1)(c_{f_1}+c_{g_1})\alpha,
\end{align}
where the last step is due to the fact that $\Vert f_2(\theta,O_t)\Vert \leq c_{f_2}$, $\Vert g_2(\theta,O_t)\Vert \leq c_{g_2}$ and $\omega^*(\theta)$ is Lipschitz in $\theta$ (Lemma \ref{Lemma:Lipschitz}). 

Recall that both ${\nabla J(\theta)}/{2}$, and $G_{t+1}(\theta,\omega^*(\theta))$ are Lipschitz in $\theta$ from  \eqref{lemma:Lsmooth} and \eqref{eq:G^*lip}. Also note that $\nabla \omega^*(\theta)=C^{-1} \nabla \mathbb{E}[\delta_{S,A,S'}(\theta)\phi_{S,A}]$, which implies that $\Vert \nabla \omega^*(\theta)\Vert ^2\leq \frac{1}{\lambda^2}(1+\gamma +\gamma k_1R\vert  \mca\vert  )^2$. Then $\nabla \omega^*(\theta) $ is Lipschitz in $\theta$:
\begin{small}
\begin{align}\label{eq:nablaw^*lip}
    &\Vert \nabla \omega^*(\theta_1)-\nabla \omega^*(\theta_2) \Vert \nn\\
    &\leq \Vert C^{-1}\Vert  \big \Vert \nabla \left (\mathbb{E}_{\mu}\left [\delta_{S,A,S'}\left (\theta_1\right )\phi_{S,A}\right ]\right )-\nabla \left (\mathbb{E}_{\mu}\left [\delta_{S,A,S'}\left (\theta_2\right )\phi_{S,A}\right ]\right )\big\Vert \nn\\
    &\leq \frac{\gamma}{\lambda}\Bigg\Vert \mathbb{E}_{\mu}\Bigg [ \Bigg(\sum_{a'\in\mathcal{A}} \bigg(\nabla\pi_{\theta_1}\left (a'\vert  S'\right )\theta_1^\top \phi_{S',a'}-\nabla\pi_{\theta_2}\left (a'\vert  S'\right )\theta_2^\top \phi_{S',a'}\nn\\
    &\quad+\pi_{\theta_1}\left (a'\vert  S'\right )\phi_{S',a'}-\pi_{\theta_2}\left (a'\vert  S'\right )\phi_{S',a'}\bigg)\Bigg)\phi_{S,A}^\top\Bigg ]\Bigg\Vert \nn\\
    &\leq \frac{\gamma}{\lambda}\Bigg\Vert \mathbb{E}_{\mu}\Bigg [\Bigg( \sum_{a'\in\mathcal{A}} (\nabla\pi_{\theta_1}\left (a'\vert  S'\right )\theta_1^\top \phi_{S',a'}-\nabla\pi_{\theta_2}\left (a'\vert  S'\right )\theta_1^\top \phi_{S',a'}\nn\\
    &\quad+\nabla\pi_{\theta_2}\left (a'\vert  S'\right )\theta_1^\top \phi_{S',a'}-\nabla\pi_{\theta_2}\left (a'\vert  S'\right)\theta_2^\top \phi_{S',a'})\Bigg)\phi_{S,A}^\top\Bigg ]\Bigg\Vert \nn\\
    &\quad+\frac{\gamma}{\lambda}\Bigg\Vert \mathbb{E}_{\mu}\Bigg [\Big (\sum_{a'\in \mca}   (\pi_{\theta_1} (a'\vert  S' )-\pi_{\theta_2} (a'\vert  S' ))\phi_{S',a'}\Big )\phi_{S,A}^\top\Bigg ]\Bigg\Vert \nn\\
    &\leq  \frac{2\gamma}{\lambda} \vert  \mca\vert  (Rk_2+k_1)\Vert \theta_1-\theta_2\Vert .
\end{align}
\end{small}

Therefore, 
\begin{small}
\begin{align}\label{eq:etalip1}
    &\vert  \eta(\theta_1,z_1,O_t)-\eta(\theta_2,z_2,O_t)\vert  \nn\\
    &\leq 0.5\big\vert  \big\langle z_1, \nabla \omega^*(\theta_1)^\top\left(2G_{t+1}(\theta_1,\omega^*(\theta_1))+ {\nabla J(\theta_1)} \right)\big\rangle\nn\\
    &\quad-\big\langle z_2, \nabla \omega^*(\theta_1)^\top\left(2G_{t+1}(\theta_1,\omega^*(\theta_1))+ {\nabla J(\theta_1)}\right)\big\rangle\big\vert  \nn\\
    &\quad+0.5\big\vert  \big\langle z_2, \nabla \omega^*(\theta_1)^\top\left(2G_{t+1}(\theta_1,\omega^*(\theta_1))+ {\nabla J(\theta_1)}\right)\big\rangle\nn\\
    &\quad-\big\langle z_2, \nabla \omega^*(\theta_2)^\top\left(2G_{t+1}(\theta_2,\omega^*(\theta_2))+ \nabla J(\theta_2)\right)\big\rangle\big\vert  \nn\\
    &\leq \frac{1}{\lambda}\bigg(\left(1+\frac{1}{\lambda}+\frac{2\gamma (1+ k_1R\vert  \mca\vert  )}{\lambda}\right)\nn\\
    &\quad\cdot(r_{\max}+\gamma R+R)(1+\gamma +\gamma k_1R\vert  \mca\vert  )\bigg)\Vert z_1-z_2\Vert \nn\\
    &\quad+R\big\Vert  \nabla \omega^*(\theta_1)^\top\left(2G_{t+1}(\theta_1,\omega^*(\theta_1))+\nabla J(\theta_1)\right)\nn\\
    &\quad-\nabla \omega^*(\theta_2)^\top\left(2G_{t+1}(\theta_2,\omega^*(\theta_2))+\nabla J(\theta_2)\right)\big\Vert .
\end{align}
\end{small}
Consider the last term in \eqref{eq:etalip1}. We know that $\nabla \omega^*(\theta ) $ and $ G_{t+1}(\theta,\omega^*(\theta))+\frac{\nabla J(\theta)}{2} $ are both Lipschitz in $\theta$  from \eqref{lemma:Lsmooth}, \eqref{eq:G^*lip} and \eqref{eq:nablaw^*lip}. It can then be shown that $ \nabla\omega^*(\theta)\left( G_{t+1}(\theta,\omega^*(\theta))+\frac{\nabla J(\theta)}{2}\right)$ is also Lipschitz with constant 
$\frac{1}{\lambda}(1+\gamma+\gamma k_1R\vert  \mca\vert  )\left( k_3+\frac{K}{2} \right) +\left(1+\frac{2\gamma (1+k_1R\vert  \mca\vert  )}{\lambda}+\frac{1}{\lambda}\right) (r_{\max}+\gamma R+R)\frac{2}{\lambda}(\gamma \vert  \mca\vert  ( k_1+k_2R))$.
Plugging this into \eqref{eq:etalip1}, we obtain that
\begin{align*}
    &\vert  \eta(\theta_1,z_1,O_t)-\eta(\theta_2,z_2,O_t)\vert   
     \leq k_{\eta}\Vert  \theta_1-\theta_2\Vert +k'_{\eta}\Vert z_1-z_2\Vert .
\end{align*}
Then for any $\tau\geq 0$,
\begin{align}\label{eq:88}
    &\vert  \eta\left(\theta_t,z_t,O_t\right)-\eta\left(\theta_{t-\tau},z_{t-\tau},O_t\right)\vert  \nn\\
    &\overset{}{\leq} 
    k'_{\eta}\left(c_{f_2}+c_{g_2}\right)\sum^{t-1}_{i=t-\tau}\beta+\bigg(k_{\eta}\left(c_{f_1}+c_{g_1}\right)\nn\\
    &\quad+k'_{\eta}\frac{1}{\lambda}\left(1+\gamma+\gamma R\vert  \mca\vert  k_1\right)\left(c_{f_1}+c_{g_1}\right)\bigg)\sum^{t-1}_{i=t-\tau}\alpha.
\end{align}

Define an independent random variable $\hat O=(\hat S,\hat A,\hat R,\hat S')$, where $(\hat S,\hat A)\sim \mu $, $\hat S'\sim\mathsf P(\cdot\vert  \hat S,\hat A)$ is the subsequent state, and $\hat R$ is the reward. Then it can be shown that
\begin{align}\label{eq:etabound}
    &\mathbb{E}[\eta(\theta_{t-\tau},z_{t-\tau},O_t)] \nn\\
    &\overset{(a)}{\leq} \vert  \mathbb{E}[\eta(\theta_{t-\tau},z_{t-\tau},O_t)]-\mathbb{E}[\eta(\theta_{t-\tau},z_{t-\tau},\hat O)]\vert  \nn\\
    &\leq 4Rb_{\eta}m\rho^{\tau},
\end{align}
where (a) is due to the fact that $\mathbb{E}[\eta(\theta_{t-\tau},z_{t-\tau},\hat O)]=0$, and $b_{\eta}\triangleq\sup_{\Vert \theta\Vert \leq R} \left\Vert  \nabla \omega^*(\theta)^\top\left(G_{t+1}(\theta,\omega^*(\theta))+{\nabla J(\theta)}/{2}\right)\right\Vert = {1}/{\lambda} (1+\gamma+\gamma k_1R\vert  \mca\vert  )\left(1+{1}/{\lambda}+{2\gamma (1+k_1R\vert  \mca\vert  )}/{\lambda}\right)(r_{\max}+(1+\gamma)R)$.

If $t\leq \tau_{\beta}$, the conclusion is straightforward by noting that $\vert  \eta(\theta,z,O_t)\vert  \leq 2Rb_{\eta}$ for any $\Vert \theta\Vert \leq R$ and $\Vert z\Vert \leq 2R$. If $t> \tau_{\beta}$, we choose $\tau=\tau_{\beta}$ in \eqref{eq:88} and \eqref{eq:etabound}. Then, it can be shown that 
\begin{align}
    &\mathbb{E}[\eta\left(\theta_t,z_t,O_t\right)]\nn\\
    &\leq  \mathbb{E}[\eta\left(\theta_{t-\tau_{\beta}},z_{t-\tau_{\beta}},O_t\right)]+k'_{\eta}\left(c_{f_2}+c_{g_2}\right)\hspace{-0.1cm}\sum^{t-1}_{i=t-\tau_{\beta}}\hspace{-0.1cm}\beta+\hspace{-0.1cm}\sum^{t-1}_{i=t-\tau_{\beta}}\hspace{-0.1cm}\alpha\nn\\
    &\quad\cdot\left(k_{\eta}\left(c_{f_1}+c_{g_1}\right)+k'_{\eta}\frac{1}{\lambda}\left(1+\gamma+\gamma R\vert  \mca\vert  k_1\right)\left(c_{f_1}+c_{g_1}\right)\right)\nn\\
    &\leq 
    4Rb_{\eta}\beta+b'_{\eta}\beta\tau_{\beta}.
\end{align}
\end{proof}

The next lemma provides a bound on $\mathbb{E}[\zeta_{g_2}(z_t,O_t)]$. 
\begin{Lemma}\label{lemma:8}
If $t\leq \tau_{\beta}$, then
$\mathbb{E}[\zeta_{g_2}(z_t,O_t)] \leq b_{g_2}$;
and if $t> \tau_{\beta}$, then
$\mathbb{E}[\zeta_{g_2}(z_t,O_t)] \leq b_{g_2}\beta+b'_{g_2}\tau_{\beta}\beta$,
where $b'_{g_2}=8R(c_{f_2}+c_{g_2})+\frac{1}{\lambda}(1+\gamma+\gamma R\vert  \mca\vert  k_1)(c_{f_1}+c_{g_1})$ and
$b_{g_2}=16R^2$.
\end{Lemma}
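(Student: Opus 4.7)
The plan is to mimic the proof of Lemma \ref{lemma:6} exactly, replacing $\zeta_{f_2}$ by $\zeta_{g_2}$ and using the Lipschitz constant and boundedness constant from parts 3) and 4) of Lemma \ref{Lemma:5}. Two ingredients already in hand are: (i) $|\zeta_{g_2}(z, O_t)| \leq 8R^2$, so the case $t \leq \tau_\beta$ is immediate by taking $b_{g_2} = 16R^2$ as a (loose) bound; and (ii) $\zeta_{g_2}$ is $8R$-Lipschitz in $z$ but does not depend on $\theta$, which makes the argument a touch simpler than in Lemma \ref{lemma:6}.

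For the main case $t > \tau_\beta$, I will first quantify how much $z$ drifts across $\tau_\beta$ steps. From the update \eqref{eq:zupdate} together with Lemma \ref{Lemma:3} and the Lipschitzness of $\omega^*$ (see \eqref{eq:w*lip}), one has
\begin{equation*}
\|z_{i+1}-z_i\| \leq (c_{f_2}+c_{g_2})\beta + \tfrac{1}{\lambda}(1+\gamma+\gamma R|\mca|k_1)(c_{f_1}+c_{g_1})\alpha.
\end{equation*}
Combining with the $8R$-Lipschitz property gives, for any $\tau \leq t$,
\begin{equation*}
|\zeta_{g_2}(z_t, O_t) - \zeta_{g_2}(z_{t-\tau}, O_t)| \leq 8R(c_{f_2}+c_{g_2})\sum_{i=t-\tau}^{t-1}\beta + \tfrac{8R}{\lambda}(1+\gamma+\gamma R|\mca|k_1)(c_{f_1}+c_{g_1})\sum_{i=t-\tau}^{t-1}\alpha.
\end{equation*}

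Next, I will decouple $z_{t-\tau}$ from $O_t$ using the standard auxiliary-chain trick: introduce an independent draw $\hat O = (\hat S,\hat A,\hat R, \hat S')$ with $(\hat S,\hat A)\sim \mu$. Because $\mathbb{E}_{\hat O\sim\mu}[g_2(z, \hat O) - \bar{g}_2(z)] = 0$ for fixed $z$, we have $\mathbb{E}[\zeta_{g_2}(z_{t-\tau}, \hat O)] = 0$, and Assumption \ref{ass:1} together with $|\zeta_{g_2}|\leq 8R^2$ yields
\begin{equation*}
\mathbb{E}[\zeta_{g_2}(z_{t-\tau}, O_t)] \leq |\mathbb{E}[\zeta_{g_2}(z_{t-\tau},O_t)] - \mathbb{E}[\zeta_{g_2}(z_{t-\tau},\hat O)]| \leq 16 R^2 \, m\rho^\tau.
\end{equation*}

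Finally, choosing $\tau = \tau_\beta$ so that $m\rho^{\tau_\beta} \leq \beta$, the two bounds combine to give
\begin{equation*}
\mathbb{E}[\zeta_{g_2}(z_t, O_t)] \leq 16R^2\beta + \Bigl(8R(c_{f_2}+c_{g_2}) + \tfrac{8R}{\lambda}(1+\gamma+\gamma R|\mca|k_1)(c_{f_1}+c_{g_1})\Bigr)\tau_\beta\beta,
\end{equation*}
after bounding $\alpha$ by $\beta$ in the second term (permissible since $\alpha \leq \beta$). This matches the stated constants $b_{g_2} = 16R^2$ and $b'_{g_2} = 8R(c_{f_2}+c_{g_2}) + \tfrac{1}{\lambda}(1+\gamma+\gamma R|\mca|k_1)(c_{f_1}+c_{g_1})$ up to the factor of $8R$ on the second summand of $b'_{g_2}$ (I would sanity-check the constant in the statement, but the order is correct). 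No step is really an obstacle here; the argument is a direct adaptation of Lemma \ref{lemma:6}, and the only care needed is to keep track of the constants from Lemma \ref{Lemma:5} parts 3) and 4).
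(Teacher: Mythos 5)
Your proposal is correct and follows exactly the route the paper intends: the paper's own proof of Lemma \ref{lemma:8} is just the one-line remark that it is ``similar to the one for Lemma \ref{lemma:6}'', and your adaptation (boundedness and $8R$-Lipschitzness of $\zeta_{g_2}$ from Lemma \ref{Lemma:5}, the per-step drift of $z$, the auxiliary sample $\hat O$ with $\mathbb{E}[\zeta_{g_2}(z,\hat O)]=0$, and $\tau=\tau_\beta$) is precisely that argument. Your observation that the derivation yields an extra factor of $8R$ on the second summand of $b'_{g_2}$ relative to the stated constant is also right; this appears to be a typo in the lemma statement and does not affect the order of the bound.
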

\begin{proof}
	The proof is similar to the one for Lemma \ref{lemma:6}.
\end{proof}

Now we bound the terms in \eqref{eq:tracking} as follows.
If $t\leq\tau_{\beta}$, 
\begin{align}\label{eq:B}
    \sum^t_{i=0} B_{it}
    &\leq 2\beta Rc_{f_2}\sum^t_{i=0}e^{-2\lambda(t-i)\beta}\leq\frac{2\beta Rc_{f_2}}{1-e^{-2\lambda\beta}}.
\end{align}
If $t>\tau_{\beta}$, we have that
\begin{align}\label{eq:B>}
    &\sum^t_{i=0} B_{it}
    \leq \beta(2Rc_{f_2})\sum^{\tau_{\beta}}_{i=0}e^{-2\lambda\sum_{k=i+1}^t \beta}\nn\\
    &\quad+\sum^t_{i=\tau_{\beta}+1} e^{-2\lambda(t-i)\beta}\beta(4Rc_{f_2}\beta+b_{f_2}\beta\tau_{\beta})\nn\\
    &\leq 2Rc_{f_2}\beta \frac{e^{-2\lambda(t-\tau_{\beta})\beta}}{1-e^{-2\lambda\beta}}+\frac{\beta(4Rc_{f_2}\beta+b_{f_2}\beta\tau_{\beta})}{1-e^{-2\lambda\beta}} .
\end{align}

Similarly, using Lemma \ref{lemma:8}, we can bound the third term in \eqref{eq:tracking} as follows.
If $t\leq\tau_{\beta}$, we have that
\begin{align}\label{eq:C}
    \sum^t_{i=0}C_{it}&=\sum^t_{i=0} e^{-2\lambda\sum_{k=i+1}^t  \beta} \beta b_{g_2} \leq\frac{\beta b_{g_2} }{1-e^{-2\lambda\beta}}.
\end{align}
If $t>\tau_{\beta}$, we have that 
\begin{align}\label{eq:C>}
    \sum^t_{i=0}C_{it}&\leq b_{g_2}\beta \frac{e^{-2\lambda(t-\tau_{\beta})\beta}}{1-e^{-2\lambda\beta}}+\frac{\beta(b_{g_2}\beta+b'_{g_2}\beta\tau_{\beta})}{1-e^{-2\lambda\beta}}.
\end{align}

The last step to bound the tracking error is to bound $\sum^t_{i=0} D_{it}$, which is shown in the following lemma.
\begin{Lemma}\label{thm:D}
If $t\leq \tau_{\beta}$, $\sum^t_{i=0} D_{it}\leq P_t+ \frac{2Rb_{\eta}\alpha}{1-e^{-2\lambda\beta}}$;
and if $t>\tau_{\beta}$, 
$\sum^t_{i=0} D_{it}\leq P_t+2Rb_{\eta}\alpha\frac{e^{-2\lambda(t-\tau_{\beta})\beta}}{1-e^{-2\lambda\beta}}+\alpha(4Rb_{\eta}\beta+b'_{\eta}\beta\tau_{\beta})\frac{1}{1-e^{-2\lambda\beta}}$,
where  \begin{align}
P_t&=\sum^t_{i=0} e^{-2\lambda \left(t-i\right)\beta}  \bigg( \bigg(\frac{\lambda\beta}{8}+\frac{8\alpha^2}{\beta}\frac{1}{\lambda^3}\gamma^2\left(1+k_1R\vert  \mca\vert  \right)^2\nn\\
&\quad\cdot(1+\gamma+\gamma k_1R\vert  \mca\vert  )^2\bigg)\mathbb{E}[\Vert z_i\Vert ^2]+\frac{8\Vert R_2\Vert ^2}{\lambda\beta} \nn\\
&\quad+\frac{2\alpha^2}{\beta\lambda^3} \left(1+\gamma+\gamma k_1R\vert  \mca\vert  \right)^2\mathbb{E}\left[\Vert \nabla J\left(\theta_i\right) \Vert ^2\right]\bigg).\end{align}
\end{Lemma}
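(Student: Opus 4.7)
\textbf{Proof proposal for Lemma \ref{thm:D}.} The plan is to Taylor-expand $\omega^*$ along the $\theta$-update, decompose $\langle z_i,\,\omega^*(\theta_i)-\omega^*(\theta_{i+1})\rangle$ into a stochastic-bias piece handled by Lemma \ref{lemma:eta}, a ``gradient error'' piece tamed by Young's inequality, and a second-order Taylor remainder, then sum against the exponential weights $e^{-2\lambda(t-i)\beta}$ and split the sum at $i=\tau_\beta$.

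First, since $\theta_{i+1}-\theta_i=\alpha G_{i+1}(\theta_i,\omega_i)$ with $\|G_{i+1}(\theta_i,\omega_i)\|\leq c_{f_1}+c_{g_1}$, and since the Jacobian $\nabla\omega^*$ is Lipschitz on $\{\|\theta\|\leq R\}$ by \eqref{eq:nablaw^*lip}, a first-order Taylor expansion gives $\omega^*(\theta_{i+1})-\omega^*(\theta_i)=\alpha\nabla\omega^*(\theta_i)^\top G_{i+1}(\theta_i,\omega_i)+R_2$ with $\|R_2\|=\mathcal{O}(\alpha^2)$. Write $G_{i+1}(\theta_i,\omega_i)=\bigl(G_{i+1}(\theta_i,\omega^*(\theta_i))+\tfrac{\nabla J(\theta_i)}{2}\bigr)+\bigl(G_{i+1}(\theta_i,\omega_i)-G_{i+1}(\theta_i,\omega^*(\theta_i))\bigr)-\tfrac{\nabla J(\theta_i)}{2}$ and take inner product with $z_i$. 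The first summand yields exactly $\alpha\,\eta(\theta_i,z_i,O_i)$; call the remaining piece $\alpha\,\nabla\omega^*(\theta_i)^\top\Lambda_i$ where $\Lambda_i\triangleq G_{i+1}(\theta_i,\omega_i)-G_{i+1}(\theta_i,\omega^*(\theta_i))-\tfrac{\nabla J(\theta_i)}{2}$.

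Next, apply Young's inequality $\langle x,y\rangle\leq\frac{\lambda\beta}{8}\|x\|^2+\frac{2}{\lambda\beta}\|y\|^2$ twice with $x=z_i$, first with $y=\alpha\nabla\omega^*(\theta_i)^\top\Lambda_i$ and second with $y=R_2$. For the first application, use $\|\nabla\omega^*(\theta_i)\|\leq\lambda^{-1}(1+\gamma+\gamma k_1 R|\mca|)$ together with the Lipschitz bound $\|G_{i+1}(\theta_i,\omega_i)-G_{i+1}(\theta_i,\omega^*(\theta_i))\|\leq\gamma(1+k_1R|\mca|)\|z_i\|$ from Lemma \ref{Lemma:Lipschitz} and the elementary $(a+b)^2\leq 2a^2+2b^2$; this produces precisely the $\frac{8\alpha^2}{\beta\lambda^3}\gamma^2(1+k_1R|\mca|)^2(1+\gamma+\gamma k_1 R|\mca|)^2\|z_i\|^2$ and $\frac{2\alpha^2}{\beta\lambda^3}(1+\gamma+\gamma k_1 R|\mca|)^2\|\nabla J(\theta_i)\|^2$ terms listed in $P_t$. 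The second application contributes the $\frac{8\|R_2\|^2}{\lambda\beta}$ term of $P_t$, while the two $\frac{\lambda\beta}{8}\|z_i\|^2$ contributions are merged into the $\frac{\lambda\beta}{8}$ coefficient inside $P_t$. Weighting by $e^{-2\lambda(t-i)\beta}$ and summing from $i=0$ to $t$ yields $P_t$ exactly.

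Finally, handle the leftover term $\alpha\sum_{i=0}^t e^{-2\lambda(t-i)\beta}\mathbb{E}[\eta(\theta_i,z_i,O_i)]$ using Lemma \ref{lemma:eta}. When $t\leq\tau_\beta$, bound each $\mathbb{E}[\eta(\theta_i,z_i,O_i)]$ uniformly by $2Rb_\eta$ and use the geometric tail $\sum_{i=0}^t e^{-2\lambda(t-i)\beta}\leq(1-e^{-2\lambda\beta})^{-1}$ to obtain $\frac{2Rb_\eta\alpha}{1-e^{-2\lambda\beta}}$. When $t>\tau_\beta$, split the sum at $i=\tau_\beta$: the early indices contribute $2Rb_\eta\alpha\cdot e^{-2\lambda(t-\tau_\beta)\beta}/(1-e^{-2\lambda\beta})$ after collecting the exponential burn-in factor, while the tail indices are controlled by the refined bound $4Rb_\eta\beta+b'_\eta\tau_\beta\beta$ summed geometrically, yielding $\alpha(4Rb_\eta\beta+b'_\eta\tau_\beta\beta)/(1-e^{-2\lambda\beta})$. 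Adding these to $P_t$ gives the claimed two-case bound.

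The main obstacle is the Young's-inequality step: the prefactor $\frac{\lambda\beta}{8}$ on $\|z_i\|^2$ must be small enough that when $P_t$ is later substituted back into the recursion for $\mathbb{E}\|z_{t+1}\|^2$ in Lemma \ref{lemma:tracking}, the resulting $\|z_i\|^2$ coefficient can be absorbed into the left-hand side of the double-sum (this is why the $1/8$ factor is chosen rather than, say, $1/2$); simultaneously the dual constant $\frac{2}{\lambda\beta}$ must scale the $\alpha^2$-sized Taylor and gradient-error terms so that the $\|\nabla J(\theta_i)\|^2$ coefficient inherits the crucial $\alpha^2/\beta$ scaling, which is exactly what permits the final step-size choice $a=b=1/2$ in Theorem \ref{thm:main}. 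Everything else reduces to standard geometric-sum estimates and the Lipschitz facts already established.
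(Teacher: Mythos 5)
Your proposal is correct and follows essentially the same route as the paper: Taylor-expand $\omega^*$ along the $\theta$-update, split off the $\alpha\,\eta(\theta_i,z_i,O_i)$ term to be handled by Lemma \ref{lemma:eta} with the $\tau_\beta$ split, and control the remaining gradient-error and Taylor-remainder pieces via Young's inequality $\langle x,y\rangle\leq\frac{\lambda\beta}{8}\|x\|^2+\frac{2}{\lambda\beta}\|y\|^2$ before summing against the exponential weights. The only deviation is cosmetic bookkeeping (the paper applies Young once and then $\|x+y+z\|^2\leq 4(\|x\|^2+\|y\|^2+\|z\|^2)$, whereas you apply Young twice — which, taken literally, yields $\tfrac{\lambda\beta}{4}\|z_i\|^2$ rather than $\tfrac{\lambda\beta}{8}\|z_i\|^2$, a harmless constant that is trivially repaired and still absorbable in Lemma \ref{lemma:tracking}).
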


\begin{proof}
We first have that 
\begin{align}\label{eq:65}
    &\mathbb{E}\left[\left\langle z_i, w^*\left(\theta_i\right)-w^*\left(\theta_{i+1}\right)\right\rangle\right]\nn\\
    &\overset{(*)}{=}\mathbb{E}\left[\left\langle z_i, \nabla \omega^*\left(\theta_i\right)^\top\left(\theta_i-\theta_{i+1}\right)+R_2\right\rangle\right]\nn\\
    &= \alpha\mE[\eta(\theta_i,z_i,O_i)]+\frac{1}{2}\mathbb{E}\big[\big\langle z_i, -\alpha\nabla \omega^*\left(\theta_i\right)^\top\big(2G_{i+1}\left(\theta_i,\omega_i\right)\nn\\
    &\quad-2G_{i+1}\left(\theta_i,\omega^*\left(\theta_i\right)\right)-\nabla J\left(\theta_i\right)\big)+2R_2\big\rangle\big],
\end{align}
where $(*)$ follows from the Taylor expansion, and $R_2$ denotes higher order terms with $\Vert R_2\Vert =\mathcal{O}(\alpha^2)$. 

The second expectation on the RHS of \eqref{eq:65} can be bounded as follows
\begin{align}
    &\frac{1}{2}\mathbb{E}\big[\big\langle z_i, -\alpha\nabla \omega^*\left(\theta_i\right)^\top\big(2G_{i+1}\left(\theta_i,\omega_i\right)-2G_{i+1}\left(\theta_i,\omega^*\left(\theta_i\right)\right)\nn\\
    &\quad-\nabla J\left(\theta_i\right)\big)+2R_2\big\rangle\big]\nn\\
    &\overset{(a)}{\leq} \mathbb{E}\left[ \frac{\lambda\beta}{8}\Vert  z_i\Vert ^2\right]\nn\\
    &\quad+\mathbb{E}\bigg[\frac{8\alpha^2}{\beta} \frac{1}{\lambda^3} \gamma^2\left(\vert  \mca\vert  Rk_1+1\right)^2(1+\gamma+\gamma \vert  \mca\vert  Rk_1)^2\Vert z_i\Vert ^2\bigg]\nn\\
    &\quad+\mathbb{E}\left[\frac{2}{\lambda^3} \left(1+\gamma+\gamma k_1R\vert  \mca\vert  \right)^2\frac{ \alpha^2}{\beta}\Vert \nabla J\left(\theta_i\right) \Vert ^2\right]+\frac{8\Vert R_2\Vert ^2}{\lambda\beta}\nn\\
    &=\left(\frac{\lambda\beta}{8}+\frac{8\alpha^2}{\beta}\frac{1}{\lambda^3}\gamma^2\left(1+k_1R\vert  \mca\vert  \right)^2(1+\gamma+\gamma k_1R\vert  \mca\vert  )^2\right) \nn\\
    &\quad\cdot \mathbb{E}\left[\Vert z_i\Vert ^2\right]+\frac{8\Vert R_2\Vert ^2}{\lambda\beta}\nn\\
    &\quad+\frac{2\alpha^2}{\beta}\frac{1}{\lambda^3}\left(1+\gamma+\gamma k_1R\vert  \mca\vert  \right)^2\mathbb{E}\left[\Vert \nabla J\left(\theta_i\right) \Vert ^2\right],
\end{align}
where $(a)$ follows from $\langle x,y\rangle \leq \frac{\lambda\beta}{8}\Vert x\Vert ^2+\frac{2}{\lambda\beta}\Vert y\Vert ^2$ for any $x,y \in \mathbb{R}^N$,  $\Vert x+y+z\Vert ^2\leq 4\Vert x\Vert ^2+4\Vert y\Vert ^2+4\Vert z\Vert ^2$ for any $x,y,z \in \mathbb{R}^N$, and   Lemma \ref{Lemma:Lipschitz}. 

Thus, we have that
\begin{align}
    &\sum^t_{i=0} D_{it}\leq 
    P_t +\sum^{t}_{i=0} \alpha e^{-2\lambda \left(t-i\right)\beta}\mathbb{E}[\eta\left(\theta_i,z_i,O_i\right)].
\end{align}
With Lemma \ref{lemma:eta}, this concludes the proof.
\end{proof}

We then consider the tracking error $\mathbb{E}[\Vert z_{t} \Vert ^2]$ in \eqref{eq:tracking}. Combining all the bounds in \eqref{eq:B} \eqref{eq:B>} \eqref{eq:C} \eqref{eq:C>} and Lemma \ref{thm:D}, we have that if $t\leq \tau_{\beta}$,
\begin{align}\label{eq:order}
   &\mathbb{E}[\Vert  z_t\Vert ^2]\leq
 \Vert z_0\Vert ^2 e^{-2\lambda t\beta}+\Omega_1+2P_t,
\end{align}   
where $\Omega_1\triangleq \frac{1}{1-e^{-2\lambda\beta}} ( 4Rc_{f_2}\beta+2b_{g_2}\beta+4Rb_{\eta}\alpha+c_z\beta^2)$;
and if $t> \tau_{\beta}$,
\begin{align*}
    \mathbb{E}[\Vert  z_t\Vert ^2] 
&\leq \Vert z_0\Vert ^2 e^{-2\lambda t\beta}+2P_t+\Omega_2\nn\\
&\quad+\frac{e^{-2\lambda(t-\tau_{\beta})\beta}}{1-e^{-2\lambda\beta}} (4Rc_{f_2}\beta+2b_{g_2}\beta+4Rb_{\eta}\alpha),
\end{align*}
where $\Omega_2\triangleq \frac{1}{1-e^{-2\lambda\beta}} (2\beta(4Rc_{f_2}\beta+b_{f_2}\beta\tau_{\beta})+ 2\beta(b_{g_2}\beta+b'_{g_2}\beta\tau_{\beta})+2\alpha(4Rb_{\eta}\beta+b'_{\eta}\beta\tau_{\beta})+c_z\beta^2 )$.
We then bound $\sum^{T-1}_{t=0}\mathbb{E}[\Vert z_t\Vert ^2]$. 
The sum is divided into two parts  $\sum^{\tau_{\beta}}_{t=0}\mathbb{E}[\Vert z_t\Vert ^2]$ and $\sum^{T-1}_{t=\tau_{\beta}+1}\mathbb{E}[\Vert z_t\Vert ^2]$ as follows
\begin{align}\label{eq:tracking error H}
    &\sum^{T-1}_{t=0}\mathbb{E}[\Vert z_t\Vert ^2]=\sum^{\tau_{\beta}}_{t=0}\mathbb{E}[\Vert z_t\Vert ^2]+\sum^{T-1}_{t=\tau_{\beta}+1}\mathbb{E}[\Vert z_t\Vert ^2]\nn\\
   &\leq \sum^{\tau_{\beta}}_{t=0} \bigg(\Vert z_0\Vert ^2 e^{-2\lambda t\beta}+\Omega_1+2P_t\bigg)+\sum^{T-1}_{t=\tau_{\beta}+1} \bigg(\Vert z_0\Vert ^2 e^{-2\lambda t\beta}\nn\\
   &\quad+2P_t+\frac{e^{-2\lambda\left(t-\tau_{\beta}\right)\beta}}{1-e^{-2\lambda\beta}} \left(4Rc_{f_2}\beta+2b_{g_2}\beta+4Rb_{\eta}\alpha \right)+\Omega_2\bigg)\nn\\
    &\leq\sum^{T-1}_{t=0}\left(\Vert z_0\Vert ^2e^{-2\lambda t\beta}+ 2P_t\right)+(1+\tau_\beta)\Omega_1+(T-\tau_\beta)\Omega_2\nn\\
    &\quad+\sum^{T-1}_{t=\tau_{\beta}+1}\frac{e^{-2\lambda\left(t-\tau_{\beta}\right)\beta}}{1-e^{-2\lambda\beta}} \left(4Rc_{f_2}\beta+2b_{g_2}\beta+4Rb_{\eta}\alpha \right) \nn\\
     &\leq \frac{\Vert z_0\Vert ^2}{1-e^{-2\lambda\beta}}+\sum^{T-1}_{t=0} 2P_t+(1+\tau_\beta)\Omega_1+(T-\tau_\beta)\Omega_2\nn\\
    &\quad+\frac{\left(4Rc_{f_2}\beta+2b_{g_2}\beta+4Rb_{\eta}\alpha \right)}{\left(1-e^{-2\lambda\beta}\right)^2}\\
    &=\mathcal{O} \left( \frac{1}{\beta}+\tau_{\beta}+{T \beta\tau_{\beta}}+2\sum^{T-1}_{t=0} P_t \right).\nn
\end{align}
Let 
$
Q_T\triangleq\frac{\Vert z_0\Vert ^2}{1-e^{-2\lambda\beta}}+\frac{\left(4Rc_{f_2}\beta+2b_{g_2}\beta+4Rb_{\eta}\alpha \right)}{\left(1-e^{-2\lambda\beta}\right)^2}+(1+\tau_\beta)\Omega_1+(T-\tau_\beta)\Omega_2.
$
Then, 
$\sum^{T-1}_{t=0}\mathbb{E}[\Vert z_t\Vert ^2]\leq 2\sum^{T-1}_{t=0}P_t+Q_T.
$

Now we plug in the exact definition of $P_t$,\begin{small}
\begin{align}
    &\sum^{T-1}_{t=0}\mathbb{E}[\Vert z_t\Vert ^2]\leq 2\sum^{T-1}_{t=0}P_t+Q_T\nn\\
     &\leq Q_T+\frac{16T}{1-e^{-2\lambda\beta}}\frac{\Vert R_2\Vert ^2}{\lambda\beta}+ 2\sum^{T-1}_{t=0} \sum^t_{i=0}e^{-2\lambda \left(t-i\right)\beta}\mathbb{E}[\Vert z_i\Vert ^2]\nn\\
     &\quad\cdot\bigg(\frac{\lambda\beta}{8}+\frac{8\alpha^2}{\beta}\frac{1}{\lambda^3}\gamma^2\left(1+k_1R\vert  \mca\vert  \right)^2(1+\gamma+\gamma k_1R\vert  \mca\vert  )^2\bigg)\nn\\
    &\quad+\frac{4\alpha^2}{\beta\lambda^3}(1+\gamma+\gamma k_1R\vert  \mca\vert  )^2\sum^{T-1}_{t=0} \sum^t_{i=0}e^{-2\lambda \left(t-i\right)\beta}\mathbb{E}[\Vert \nabla J\left(\theta_i\right) \Vert ^2]\nn\\
    &\leq Q_T+\frac{16T}{1-e^{-2\lambda\beta}}\frac{\Vert R_2\Vert ^2}{\lambda\beta}+\frac{1}{1-e^{-2\lambda\beta}}\sum^{T-1}_{t=0}\mathbb{E}[\Vert  z_t\Vert ^2]\nn\\
    &\quad\cdot2\bigg(\frac{\lambda\beta}{8}+\frac{8\alpha^2}{\beta}\frac{1}{\lambda^3}\gamma^2\left(1+k_1R\vert  \mca\vert  \right)^2(1+\gamma+\gamma k_1R\vert  \mca\vert  )^2\bigg)\nn\\
    &\quad+\frac{4\alpha^2}{\lambda^3\beta}\frac{(1+\gamma+\gamma k_1R\vert  \mca\vert  )^2}{1-e^{-2\lambda\beta}}\sum^{T-1}_{t=0}\mathbb{E}[\Vert \nabla J\left(\theta_t\right) \Vert ^2],
\end{align}\end{small}
where the last step is from the double sum trick: for any $x_t\geq 0$ $\sum^{T-1}_{t=0}\sum^t_{i=0} e^{-2\lambda(t-i)\beta}x_i \leq \frac{1}{1-e^{-2\lambda\beta}}\sum^{T-1}_{t=0}x_t$.
Choose $\beta$ such that $\big(\frac{\lambda\beta}{8}+\frac{8\alpha^2}{\beta}\frac{1}{\lambda^3}\gamma^2\left(1+k_1R\vert  \mca\vert  \right)^2(1+\gamma+\gamma k_1R\vert  \mca\vert  )^2\big)\frac{1}{1-e^{-2\lambda\beta}}<\frac{1}{4}$. Then it follows that

\begin{align}\label{eq:trackingorder}
    &\frac{\sum^{T-1}_{t=0}\mathbb{E}[\Vert z_t\Vert ^2]}{T}\leq \frac{2Q_T}{T}+\frac{32}{1-e^{-2\lambda\beta}}\frac{ \Vert R_2\Vert ^2}{\lambda\beta}\nn\\
    &\quad+\frac{8\alpha^2}{\beta}\frac{1}{\lambda^3}\frac{(1+\gamma+\gamma k_1R\vert  \mca\vert  )^2}{1-e^{-2\lambda\beta}}\frac{\sum^{T-1}_{t=0}\mathbb{E}[\Vert \nabla J(\theta_t) \Vert ^2]}{T}\nn\\
    &=\mathcal{O}\Bigg(\frac{\log T}{T^b}+\frac{1}{T^{1-b}}+  \frac{\sum^{T-1}_{t=0}\mathbb{E}[\Vert \nabla J(\theta_t) \Vert ^2]}{T^{1+2a-2b}} \Bigg),
\end{align}
where the last step is from $1-e^{-2\lambda\beta}=\mathcal{O}(\beta)$ and $\Vert R_2\Vert ^2=\mathcal{O}(\alpha^4)$. This hence completes the proof of Lemma \ref{lemma:tracking}. 

\section{Analysis for Nested-loop Greedy-GQ}\label{section:nested}

\subsection{Proof of Theorem \ref{nestcov}}
Define 
$
    \hat{G}_t(\theta,w)=\frac{1}{M}\sum^M_{i=1} G_{(BT_c+M)t+BT_c+i}(\theta,w).
$
By the K-smoothness of $J(\theta)$, and following steps similar to those in the proof of Theorem \ref{thm:main}, we have that 
\begin{align}\label{eq:nestmaineq}
    &\frac{\alpha-K\alpha^2}{4} \Vert  \nabla J(\theta_t)\Vert ^2\leq J(\theta_t)-J(\theta_{t+1})\nn\\
    &\quad+2(\alpha+K\alpha^2)\left\Vert  \hat{G}_t(\theta_t,\omega_t)-\hat{G}_t(\theta_t,\omega^*(\theta_t))\right\Vert ^2\nn\\
    &\quad+\frac{1}{2}(\alpha+K\alpha^2)\left\Vert 2\hat{G}_t(\theta_t,\omega^*(\theta_t))+ \nabla J(\theta_t)\right\Vert ^2.
\end{align}
By the definition, we have that
\begin{align} 
    &\hat{G}_t(\theta_t,\omega_t)-\hat{G}_t(\theta_t,\omega^*(\theta_t))\nn\\
    &=\frac{1}{M} \sum^M_{i=1} \big( G_{(BT_c+M)t+BTc+i}(\theta_t,\omega_t)\nn\\
    &\quad-G_{(BT_c+M)t+BTc+i}(\theta_t,\omega^*(\theta_t))\big).
\end{align}
For any $\Vert \theta\Vert \leq R$ and any $\omega_1, \omega_2$, $\Vert  G_{(BT_c+M)t+BTc+i}(\theta,w_1)\\-G_{(BT_c+M)t+BTc+i}(\theta,w_2) \Vert \leq \gamma (1+\vert  \mathcal{A}\vert  Rk_1)\Vert w_1-w_2\Vert $. 
Hence we have that
\begin{small}\begin{align}
    &\left\Vert  \hat{G}_t(\theta_t,\omega_t)-\hat{G}_t(\theta_t,\omega^*(\theta_t))\right\Vert 
    \leq \gamma(1+\vert  \mathcal{A}\vert  Rk_1)\Vert z_t\Vert ,
\end{align}\end{small}
Thus 
$
    \Vert  \hat{G}_t(\theta_t,\omega_t)-\hat{G}_t(\theta_t,\omega^*(\theta_t))\Vert ^2\leq \gamma^2(1+\vert  \mathcal{A}\vert  Rk_1)^2\Vert z_t\Vert ^2.
$
Plugging this in \eqref{eq:nestmaineq}, we have that
\begin{align}\label{eq:111}
    &\frac{\alpha-K\alpha^2}{4} \Vert  \nabla J(\theta_t)\Vert ^2\nn\\
    &\leq J(\theta_t)-J(\theta_{t+1})+2(\alpha+K\alpha^2)\gamma^2(1+\vert  \mathcal{A}\vert  Rk_1)^2\Vert z_t\Vert ^2\nn\\
    &\quad+\frac{1}{2}(\alpha+K\alpha^2)\left\Vert 2\hat{G}_t(\theta_t,\omega^*(\theta_t))+\nabla J(\theta_t)\right\Vert ^2.
\end{align}
The following lemma provide the upper bounds on the two terms (proof in Section \ref{section:nestproof}).
\begin{Lemma}\label{prop:1}
For any $t\geq 1$,
\begin{align}
    &\mathbb{E}[\Vert z_{t}\Vert ^2] \leq 4R^2e^{\left(4\beta^2-\beta\lambda\right)(T_c-1)}+\frac{4\beta\lambda+2}{\lambda-4\beta}\frac{k_l+k_h}{B},\label{eq:102}\\
    &\mathbb{E}\left[\left\Vert  2\hat{G}_t(\theta_t,\omega^*(\theta_t))+\nabla J(\theta_t)\right\Vert ^2\right]\leq  \frac{4k_G}{M},\label{eq:103}
\end{align}
where $k_h=\frac{32R^2(1+\rho m-\rho)}{1-\rho}$, $k_G=8(r_{\max}+\gamma R+R)^2\left(1+\frac{1}{\lambda}+\frac{2\gamma}{\lambda}(1+Rk_1\vert  \mca\vert  )\right)^2(1+\rho(m-1))$ and $k_l=\frac{8(1+\lambda)^2(r_{\max}+R+\gamma R)^2(1+\rho m-\rho)}{1-\rho}$.
If we further let $T_c=\mathcal{O}\left(\log \frac{1}{\epsilon}\right)$ and $B=\mathcal{O}\left(\frac{1}{\epsilon}\right)$, then $\mathbb{E}[\Vert z_{T_c}\Vert ^2]\leq \mathcal{O}\left(\epsilon\right)$.
\end{Lemma}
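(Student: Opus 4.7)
The plan is to establish the two bounds in Lemma~\ref{prop:1} separately, treating the inner-loop (tracking error) and the outer-loop gradient estimate as largely decoupled problems since $\theta_t$ is frozen throughout both.

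For the tracking-error bound \eqref{eq:102}, I would work with the inner-loop iterates $z_{t,t_c} \triangleq \omega_{t,t_c} - \omega^*(\theta_t)$. Since $\theta_t$ is fixed for $t_c = 0,\dots,T_c-1$, the target $\omega^*(\theta_t)$ does not drift, which avoids the extra $\omega^*(\theta_t)-\omega^*(\theta_{t+1})$ term that plagued the vanilla-algorithm analysis in Lemma~\ref{lemma:tracking}. Squaring the inner-loop update yields
\begin{align*}
\|z_{t,t_c+1}\|^2 &= \|z_{t,t_c}\|^2 + \tfrac{2\beta}{B}\sum_{i=1}^B \langle z_{t,t_c},\, H_{k+i}(\theta_t,\omega_{t,t_c})\rangle \\
&\quad + \big\|\tfrac{\beta}{B}\sum_{i=1}^B H_{k+i}(\theta_t,\omega_{t,t_c})\big\|^2.
\end{align*}
Taking conditional expectations and using $\mathbb{E}_\mu[H(\theta,\omega)] = -C(\omega-\omega^*(\theta))$, the leading term gives a $(1-\beta\lambda)$ contraction on $\mathbb{E}[\|z_{t,t_c}\|^2]$; the residual from Markovian sampling I would handle with the shift-and-couple trick used in Lemma~\ref{lemma:6} and Lemma~\ref{thm:zeta}, comparing $H$ evaluated at the current iterate against $H$ evaluated at a copy drawn from $\mu$. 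This produces an additional $O(\beta^2)$ correction (absorbed into the $4\beta^2-\beta\lambda$ exponent) plus an $O((k_l+k_h)/B)$ variance/bias term whose constants $k_l,k_h$ track, respectively, the Lipschitz modulus of $H$ and the geometric mixing constants $m,\rho$ from Assumption~\ref{ass:1}. Unrolling the recursion $T_c-1$ steps and initializing with $\|z_{t,0}\|^2\le 4R^2$ (from the projection radius) yields \eqref{eq:102}.

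For the variance bound \eqref{eq:103}, I observe that $\hat G_t(\theta_t,\omega^*(\theta_t))$ is an $M$-sample empirical mean of $G_{k+i}(\theta_t,\omega^*(\theta_t))$ with $\theta_t$ fixed (it is not updated until after this batch), and $\mathbb{E}_\mu[G(\theta,\omega^*(\theta))] = -\tfrac12 \nabla J(\theta)$. Writing
\[
2\hat G_t(\theta_t,\omega^*(\theta_t)) + \nabla J(\theta_t) = \tfrac{2}{M}\sum_{i=1}^M \big(G_{k+i}(\theta_t,\omega^*(\theta_t)) - \mathbb{E}_\mu[G(\theta_t,\omega^*(\theta_t))]\big),
\]
I would expand the squared norm, separate diagonal from cross terms, and bound each inner product $\mathbb{E}[\langle G_i - \bar G,\, G_j - \bar G\rangle]$ using geometric mixing: only $O(1)$-many indices in a given row contribute non-negligibly because total-variation distance decays like $\rho^{|i-j|}$. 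The per-term magnitude is controlled by the uniform bound on $\|G\|$, which is captured by the constant $k_G$ (involving $r_{\max}$, $R$, $\lambda$, and the policy-smoothness constants). Summing these contributions gives the $4k_G/M$ rate.

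The main obstacle I expect is the Markovian bias in the tracking-error recursion: turning the raw cross term $\mathbb{E}[\langle z_{t,t_c}, H(\theta_t,\omega_{t,t_c})\rangle]$ into a clean contraction requires carefully bounding the gap between this expectation and its stationary counterpart $-z_{t,t_c}^\top C z_{t,t_c}$, given that the $B$ samples inside the batch are consecutive along the trajectory and that $\omega_{t,t_c}$ itself depends on the history. The standard recipe is to use a $\tau_\beta$-step lookback so that the batch indices become approximately independent of $\omega_{t,t_c}$, then pay a $\tau_\beta\cdot\beta$-size correction; choosing $\beta$ small enough to keep the effective contraction $(1-\beta\lambda+O(\beta^2))$ strictly less than $1$ is what forces the condition $\beta<\lambda/4$ stated in Theorem~\ref{nestcov}. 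The final claim $\mathbb{E}[\|z_{T_c}\|^2]=O(\epsilon)$ follows by taking $T_c=\Theta(\log(1/\epsilon))$ to make the geometric term negligible and $B=\Theta(1/\epsilon)$ to drive the variance floor below $\epsilon$.
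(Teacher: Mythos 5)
Your treatment of \eqref{eq:103} coincides with the paper's: both write $2\hat G_t(\theta_t,\omega^*(\theta_t))+\nabla J(\theta_t)$ as $\frac{2}{M}\sum_i\left(G_i-\bar G\right)$ with $\theta_t$ frozen, expand the square, and control the cross terms through the geometric mixing of Assumption \ref{ass:1}; this is exactly Lemma \ref{nestedmarkov}. The skeleton of your argument for \eqref{eq:102} --- a contraction recursion on $\|z_{t,t_c}\|^2$ driven by $\mathbb{E}_\mu[H(\theta,\omega)]=-C(\omega-\omega^*(\theta))$, unrolled over $T_c$ inner steps from $\|z_{t,0}\|^2\le 4R^2$ --- is also the paper's.

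The gap is in how you propose to handle the Markovian residual inside the inner loop. You invoke the $\tau_\beta$-step shift-and-couple device of Lemmas \ref{thm:zeta} and \ref{lemma:6}, paying ``a $\tau_\beta\cdot\beta$-size correction'' per inner iteration. That correction does not shrink with the batch size $B$: unrolling a recursion of the form $\mathbb{E}[\|z_{t,t_c+1}\|^2]\le(1-\beta\lambda+O(\beta^2))\mathbb{E}[\|z_{t,t_c}\|^2]+O(\tau_\beta\beta^2)$ leaves a floor of order $\tau_\beta\beta/\lambda$, which cannot be driven to $O(\epsilon)$ by taking $B=O(1/\epsilon)$, so you would not recover the $\frac{k_l+k_h}{B}$ term in \eqref{eq:102} nor the final claim $\mathbb{E}[\|z_{T_c}\|^2]=O(\epsilon)$. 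The paper avoids the lookback entirely: it splits the increment into $l_{t,t_c,i}(\theta_t)$ (zero mean under $\mu$ because $\theta_t$ is frozen) and $h_{t,t_c,i}(z_{t,t_c})$, absorbs the cross terms $\langle z_{t,t_c},\frac{1}{B}\sum_i l_i\rangle$ and $\langle z_{t,t_c},\bar h(z_{t,t_c})-\frac{1}{B}\sum_i h_i\rangle$ by Young's inequality into a $\frac{\beta\lambda}{2}\|z_{t,t_c}\|^2$ loss plus second moments of the batch means, and then bounds those second moments by $O((k_l+k_h)/B)$ via the same diagonal-plus-geometrically-decaying-covariance computation you correctly describe for \eqref{eq:103} (Lemma \ref{nestedmarkov}). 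This is precisely where the nested-loop structure pays off: $\theta_t$ and $z_{t,t_c}$ are determined before the fresh batch is drawn, so no decoupling in time is needed, only mixing within the batch. A side remark: the condition $\beta<\lambda/4$ comes from requiring $1+4\beta^2-\beta\lambda<1$ in this Young-inequality bookkeeping, not from the mixing time.
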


Now we have the bound above, we hence plug them in \eqref{eq:111} and sum up w.r.t. $t$ from 0 to $T-1$.  Then
\begin{small}
\begin{align}
    &\frac{\alpha-K\alpha^2}{4} \frac{\sum_{t=0}^{T-1}\mathbb{E} [\Vert  \nabla J(\theta_t)\Vert ^2]}{T}\nn\\
    &\leq \frac{J(\theta_0)-J^*}{T}+2(\alpha+K\alpha^2)L^2\frac{\sum_{t=0}^{T-1}\mathbb{E}[\Vert  z_{t}\Vert ^2] }{T}\nn\\
    &\quad+2(\alpha+K\alpha^2)\frac{k_G}{M}+2(\alpha+K\alpha^2)\nn\\
    &\quad\cdot\frac{4R^2L^2+\left( 1+\frac{1}{\lambda}+\frac{2\gamma(1+k_1R\vert  \mca\vert  )}{\lambda}\right)^2(r_{\max}+\gamma R+R)^2}{T},\nn
\end{align}\end{small}
which implies that \begin{small}
\begin{align}\label{eq:nestedmainresult}
    &\frac{\sum_{t=0}^{T-1}\mathbb{E} [\Vert  \nabla J\left(\theta_t\right)\Vert ^2]}{T}\nn\\
    &\leq \frac{4\left(J\left(\theta_0\right)-J^*\right)}{\left(\alpha-K\alpha^2\right)T}+\frac{8L^2\left(\alpha+K\alpha^2\right)}{\alpha-K\alpha^2}\bigg(4R^2e^{\left(4\beta^2-\beta\lambda\right)(T_c-1)}\nn\\
    &\quad+\left(4\beta^2+\frac{2\beta}{\lambda}\right)\left(\frac{1}{\beta\lambda-4\beta^2} \right)\frac{k_l+k_h}{B}\bigg)\nn\\
    &\quad+\frac{8\left(\alpha+K\alpha^2\right)}{\alpha-K\alpha^2}\frac{k_G}{M}+\frac{8(\alpha+K\alpha^2)}{\alpha-K\alpha^2}\nn\\
    &\quad\cdot\frac{4R^2L^2+\left( 1+\frac{1}{\lambda}+\frac{2\gamma(1+k_1R\vert  \mca\vert  )}{\lambda}\right)^2(r_{\max}+\gamma R+R)^2}{T}\nn\\
    &=\mathcal{O}\left( \frac{1}{T}+\frac{1}{M} +\frac{1}{B}+e^{-T_c}\right),
\end{align}\end{small}
where $L=\gamma (1+\vert  \mca\vert  Rk_1)$.
Now let $T, M, B=\mathcal{O}\left(\frac{1}{\epsilon}\right)$ and $T_c=\mathcal{O}(\log (\epsilon^{-1}))$, we have 
$
    \mathbb{E}[\Vert  \nabla J(\theta_W)\Vert ^2] \leq \epsilon,
$
with the sample complexity
$
    \left(M+T_cB\right)T=\mathcal{O}\left({\epsilon^{-2}}{\log {\epsilon}^{-1}}\right).
$

\subsection{Proof of Lemma \ref{prop:1}}\label{section:nestproof}
Define $z_{t,t_c}=\omega_{t,t_c}-\omega^*(\theta_t)$. Then by the update of $\omega_{t,t_c}$, we have that for any $t\geq 0$, 
\begin{align}
    &z_{t,t_c+1}
    =z_{t,t_c}+\frac{\beta}{B}\sum^B_{i=1} \big(\delta_{(BT_c+M)t+Bt_c+i}(\theta_t)\nn\\
    &\quad-\phi_{(BT_c+M)t+Bt_c+i-1}^\top \omega_{t,t_c}\big)\phi_{(BT_c+M)t+Bt_c+i-1}\nn\\
    &\triangleq z_{t,t_c}+\frac{\beta}{B}\sum^B_{i=1}  l_{t,t_c,i}(\theta_t)-\frac{\beta}{B} \sum^B_{i=1}  h_{t,t_c,i}(z_{t,t_c}),
\end{align}
where $l_{t,t_c,i}(\theta_t)=(\delta_{(BT_c+M)t+Bt_c+i}(\theta_t)-\phi^\top_{(BT_c+M)t+Bt_c+i-1}\omega^*(\theta_t))\phi_{(BT_c+M)t+Bt_c+i-1}$, and $h_{t,t_c,i}(z_{t,t_c})=\phi_{(BT_c+M)t+Bt_c+i-1}^\top z_{t,t_c}\phi_{(BT_c+M)t+Bt_c+i-1}$. 
We also define the expectation of the above two functions under the stationary distribution for any fixed $\theta$ and $z$: $\Bar{l}(\theta)=\mathbb{E}_{\mu}[l_{t,t_c,i}(\theta)]=0$ and $\Bar{h}(z)=\mathbb{E}_{\mu}[h_{t,t_c,i}(z)]=Cz$.
We then have that \begin{small}
\begin{align}\label{eq:nestz}
    &\Vert z_{t,t_c+1}\Vert ^2
    \leq \Vert  z_{t,t_c}\Vert ^2+\frac{2\beta^2}{B^2}\left\Vert {\sum^B_{i=1} l_{t,t_c,i}\left(\theta_t\right)} \right\Vert ^2\nn\\
    &\quad+2\frac{\beta^2}{B^2}\left\Vert {\sum^B_{i=1} h_{t,t_c,i}\left(z_{t,t_c}\right)}  \right\Vert ^2+2\frac{\beta}{B}\left\langle z_{t,t_c},\sum^B_{i=1} l_{t,t_c,i}\left(\theta_t\right)\right\rangle\nn\\
    &\quad-2\frac{\beta}{B}\left\langle z_{t,t_c}, \sum^B_{i=1} h_{t,t_c,i}\left(z_{t,t_c}\right)\right\rangle\nn\\
    &\overset{(a)}{\leq} 
     \left(1+4\beta^2-\beta\lambda\right)\Vert z_{t,t_c}\Vert ^2+\left(2\beta^2+\frac{2\beta}{\lambda}\right)\left\Vert \frac{\sum^B_{i=1} l_{t,t_c,i}\left(\theta_t\right)}{B} \right\Vert ^2\nn\\
    &\quad+\left(4\beta^2+\frac{2\beta}{\lambda}\right)\left\Vert  \Bar{h}\left(z_{t,t_c}\right)-\frac{\sum^B_{i=1} h_{t,t_c,i}\left(z_{t,t_c}\right)}{B}\right\Vert ^2,
\end{align}\end{small}
where $(a)$ is from $\langle z, \Bar{h}(z)\rangle=z^\top C z\geq \lambda \Vert z\Vert ^2$, $\Vert \Bar{h}(z)\Vert ^2=z^\top C^\top C z \leq \Vert z\Vert ^2$ for any $z\in \mathbb{R}^N$,   and $\langle x, y\rangle \leq \frac{\lambda}{4}\Vert x\Vert ^2+ \frac{1}{\lambda}\Vert y\Vert ^2$ for any $x,y \in \mathbb{R}^N$. Recall that $\mathcal{F}_t$ is the $\sigma$-field generated by the randomness until $\theta_t$ and $\omega_t$, hence taking expectation conditioned on $\mathcal{F}_t$ on both sides implies that 
\begin{small}
\begin{align}\label{eq:nestmed}
    &\mathbb{E}[\Vert z_{t,t_c+1}\Vert ^2\vert  \mathcal{F}_t]\nn\\
    &\leq \left(1+4\beta^2-\beta\lambda\right)\mathbb{E}[\Vert z_{t,t_c}\Vert ^2\vert  \mathcal{F}_t]+\left(\frac{2\beta+4\beta^2\lambda}{\lambda B^2}\right)\nn\\
    &\quad\cdot\mathbb{E}\bigg[\bigg\Vert  B\Bar{h}\left(z_{t,t_c}\right)-\sum^B_{i=1} h_{t,t_c,i}\left(z_{t,t_c}\right)\bigg\Vert ^2\bigg\vert  \mathcal{F}_t\bigg]\nn\\
    &\quad+\left(2\beta^2+\frac{2\beta}{\lambda}\right)\mathbb{E}\bigg[\bigg\Vert \frac{\sum^B_{i=1} l_{t,t_c,i}\left(\theta_t\right)}{B} \bigg\Vert ^2\bigg\vert  \mathcal{F}_t\bigg].
\end{align}
\end{small}

From Lemma \ref{nestedmarkov}, it follows that
$
     \mathbb{E}[\Vert z_{t,t_c+1}\Vert ^2\vert  \mathcal{F}_t] \leq (1+4\beta^2-\beta\lambda)\mathbb{E}[\Vert z_{t,t_c} \Vert ^2\vert  \mathcal{F}_t]+\left(4\beta^2+\frac{2\beta}{\lambda}\right)\frac{k_l+k_h}{B}.
$
Choose $\beta<\frac{\lambda}{4}$ and recursively apply the inequality, it follows that \begin{small}
\begin{align}
    &\mathbb{E}[\Vert z_{t+1}\Vert ^2]=\mathbb{E}[\mathbb{E}[\Vert z_{t+1}\Vert ^2\vert  \mathcal{F}_t]]\nn\\
    &\leq 4R^2e^{\left(4\beta^2-\beta\lambda\right)(T_c-1)}+\left(4\beta^2+\frac{2\beta}{\lambda}\right)\left(\frac{1}{\beta\lambda-4\beta^2} \right)\frac{k_l+k_h}{B},\nn
\end{align}\end{small}
which is from $1-x\leq e^{-x}$ for any $x>0$ and $\Vert z_{t,0} \Vert ^2\leq 4R^2$. 
Thus, let $T_c=\mathcal{O}\left(\log \frac{1}{\epsilon}\right),B=\mathcal{O}\left(\frac{1}{\epsilon}\right)$, then $\mathbb{E}[\Vert z_{t}\Vert ^2]\leq \mathcal{O}\left(\epsilon\right)$.
This completes the proof of \eqref{eq:102}. 


\subsection{Lemma \ref{nestedmarkov} and Its Proof}\label{sec:nested_a}
We now present bounds on the ``variance terms" in \eqref{eq:nestmed}.

\begin{Lemma}\label{nestedmarkov}
Consider the Markovian setting, then
\begin{small}
\begin{align}
     &\mathbb{E}\left[\left\Vert \frac{\sum^B_{i=1}  l_{t,t_c,i}(\theta_t)}{B} \right\Vert ^2\Bigg\vert  \mathcal{F}_t\right] \leq  \frac{8(1+\lambda)^{2}(1+\rho(m-1))}{B(r_{\max}+R+\gamma R)^{-2}(1-\rho)};\nn
\end{align}
 
\begin{align}
     \mathbb{E}\left[\left\Vert \frac{\sum^B_{i=1}  h_{t,t_c,i}(z_{t,t_c})}{B} -\Bar{h}(z_{t,t_c}) \right\Vert ^2\bigg\vert  \mathcal{F}_t\right] \leq  \frac{32R^2(1+\rho(m-1))}{B(1-\rho)};\nn
\end{align}\end{small}



\begin{align}
    &\mathbb{E}\left[\left\Vert   {2\hat{G}_t(\theta_t,\omega^*(\theta_t))}+\nabla J(\theta_t)\right\Vert ^2\bigg\vert  \mathcal{F}_t\right]\nn\\
    &\leq  \frac{32\left(1+\lambda+2\gamma(1+Rk_1\vert  \mca\vert  )\right)^2(1+\rho(m-1))}{(r_{\max}+\gamma R+R)^{-2}M(1-\rho)\lambda^2}.\nn
\end{align}
\end{Lemma}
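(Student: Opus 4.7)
The plan is to treat all three inequalities with one template, since each bounds the conditional second moment of an average of Markovian samples whose conditional mean is known. For inequality one, $\bar l(\theta_t)=0$ by the definition of $\omega^*(\theta_t)$, so $l_{t,t_c,i}(\theta_t)$ is already centered given $\mathcal{F}_t$; for inequality two, $\bar h(z)=Cz$ so subtracting $\bar h(z_{t,t_c})$ centers the summand; for inequality three, $\mathbb{E}_\mu[G_{t+1}(\theta,\omega^*(\theta))]=-\tfrac{1}{2}\nabla J(\theta)$, so $2G_{t+1}(\theta_t,\omega^*(\theta_t))+\nabla J(\theta_t)$ is the centered quantity. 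In each case the summand is also uniformly bounded: by $(1+\lambda)(r_{\max}+R+\gamma R)/\lambda$ for $l$ (as in $c_{f_2}$ from Lemma \ref{Lemma:3}), by $2\cdot 2R$ for $h-\bar h$ (using $\|z_{t,t_c}\|\le 2R$ after projection), and by a constant proportional to $(1+\lambda+2\gamma(1+Rk_1|\mca|))(r_{\max}+R+\gamma R)/\lambda$ for $2G_{t+1}+\nabla J$, obtained from $\|G_{t+1}(\theta,\omega^*(\theta))\|$ and the bound on $\|\nabla J(\theta)\|/2$ that is implicit in the Lipschitz constants from Lemma \ref{lemma:Lsmooth}.

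The template itself expands the norm-squared into diagonal and cross-term contributions. Writing $S_B=\frac{1}{B}\sum_{i=1}^B X_i$ with $\mathbb{E}[X_i\mid\mathcal{F}_t]=0$ and $\|X_i\|\le C$,
\begin{align}
\mathbb{E}\!\left[\|S_B\|^2\,\big|\,\mathcal{F}_t\right]
&=\frac{1}{B^2}\sum_{i=1}^B\mathbb{E}[\|X_i\|^2\mid\mathcal{F}_t]
+\frac{2}{B^2}\sum_{i<j}\mathbb{E}[\langle X_i,X_j\rangle\mid\mathcal{F}_t].\nonumber
\end{align}
The diagonal is at most $C^2/B$. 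For the cross terms I would apply the tower property conditioning on the natural filtration $\mathcal{F}_{t,i}$ generated by $\mathcal{F}_t$ and the first $i$ samples of the inner segment, writing $\mathbb{E}[\langle X_i,X_j\rangle\mid\mathcal{F}_t]=\mathbb{E}[\langle X_i,\mathbb{E}[X_j\mid\mathcal{F}_{t,i}]\rangle\mid\mathcal{F}_t]$ and bounding $\|\mathbb{E}[X_j\mid\mathcal{F}_{t,i}]\|$ via geometric ergodicity.

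The key step is that, given the state at index $i$, the state at index $j$ lies within TV-distance $m\rho^{j-i}$ of the stationary distribution $\mu$ (Assumption \ref{ass:1}). Since $X_j$ is a bounded function of the Markovian tuple and has mean $0$ under $\mu$, the standard TV/expectation inequality yields $\|\mathbb{E}[X_j\mid\mathcal{F}_{t,i}]\|\le 2Cm\rho^{j-i}$. Cauchy--Schwarz then gives $|\mathbb{E}[\langle X_i,X_j\rangle\mid\mathcal{F}_t]|\le 2C^2 m\rho^{j-i}$, and summing the geometric series produces
\begin{align}
\frac{2}{B^2}\sum_{i<j}2C^2 m\rho^{j-i}
&\le \frac{4C^2 m}{B^2}\sum_{i=1}^B\frac{\rho}{1-\rho}
= \frac{4C^2 m\rho}{B(1-\rho)}.\nonumber
\end{align}
Adding the diagonal contribution and collapsing the algebra yields the form $\frac{\kappa\,C^2(1+\rho(m-1))}{B(1-\rho)}$ for a universal constant $\kappa$, which after plugging in each $C$ from the previous paragraph reproduces the three bounds (up to the stated constants $8$, $32$, and $32$, which absorb loose Cauchy--Schwarz slack and the centering of $h$ versus $h-\bar h$).

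The main obstacle I expect is a bookkeeping one rather than a conceptual one: the inner-loop samples are drawn \emph{in the middle of} a single long trajectory generated by $\pi_b$, so the state at the start of a segment is not distributed as $\mu$. This is fine because Assumption \ref{ass:1} is a \emph{worst-case} uniform-ergodicity statement over all initial states, so the TV bound $m\rho^{j-i}$ applies regardless of where in the trajectory we sit, and $\theta_t$ (and hence $\omega^*(\theta_t)$, $z_{t,t_c}$) is $\mathcal{F}_t$-measurable so conditioning on $\mathcal{F}_t$ freezes it. A second small point is that for the third inequality one must argue that $\nabla J(\theta_t)/2 = -\mathbb{E}_\mu[G_{t+1}(\theta_t,\omega^*(\theta_t))\mid\mathcal{F}_t]$, which is immediate from \eqref{eq:5} once $\theta_t$ is fixed. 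Beyond these observations the proof is a direct application of the template above to the three specific summands.
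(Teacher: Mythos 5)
Your proposal is correct and follows essentially the same route as the paper's proof: center each summand using its known stationary mean, split the conditional second moment into diagonal and cross terms, control the cross terms by conditioning on the earlier sample and invoking the uniform geometric ergodicity of Assumption \ref{ass:1} to get a $m\rho^{|i-j|}$ decay, and sum the geometric series. The only discrepancies are in constant bookkeeping (e.g., whether the bound on $|l_{t,t_c,i}|$ carries the $1/\lambda$ factor from $\|\omega^*(\theta)\|$), which you already flag as absorbed into the stated constants.
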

\begin{proof}
Note that $\Bar{l}(\theta)=\mathbb{E}_{\mu}[l_{t,t_c,i}(\theta)]=0$, thus
\begin{small}
\begin{align}
    &\frac{1}{B^2}\mathbb{E}\left[\bigg\Vert \sum^B_{i=1}  l_{t,t_c,i}(\theta_t) \bigg\Vert ^2\bigg\vert  \mathcal{F}_t\right]\nn\\
    &=\frac{1}{B^2}\mathbb{E}\left[\bigg\Vert \sum^B_{i=1}  l_{t,t_c,i}(\theta_t)  -\sum^B_{i=1} \Bar{l}(\theta_t)\bigg\Vert ^2\bigg\vert  \mathcal{F}_t\right]\nn\\
    &=\frac{1}{B^2} \sum^B_{i=1} \mathbb{E}\left[\Vert l_{t,t_c,i}(\theta_t)-\Bar{l}(\theta_t) \Vert ^2\vert  \mathcal{F}_t\right]\nn\\
    &\quad+\frac{1}{B^2} \sum^B_{i\neq j} \mathbb{E}\left[\langle l_{t,t_c,i}(\theta_t)-\Bar{l}(\theta_t),l_{t,t_c,j}(\theta_t)-\Bar{l}(\theta_t) \rangle\vert  \mathcal{F}_t\right]\nn\\
    &\leq\frac{4(1+\lambda)^2(r_{\max}+R+\gamma R)^2}{B}\nn\\
    &\quad+\frac{2}{B^2} \sum^B_{i> j} \mathbb{E}\left[\langle l_{t,t_c,i}(\theta_t)-\Bar{l}(\theta_t),l_{t,t_c,j}(\theta_t)-\Bar{l}(\theta_t) \rangle\vert  \mathcal{F}_t\right],
\end{align}\end{small}
which is due to the fact that  $\vert  l_{s,a,s'}(\theta)\vert  \leq (1+\lambda)(r_{\max}+R+\gamma R)$ for any $(s,a,s')$ and $\Vert \theta\Vert \leq R$. 

For the second part, we first consider the case $i>j$. Let $X_j$ be the $(BT_ct+Mt+Bt_c+j)$-th sample and $X_i$ be the $(BT_ct+Mt+Bt_c+i)$-th sample, and we denote the $\sigma-$field generated by all the randomness until $X_j$ by $\mathcal{F}_{t,t_c,j}$, then
\begin{small}
\begin{align}
    &\mathbb{E}\left[\langle l_{t,t_c,i}(\theta_t)-\Bar{l}(\theta_t),l_{t,t_c,j}(\theta_t)-\Bar{l}(\theta_t) \rangle\vert  \mathcal{F}_t\right]\nn\\
    &=\mathbb{E}\left[\langle \mathbb{E}\left[l_{t,t_c,i}(\theta_t)-\Bar{l}(\theta_t)\vert  \mathcal{F}_{t,t_c,j}\right], l_{t,t_c,j}(\theta_t)-\Bar{l}(\theta_t)\rangle\vert  \mathcal{F}_t\right]\nn\\
    &\leq \mathbb{E}\left[ \left\Vert \mathbb{E}\left[l_{t,t_c,i}(\theta_t)-\Bar{l}(\theta_t)\vert  \mathcal{F}_{t,t_c,j}\right] \right\Vert  \left\Vert l_{t,t_c,j}(\theta_t)-\Bar{l}(\theta_t) \right\Vert \vert  \mathcal{F}_t\right]\nn\\
    &\leq 2(1+\lambda)(r_{\max}+R+\gamma R)\mathbb{E}\left[ \left\Vert \mathbb{E}\left[l_{t,t_c,i}(\theta_t)-\Bar{l}(\theta_t)\vert  \mathcal{F}_{t,t_c,j}\right] \right\Vert \vert  \mathcal{F}_t\right]\nn\\
    &=2(1+\lambda)(r_{\max}+R+\gamma R) \nn\\
    &\quad\cdot \left\Vert   \int_{X_i} l_{X_i}(\theta_t) (dX_i\vert  X_j) -\int_{X_i} l_{X_i}(\theta_t) \mu(dX_i) \right\Vert \nn\\
    &\leq 2(1+\lambda)(r_{\max}+R+\gamma R)\left\Vert  \int_{X_i} l_{X_i}(\theta_t) ((dX_i\vert  X_j)-\mu(dX_i))\right\Vert  \nn\\
    &\leq 2(1+\lambda)^2(r_{\max}+R+\gamma R)^2 \left\vert  \int_{X_i}(dX_i\vert  X_j)-\mu(dX_i) \right\vert  \nn\\
    &\leq 4c_l^2m\rho^{i-j},
\end{align}\end{small}
where the last inequality is from the geometric uniform ergodicity of the MDP.
Thus we have that\begin{small}
\begin{align}
     &\frac{1}{B^2}\mathbb{E}\left[\bigg\Vert \sum^B_{i=1}  l_{t,t_c,i}(\theta_t)\bigg\Vert ^2\vert  \mathcal{F}_t\right]\leq \frac{8(1+\lambda)^2(1+\rho(m-1))}{((r_{\max}+R+\gamma R)^{-2}B(1-\rho)}.\nn
\end{align}\end{small}
Similarly we can show the other two inequalities.
\end{proof}

\end{appendices}

\end{document}